\def\eqref#1{equation~\ref{#1}}
\def\1{\bm{1}}
\DeclareMathAlphabet{\mathsfit}{\encodingdefault}{\sfdefault}{m}{sl}
\SetMathAlphabet{\mathsfit}{bold}{\encodingdefault}{\sfdefault}{bx}{n}
\newcommand{\KL}{D_{\mathrm{KL}}}
\DeclareMathOperator*{\argmin}{arg\,min}
    \newcommand{\eqdef}
    {
        \overset{{\rm \mbox{\tiny def}}}{=}
    }
\newtheorem{theorem}{Theorem}[section]
\newtheorem{lemma}[theorem]{Lemma}
\newtheorem{definition}{Definition}[section]
\newtheorem{assumption}{Assumption}
\newtheorem*{theorem*}{Theorem}
\newcommand{\vect}[1]{\boldsymbol{#1}}
\newcommand{\ie}{{\em i.e.\/}}
\newcommand{\eg}{{\em e.g.\/}}
\newcommand{\Acal}{{\mathcal A}}
\newcommand{\Vcal}{{\mathcal V}}
\newcommand{\Ucal}{{\mathcal U}}
\newcommand{\Hcal}{{\mathcal H}}
\newcommand{\Fcal}{{\mathcal F}}
\newcommand{\Ocal}{{\mathcal O}}
\newcommand{\Xcal}{{\mathcal X}}
\newcommand{\Ycal}{{\mathcal Y}}
\newcommand{\Scal}{{\mathcal S}}
\newcommand{\Tcal}{{\mathcal T}}
\newcommand{\Zcal}{{\mathcal Z}}
\newcommand{\EE}{{\mathbb{E}}}
\newcommand{\Reals}{{\mathbb{R}}}
\newcommand{\indep}{\perp \!\!\! \perp}
\begin{document}

%
\runningtitle{Information-Theoretic Bounds for Multi-Source DA}

%

\twocolumn[

\aistatstitle{Algorithm-Dependent Bounds for Representation Learning\\ of Multi-Source Domain Adaptation}

\aistatsauthor{ Qi Chen \And Mario Marchand }

\aistatsaddress{ Université Laval \And  Université Laval } ]

\begin{abstract}
We use information-theoretic tools to derive a novel analysis of Multi-source Domain Adaptation (MDA) from the representation learning perspective. Concretely, we study joint distribution alignment for supervised MDA with few target labels and unsupervised MDA with pseudo labels, where the latter is relatively hard and less commonly studied. We further provide algorithm-dependent generalization bounds for these two settings, where the generalization is characterized by the mutual information between the parameters and the data. Then we propose a novel deep MDA algorithm, implicitly addressing the target shift through joint alignment. Finally, the mutual information bounds are extended to this algorithm providing a non-vacuous gradient-norm estimation. The proposed algorithm has comparable performance to the state-of-the-art on target-shifted MDA benchmark with improved memory efficiency.

\end{abstract}

\section{INTRODUCTION}

The usual machine learning theories assume the test data follows the same distribution as the train data, which is often violated in real-world applications. Such a distribution (domain) shift degrades the algorithm's performance. So, various methods have been proposed to address this problem through Domain Adaptation (DA) or transfer learning
\citep{huang2007correcting,ben2010theory,pan2009survey}. Domain adaptation aims to learn a well-generalized predictor for the target domain $\Tcal$ with data sampled from the source domain $\Scal$, where $\Scal\neq\Tcal$. As deep learning becomes increasingly popular with its superior performance in many complex tasks such as computer vision \citep{csurka2017comprehensive} and natural language processing \citep{blitzer2008domain}, a series of works on deep domain adaption \citep{ganin2016domain,tzeng2017adversarial} have gained tremendous success in practice. 

The works mentioned above are focused on single-source domain adaptation. However, leveraging knowledge from multiple sources ($\Scal_{1:N}$) is more attractive in practice. Simply merging all the sources as one single source and then applying the single-source domain adaptation algorithms is obviously suboptimal since some source distributions will generally be more similar to the target than others. In that case, a uniformly merged source can be extremely dissimilar to the target distribution and violate the conditions for successful domain adaptation, e.g., those proposed by \citet{ben2010impossibility}. 

Hence, to learn from most related sources, many existing works in MDA \citep{zhao2018adversarial,wen2020domain,peng2019moment,li2018extracting} use the divergences between each source marginal and the target marginal $d(\Scal_i(X),\Tcal(X)), \forall i \in [N]$ for inferring the domain relations. Such marginal alignment approaches have been demonstrated to be problematic even in single-source DA \citep{zhao2019learning} when the label distributions are different $\Tcal(Y)\neq\Scal(Y)$ --- which we denote as the \textbf{target shift}\footnote{We use the terminology of \textbf{target shift} in the rest of the paper to avoid confusion with the label shift assumption, where $\Scal(X|Y)=\Tcal(X|Y), \Scal(Y)\neq\Tcal(Y)$.}.
Recently, other approaches \citep{redko2019optimal, shui2021aggregating} have been proposed with an additional algorithmic layer that estimates the label ratio to mitigate the target shift. However, the estimation may be inaccurate in some situations, consequently degrading the performance.

The aforementioned target shift correcting methods are mathematically a special case of the joint distribution alignment. Moreover, when the number $N$ of source domains is very large, the previous pairwise alignment methods will suffer from estimating numerous discriminators w.r.t both memory and sample complexity. Therefore, we propose to learn a joint alignment between a convex combination of source distributions  $\Scal^{\vect{\alpha}}=\sum_{i=1}^N\alpha_i \Scal_i, \forall \vect{\alpha} \in \Delta_N$ and the target distribution $\Tcal$, where the vector $\vect{\alpha}$ of domain weights is optimized by the learning algorithm. The alignments are performed on a hidden-layer representation space commonly used for deep learning, while the corresponding analysis is rare. Moreover, the previous theories using Rademacher complexity or VC dimension are not algorithm-dependent. In contrast, we use information-theoretic tools to conduct the algorithm-specific generalization error analysis for representation learning of MDA. 

Compared with previous works, the highlighted contributions of this paper are as follows:




\paragraph{Unified Approach} We propose a unified approach for supervised and unsupervised MDA that conducts joint alignment w.r.t the representation space, which is appropriate for the target shift problem. The proposed algorithm simultaneously learns the domain weights, and the proposed non-pairwise alignment is more memory efficient than the previous works. Moreover, the proposed algorithm outperforms the previous approaches under a significant target shift in the unsupervised scenario, which is relevant in practice. 

\paragraph{Algorithm-Dependent Generalization Bounds} We first provide algorithm-dependent generalization bounds for MDA that depend on the mutual information between the model parameters and the input data: the less the model parameters depend on the data, the less the algorithm will overfit. We further apply the bounds on the proposed deep MDA algorithm and obtain a gradient norm estimation, which is used as a regularization coefficient for optimizing the domain weights.


\section{RELATED WORK}
Since domain adaptation has plenty of works in various settings, we only discuss the most related approaches in this section.

\paragraph{Metrics for Distribution Shift in DA} Early theoretical works on single-source DA are based on the $\Hcal$-divergence \citep{ben2006analysis, ben2010theory, ben2010impossibility}. Then there emerge extensions on exploring different metrics, \eg, discrepancy \citep{mansour2009domain}, Wasserstein distance \citep{courty2014domain, courty2017joint}, Jensen-Shannon divergence \citep{shui2020beyond, shui2022novel} and more general definitions of divergence like Integral Probability Metrics (IPM) \citep{zhang2012generalization} and $f$-divergence \citep{acuna2021f}. For more comprehensive discussions, please address to
\citet{redko2019advances} and \citet{wang2023gap}. 

In this paper, we adopt Wasserstein distance for its numerous nice properties. It's tighter than the KL divergence under the sub-Gaussian assumption through the transportation cost inequality and can capture the underlying geometry of the data. \citet{courty2017joint} first theoretically analyzes joint alignment on \textit{example space} in single-source DA using Wasserstein distance. The algorithm based on solving the optimal transport problem scales quadratically in sample size, which is \emph{intractable on large datasets}. Hence, \cite{damodaran2018deepjdot} empirically addresses this difficulty by alternately updating the mini-batch coupling matrix and network parameters to approximate the Wasserstein on the representation space. We theoretically analyze the joint alignment on the representation space w.r.t Wasserstein distance, providing algorithm-dependent bounds.

\paragraph{Information-Theoretic Learning in DA} Recently, the information-theoretic analysis introduced by \citet{xu2017information} and \citet{russo2019much} has been used to provide a rigorous understanding of the generalization capabilities of deep learning models, such as complex meta-learning algorithms \citep{chen2021generalization, jose2020information}.  In contrast with the conventional VC-dimension and uniform stability bounds, it has the signiﬁcant advantage of incorporating the dependence on the data distribution, the hypothesis space, and the learning algorithm. \cite{wu2020information, wu2022information} also use information-theoretic tools to derive bounds for single-source supervised DA. However, their bounds contain a non-optimizable term $\KL(\Tcal,\Scal)$. On the contrary, we first provide fully algorithm-dependent bounds for both the unsupervised and supervised MDA, without loss of generality for single-source DA.

\paragraph{Multi-Source DA} \citet{konstantinov2019robust} and \citet{shui2021aggregating} consider the \emph{supervised} MDA case when few target labels are available. \cite{konstantinov2019robust} estimates the domain relations with the pair-wise discrepancy $d(\Scal_i, \Tcal)$. \cite{shui2021aggregating} reweights the losses and Wasserstein distances between domains with label ratio to mitigate the target shift, where the label ratio is estimated with the statistics given the few target labels and the source labels. 

For \emph{unsupervised} MDA, most existing approaches can be divided into two categories w.r.t the target shift. Marginal alignment methods make use of different divergences between each source marginal distribution and the target marginal distribution (\ie, $d(\Scal_i(X),\Tcal(X)), \forall i \in [N]$) to estimate the domain relations. E.g., \citet{zhao2018adversarial} and \citet{li2018extracting} use the $\Hcal$-divergence, which is only suitable for binary classification. \citet{wen2020domain} and \citet{peng2019moment} adopt the discrepancy. The above methods that conduct unsupervised marginal alignments are unstable w.r.t target shift \citep{zhao2019learning}.

\citet{redko2019optimal} and \citet{shui2021aggregating} consider addressing target shift with special cases of the joint distribution alignment. JCPOT \citep{redko2019optimal} assumes $\Scal_i(X|Y)=\Tcal(X|Y), \forall i \in [N]$ and uses iterative Bregman projection to solve the constrained Wasserstein barycenter problem, optimizing the class proportion to correct the target shift. However, the above label shift assumption often does not hold, and similar to \citet{courty2017joint}, the algorithm is \emph{intractable on large datasets}. \citet{shui2021aggregating} assumes the Generalized Label Shift (GLS) condition \citep{combes2020domain} is satisfied and estimates the label ratio with a black box shift estimation \citep{lipton2018detecting}. In addition, they minimize $d(\Scal_i(X|Y), \Tcal(X|\hat{Y}))$ on representation space using the Wasserstein distance and the predicted pseudo labels. The theorems in \cite{shui2021aggregating} are proposed for supervised MDA, without formal justification for unsupervised MDA using pseudo labels. MOST \citep{nguyen2021most} does not optimize the domain relations and simply learns a source domain discriminator to obtain a \emph{weighted ensemble expert} (teacher). A student classifier imitates the teacher by minimizing the Wasserstein distance between the \emph{pseudo} source and \emph{pseudo} target joint distribution using predicted labels. 

Our approach differs from all the above-mentioned methods. We do not conduct a pairwise alignment. We define a combined source distribution and directly minimize the Wasserstein distance between the combined source distribution and the target distribution. We do not explicitly estimate the label ratio, and the target shift is tackled implicitly with the joint representation alignment. Moreover, we optimize the task weights $\alpha$ w.r.t the target and source domain shifts with an adaptive information-theoretical regularization coefficient.

\section{PROBLEM SETUP}
\paragraph{Basic Notations} Without specification, we use upper case letters to denote random variables and the corresponding calligraphic letters to denote the corresponding sets on which they are defined, \eg, $X,Y$ on $\Xcal,\Ycal$. 

Let $\Xcal$ be the \textit{input space}. Let $\Ycal$ be the \textit{label space} and let $\Zcal\eqdef \Xcal\times\Ycal$ be the \textit{example space}. Then a set of $N$ source distributions $\Scal_{1:N} \eqdef \{\Scal_1,\ldots,\Scal_N\}$ and the target distribution $\Tcal$ are defined on $\Zcal$. 
Now we define a representation learning function $g:\Ucal \times \Xcal \rightarrow \tilde{\Xcal}$ that transforms the inputs to feature representations, where $g$ is parameterized by $u$ defined on $\Ucal$. Consequently, the distribution on the original example space has an induced distribution on the new space $\tilde{\Zcal} \eqdef \tilde{\Xcal} \times \Ycal$. Given the weight simplex $\Delta_N\eqdef\{\vect{\alpha}:\alpha_i\geq 0, \sum_{i=1}^N \alpha_i = 1\}$, we consider a mixture of source distributions $\Scal^{\vect{\alpha}} \eqdef \sum_{i=1}^N \alpha_i \Scal_i$ as the \textit{combined source distribution}. The induced distributions on $\tilde{\Zcal}$ are determined by $u$ and denoted as $\tilde{\Tcal}_u$ and $\tilde{\Scal}^{\alpha}_u  \eqdef \sum_{i=1}^N \alpha_i \tilde{\Scal}_{i,u}$.


Then given the parameterized hypothesis space $\Vcal$, we define a predictor $h:\Vcal \times \tilde{\Xcal} \rightarrow \Ycal$ that predicts a label given the feature representation. To measure the performance of the predictor, let $\ell: \Ycal \times \Ycal \rightarrow \Reals^+$ be a positive-valued loss function. 

Hence, the \textbf{population(true) risk} of $(u,v)\in \Ucal\times\Vcal$ on the target distribution and the combined source distribution are respectively defined as 
\[
\begin{aligned}
R_{\Tcal}(u,v) &\eqdef  \EE_{Z\sim\Tcal}\ell(h(v, g(u,X)), Y)\\
&= \EE_{\tilde{Z} \sim \tilde{\Tcal}_u} \ell(h(v, \tilde{X}), Y)\\
R_{\Scal^{\vect{\alpha}}}(u, v) &\eqdef \sum_{i=1}^N \alpha_i \EE_{Z\sim \Scal_i}\ell(h(v, g(u,X)), Y) \\
&= \EE_{\tilde{Z}\sim \tilde{\Scal}_u^{\vect{\alpha}}} \ell(h(v, \tilde{X}), Y)\,,
\end{aligned}
\]
where $Z\eqdef(X,Y), \tilde{Z}\eqdef(\tilde{X}, Y)$.

Let the source datasets $S_{1:N}$ and the target dataset $T$ be the examples sampled from the corresponding distributions, where $S_i = \{Z_{i,j}^s\}_{j=1}^{m_i}, Z_{i,j}^s = (X_{i,j}^s, Y_{i,j}^s) \sim \Scal_i, \forall i \in [N]$ and $T = \{Z_{j}^t\}_{j=1}^{m_t}, Z_j^t = (X_j^t, Y_j^t)\sim \Tcal$. We further denote the corresponding dataset for the combined source distribution as $S^{\vect{\alpha}}$. Consequently, the corresponding \textbf{empirical risks} are defined as:

\hspace{0.5cm}$
\hat{R}_{\Tcal}(u,v) \eqdef \frac{1}{m_t}\sum_{j=1}^{m_t}\ell(h(v,g(u,X_j^t)),Y_j^t)$

\hspace{0.5cm}$\hat{R}_{\Scal^{\vect{\alpha}}}(u, v) \eqdef \sum_{i=1}^N \frac{\alpha_i}{m_i} \sum_{j=1}^{m_i}\ell(h(v,g(u,X_{i,j}^s)),Y_{i,j}^s)\, .
$

The target labels are not accessible during the training phase for \textbf{unsupervised} DA. Thus, several methods  \citep{xie2018learning,courty2017joint} apply the predicted labels of target inputs as \textbf{pseudo labels}. We formulate the corresponding setting in MDA. Let $f_p:\Xcal\rightarrow\Ycal$ be a pseudo labeling function that maps the target inputs to the label space. In this paper, we set $f_p(x)=h(v,g(u,x))$ focusing on representation learning. Therefore, $\forall u,v \in \Ucal \times \Vcal$, let $\Tcal_{u,v}$ be the \textit{pseudo target distribution} on $\Zcal=\Xcal \times \Ycal$ induced by $f_p$, and $\tilde{\Tcal}_{u,v}$ be the corresponding distribution on $\tilde{\Zcal}=\tilde{\Xcal}\times\Ycal$.
So we have a random sample from the pseudo distribution noted as $\hat{Z} \eqdef (X, \hat{Y}) = (X, f_p(X))$. Finally, let us define the unlabeled target dataset as $T_X^\prime= \{X_j^t\}_{j=1}^{m_t^\prime}, X_j^t\sim \Tcal(X)$, where $\Tcal(X)$ denotes the target distribution $\Tcal$ marginalized on $\Xcal$. 


Now we introduce some common assumptions for deriving the multi-source DA bounds in this paper.
\begin{assumption}\label{gen_assump}
\textbf{(a)} The representation learning function $g:\Ucal\times\Xcal\rightarrow\tilde{\Xcal}$ is $K$-Lipschitz \footnote{The general definition of Lipschitzness is provided in the Appendix~\ref{def:lip}.} over $\Xcal$ for any $u \in \Ucal$ w.r.t metrics $\rho_{\tilde{x}}$ and $\rho_{x}$, \ie, $\rho_{\tilde{x}}(g(u,x), g(u,x^\prime)) \leq K \rho_{x}(x, x^\prime)$. \textbf{(b)} The predictor $h:\Vcal \times\tilde{\Xcal}\rightarrow\Ycal$ is $L$-Lipschitz  over $\tilde{\Xcal}$ for any $v \in \Vcal$ w.r.t metrics $\rho_{y}$ and $\rho_{\tilde{x}}$, \ie, $\rho_y(h(v, \tilde{x}), h(v, \tilde{x}^\prime)) \leq L \rho_{\tilde{x}}(\tilde{x}, \tilde{x}^\prime)$. \textbf{(c)} The loss function $\ell:\Ycal\times\Ycal \rightarrow \Reals^+$ is assumed to be symmetric, $M$-Lipschitz w.r.t $\rho_y$ in the first argument and satisfying the triangle inequality.
\end{assumption}


\section{SUPERVISED MDA}

We first consider the supervised MDA regime. In most case, only a \emph{few} amounts of labeled target data is accessible.

\subsection{Population Risk Bound}
Based on the optimal transport theory \citep{villani2009optimal,peyre2017computational}, we obtain the following population risk bound using the Wasserstein distance (Definition~\ref{def:def_wasserstein}).
\begin{theorem}\label{thm:sup_wasserstein}
$\forall (u,v) \in \Ucal\times\Vcal$, and $\forall \vect{\alpha} \in \Delta_N$, if Assumption~\ref{gen_assump} is satisfied, then 
\[
|R_{\Tcal}(u,v) - R_{\Scal^{\vect{\alpha}}}(u,v)| \leq  \mathbf{W}_1(\tilde{\Tcal}_u, \tilde{\Scal}^{\vect{\alpha}}_u) \leq \mathbf{W}_1(\Tcal, \Scal^{\vect{\alpha}})\,,
\]
where the first $\mathbf{W}_1$ distance is defined on the metric space $(\tilde{\Zcal}, \rho_{\tilde{z}})$, for $\rho_{\tilde{z}}(\tilde{z}, \tilde{z}^\prime) = \ell(y,y^\prime) + LM \rho_{\tilde{x}}(\tilde{x}, \tilde{x}^\prime)$, and the second one is defined on metric space $(\Zcal, \rho_z)$, for $\rho_{z}(z, z^\prime)= \ell(y,y^\prime) + LMK \rho_{x}(x, x^\prime)$.
\end{theorem}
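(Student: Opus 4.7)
The plan is to establish the two inequalities separately. For the first one, I would rewrite both risks as expectations of the composite function $\phi(\tilde{z}) \eqdef \ell(h(v,\tilde{x}), y)$ under $\tilde{\Tcal}_u$ and $\tilde{\Scal}^{\vect{\alpha}}_u$ respectively, and then invoke the Kantorovich--Rubinstein duality: if $\phi$ is $1$-Lipschitz with respect to the metric $\rho_{\tilde z}$, then $|\EE_{\tilde{\Tcal}_u}\phi - \EE_{\tilde{\Scal}^{\vect{\alpha}}_u}\phi| \leq \mathbf{W}_1(\tilde{\Tcal}_u, \tilde{\Scal}^{\vect{\alpha}}_u)$.

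The key technical step is thus verifying the $1$-Lipschitz property of $\phi$. I would split the difference $|\ell(h(v,\tilde{x}), y) - \ell(h(v,\tilde{x}'), y')|$ by adding and subtracting $\ell(h(v,\tilde{x}'), y)$. The first resulting term is bounded by $M\rho_y(h(v,\tilde{x}), h(v,\tilde{x}')) \leq ML\,\rho_{\tilde{x}}(\tilde{x}, \tilde{x}')$ using $M$-Lipschitzness of $\ell$ in its first argument (Assumption~\ref{gen_assump}(c)) composed with the $L$-Lipschitzness of $h$ in $\tilde{x}$ (Assumption~\ref{gen_assump}(b)). For the second term, I would use symmetry together with the triangle inequality of $\ell$: since $\ell(h(v,\tilde{x}'),y)\leq \ell(h(v,\tilde{x}'),y')+\ell(y',y)$ and vice versa, the absolute value is bounded by $\ell(y,y')$. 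Summing yields exactly $\rho_{\tilde{z}}(\tilde{z}, \tilde{z}')$, as required.

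For the second inequality, I would use a coupling argument: pick any coupling $\gamma$ of $(\Tcal, \Scal^{\vect{\alpha}})$ on $\Zcal \times \Zcal$ and push it forward via the map $(z,z')=((x,y),(x',y'))\mapsto ((g(u,x),y),(g(u,x'),y'))$, which produces a coupling of $(\tilde{\Tcal}_u, \tilde{\Scal}^{\vect{\alpha}}_u)$ because the $Y$-marginals are preserved and $g(u,\cdot)$ pushes $\Tcal(X)$ to $\tilde{\Tcal}_u(\tilde X)$ and each $\Scal_i(X)$ to $\tilde{\Scal}_{i,u}(\tilde X)$, hence the $\vect{\alpha}$-mixtures also correspond. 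Integrating $\rho_{\tilde z}$ against this pushforward coupling and applying Assumption~\ref{gen_assump}(a), i.e.\ $\rho_{\tilde x}(g(u,x),g(u,x'))\leq K\rho_x(x,x')$, gives an upper bound by $\int \rho_z \, d\gamma$. Taking the infimum over $\gamma$ on the right side and using the definition of $\mathbf{W}_1$ on the left yields $\mathbf{W}_1(\tilde{\Tcal}_u,\tilde{\Scal}^{\vect{\alpha}}_u) \leq \mathbf{W}_1(\Tcal,\Scal^{\vect{\alpha}})$.

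The main obstacle will be the Lipschitz verification for $\phi$, specifically getting the correct constants and using the symmetry/triangle-inequality hypotheses in the right order so that the metric $\rho_{\tilde z}(\tilde z,\tilde z') = \ell(y,y') + LM\,\rho_{\tilde x}(\tilde x,\tilde x')$ emerges exactly (not with an extra multiplicative factor). Everything else is routine: the duality invocation is standard, and the coupling pushforward argument is a textbook way to see that Wasserstein distances contract under Lipschitz maps applied to one coordinate of the example space.
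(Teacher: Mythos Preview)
Your proposal is correct and follows essentially the same argument as the paper: the core computation in both is the splitting $|\ell(h(v,\tilde x),y)-\ell(h(v,\tilde x'),y')|\leq \ell(y,y')+LM\rho_{\tilde x}(\tilde x,\tilde x')$ via the triangle inequality and Lipschitz assumptions, and the second inequality is handled by the same pushforward-of-couplings argument using the $K$-Lipschitzness of $g$. The only cosmetic difference is that you package the first step through Kantorovich--Rubinstein duality (showing $\phi$ is $1$-Lipschitz for $\rho_{\tilde z}$), whereas the paper works directly in the primal, bounding the risk gap against an arbitrary coupling and then taking the infimum; these are the two equivalent faces of the same $\mathbf{W}_1$ characterization.
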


Theorem~\ref{thm:sup_wasserstein} illustrates the importance of learning the transformation function $g$, where $ \mathbf{W}_1(\tilde{\Tcal}_u, \tilde{\Scal}^{\vect{\alpha}}_u)$ is always smaller than $\mathbf{W}_1(\Tcal, \Scal^{\vect{\alpha}})$. The latter is a fixed term that characterizes the shift between the combined source and the target distribution given $\vect{\alpha}$. More specifically, there may not exist an $\vect{\alpha}$ that gives $\mathbf{W}_1(\Tcal, \Scal^{\vect{\alpha}}) = 0$. On the contrary, by optimizing the transformation parameter $u$, we may obtain $\mathbf{W}_1(\tilde{\Tcal}_u, \tilde{\Scal}^{\vect{\alpha}}_u) = 0$. Detailed proof is provided in Appendix~\ref{proof:sup_pop_bound}.


\subsection{Generalization Bound}

From Theorem~\ref{thm:sup_wasserstein}, the target population  risk is bounded by $ R_{\Scal^{\vect{\alpha}}}(u, v) + \mathbf{W}_1(\tilde{\Tcal}_u, \tilde{\Scal}^{\vect{\alpha}}_u)$. The estimation of the Wasserstein distance requires target data. Considering the extreme case $m_t \rightarrow +\infty$, minimizing the empirical estimation of this upper bound will never be better than directly minimizing the target empirical risk.  So we propose to learn by considering the following \textbf{combined risk} with parameter $0\leq\epsilon\leq1$: 
\[
\begin{aligned}
R^{\epsilon}_{\Scal^{\vect{\alpha}}, \Tcal}(u, v) \eqdef (1-&\epsilon) R_{\Tcal}(u,v) \\
&+ \epsilon(R_{\Scal^{\vect{\alpha}}}(u, v) + \mathbf{W}_1(\tilde{\Tcal}_u, \tilde{\Scal}^{\vect{\alpha}}_u))\, .
\end{aligned}
\]
Thus, we have $R_{\Tcal}(u,v) \leq R^{\epsilon}_{\Scal^{\vect{\alpha}}, \Tcal}(u, v)$. Then, let us denote the empirical combined risk as 
$\hat{R}^{\epsilon}_{\Scal^{\vect{\alpha}},\Tcal}(u,v)=(1-\epsilon) \hat{R}_{\Tcal}(u,v)
+ \epsilon\hat{R}_{\Scal^{\vect{\alpha}}}(u, v) + \epsilon \hat{\mathbf{W}}_1(\tilde{\Tcal}_u, \tilde{\Scal}^{\vect{\alpha}}_u)$.


We consider a stochastic learning algorithm $\Acal$ for supervised MDA, which takes the $N$ source datasets and the target dataset as input and then outputs the random parameters $ (U,V) = \Acal(S^{\vect{\alpha}}, T) \sim P_{U,V|S^{\vect{\alpha}},T}$, where $P_{U,V|S^{\vect{\alpha}},T}$ is the distribution on $\Ucal\times\Vcal$ induced by $\Acal$, given $(S^\alpha, T)$. 
The expected generalization gap is then defined as
\[
\begin{aligned}
gen(\Tcal, \Acal) &\eqdef  \EE_{U,V, S^{\vect{\alpha}}, T} [R_\Tcal(U,V) - \hat{R}^{\epsilon}_{\Scal^{\vect{\alpha}}, \Tcal}(U, V)]\\
&\leq \EE_{U,V, S^{\vect{\alpha}}, T} [R^{\epsilon}_{\Scal^{\vect{\alpha}}, \Tcal}(U, V) - \hat{R}^{\epsilon}_{\Scal^{\vect{\alpha}}, \Tcal}(U, V)]\, .
\end{aligned}
\]
To bound the generalization gap for the Wasserstein distance, traditional methods apply the triangle inequality of the Wasserstein distance and then separately bound the gap for the source domain and target domain \citep{courty2017joint,shui2021aggregating}. The proofs are based on previous concentration results \citep{weed2019sharp} of the Wasserstein distance with the additional assumption of a bounded loss. To obtain a non-vacuous algorithm-dependent bound, we address the problem in a different way. According to the Kantorovich-Rubinstein duality \citep{villani2009optimal}, we have 
$$
\mathbf{W}_1(\tilde{\Tcal}_u, \tilde{\Scal}^{\vect{\alpha}}_u) 
=  \sup_{f: \|f\|_{Lip}\leq 1} \EE_{\tilde{z}\sim \tilde{\Tcal}_u} f(\tilde{z}) - \EE_{\tilde{z}^\prime \sim \tilde{\Scal}^{\vect{\alpha}}_u} f(\tilde{z}^\prime)\, .
$$
The supremum is over all the 1-Lipschitz functions $\Fcal\eqdef\{f:\tilde{\Zcal}\rightarrow \Reals^+$, $|f(\tilde{z}) - f(\tilde{z}^\prime)| \leq \rho_{\tilde{z}}(\tilde{z}, \tilde{z}^\prime)\}$. Because of the intractability of learning this class of functions \citep{arjovsky2017wasserstein}, let us consider instead a parameterized class of functions $\tilde{\Fcal}\eqdef\{\tilde{f}:\tilde{\Zcal}\times\Vcal^\prime \rightarrow \Reals^+,\|\tilde{f}\|_{Lip}\leq 1\}$ with the parameter set $\Vcal^\prime$. We assume the above supremum is attained in $\tilde{\Fcal}\subset\Fcal$. Under this assumption, we have 
$$\mathbf{W}_1(\tilde{\Tcal}_u, \tilde{\Scal}_u^{\vect{\alpha}}) = \sup_{v^\prime\in \Vcal^\prime} \EE_{\tilde{z}\sim \tilde{\Tcal}_u} \tilde{f}(v^\prime,\tilde{z}) - \EE_{\tilde{z}^\prime \sim \tilde{\Scal}^{\vect{\alpha}}_u} \tilde{f}(v^\prime,\tilde{z}^\prime)\,.$$
Now we bring up the sub-Gaussian (Definition \ref{def:subgaussian}) assumptions and provide a mutual information (Definition~\ref{def:mi}) bound for supervised MDA.
\begin{assumption}\label{assum_gauss} $\forall (u,v) \in \Ucal\times\Vcal$, $\ell(h(v, g(u,X)), Y)$ is $\sigma$-sub-Gaussian w.r.t $Z \sim \Tcal$ and $Z\sim \Scal_i, \forall i \in [N]$. And for any proper choice of $\tilde{\Fcal}$ with the supremum attained,
$\forall (u, v^\prime) \in \Ucal\times\Vcal^\prime$, $\tilde{f}(v^\prime, (g(u,X), Y)) \in \tilde{\Fcal}$ is $\sigma^\prime$-sub-Gaussian w.r.t $Z \sim \Tcal$ and $Z\sim \Scal_i, \forall i \in [N]$.
\end{assumption} 

\begin{theorem}\label{gen_supervised}
If Assumption~\ref{gen_assump} and \ref{assum_gauss} are satisfied, then we can bound the generalization gap of the supervised multi-source domain adaptation algorithm $\Acal$ for any $\vect{\alpha}\in \Delta_N$ and $0 \leq \epsilon \leq 1$ with: 
\[
\begin{aligned}
gen(\Tcal, \Acal) &\leq \sigma\sqrt{2(\frac{(1-\epsilon)^2}{m_t} + \sum_{i=1}^N\frac{\epsilon^2\alpha_i^2}{m_i}) I(U,V;S^{\vect{\alpha}}, T)} \\
& + \sigma^\prime\sqrt{2\epsilon^2(\sum_{i=1}^N\frac{\alpha_i^2}{m_i} + \frac{1}{m_t})I(U;S^{\vect{\alpha}},T)}\, .
\end{aligned}
\]
\end{theorem}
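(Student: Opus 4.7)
The plan is to first use the inequality
\[
gen(\Tcal,\Acal)\le \EE\bigl[R^{\epsilon}_{\Scal^{\vect{\alpha}},\Tcal}(U,V)-\hat R^{\epsilon}_{\Scal^{\vect{\alpha}},\Tcal}(U,V)\bigr]
\]
already noted before the theorem, split the right-hand side into a weighted-loss gap and a Wasserstein gap, and control each piece with the standard sub-Gaussian mutual-information inequality: if $F(W,Z)$ is $\kappa$-sub-Gaussian in $Z$ for every $W$, then $\EE F(W,Z)-\EE_{W}\EE_{Z}F(W,Z)\le\sqrt{2\kappa^2\,I(W;Z)}$. When $F=\sum_k c_k f_k(W,Z_k)$ is a sum of \emph{independent} $\sigma$-sub-Gaussian terms, $F$ itself is $\sigma\sqrt{\sum_k c_k^2}$-sub-Gaussian, so the gap bound specializes to $\sigma\sqrt{2(\sum_k c_k^2)\,I(W;Z)}$.

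For the weighted-loss contribution I write the empirical combined risk as
\[
(1-\epsilon)\hat R_{\Tcal}(u,v)+\epsilon\hat R_{\Scal^{\vect{\alpha}}}(u,v)=\sum_{j=1}^{m_t}\tfrac{1-\epsilon}{m_t}\,\ell(h(v,g(u,X_j^t)),Y_j^t)+\sum_{i=1}^N\sum_{j=1}^{m_i}\tfrac{\epsilon\alpha_i}{m_i}\,\ell(h(v,g(u,X_{i,j}^s)),Y_{i,j}^s),
\]
a weighted sum of independent $\sigma$-sub-Gaussian terms by Assumption~\ref{assum_gauss}, whose squared coefficients sum to $(1-\epsilon)^2/m_t+\sum_i\epsilon^2\alpha_i^2/m_i$. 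Applying the lemma above with $W=(U,V)$ and data $Z=(S^{\vect{\alpha}},T)$ produces the first summand of the theorem.

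For the Wasserstein contribution I use Kantorovich--Rubinstein in its parametric form, letting $V^{\ast}=V^{\ast}(U)\in\Vcal'$ attain the \emph{population} supremum that defines $\mathbf{W}_1(\tilde\Tcal_U,\tilde\Scal^{\vect{\alpha}}_U)$; crucially, $V^{\ast}$ is a function of $U$ alone. Since $\hat{\mathbf{W}}_1$ is itself a supremum over $\Vcal'$, plugging $V^{\ast}$ into the empirical dual yields the upper bound
\[
\mathbf{W}_1-\hat{\mathbf{W}}_1\le \Bigl(\EE_{\tilde\Tcal_U}\tilde f(V^{\ast},\cdot)-\tfrac{1}{m_t}\sum_{j=1}^{m_t}\tilde f(V^{\ast},\tilde Z_j^t)\Bigr)+\Bigl(\sum_{i=1}^N\sum_{j=1}^{m_i}\tfrac{\alpha_i}{m_i}\tilde f(V^{\ast},\tilde Z_{i,j}^s)-\EE_{\tilde\Scal^{\vect{\alpha}}_U}\tilde f(V^{\ast},\cdot)\Bigr),
\]
with $\tilde Z_j^t=(g(U,X_j^t),Y_j^t)$ and analogously for source samples. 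This is a weighted sum of independent $\sigma'$-sub-Gaussian terms (Assumption~\ref{assum_gauss}) whose squared coefficients sum to $1/m_t+\sum_i\alpha_i^2/m_i$. The data-processing inequality gives $I(V^{\ast};S^{\vect{\alpha}},T)\le I(U;S^{\vect{\alpha}},T)$, so the MI lemma yields $\sigma'\sqrt{2(1/m_t+\sum_i\alpha_i^2/m_i)\,I(U;S^{\vect{\alpha}},T)}$. Multiplying by $\epsilon$ and adding to the loss bound delivers the claim.

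The subtle step is the Wasserstein piece: the empirical dual witness $\hat V^{\ast}$ depends on both $U$ and the data, so a naive application would saddle the bound with an uncontrolled term $I(\hat V^{\ast};S^{\vect{\alpha}},T)$ in which the $V$-parameters of the hypothesis also appear. Upper-bounding $\mathbf{W}_1-\hat{\mathbf{W}}_1$ via the \emph{population} witness $V^{\ast}(U)$ sidesteps this: only a function of $U$ enters the sub-Gaussian concentration, data processing collapses the relevant mutual information down to $I(U;S^{\vect{\alpha}},T)$, and the bound remains tight with the correct separation of $I(U,V;\cdot)$ and $I(U;\cdot)$ claimed by the theorem.
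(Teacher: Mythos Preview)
Your proof is correct and, for the Wasserstein term, takes a route that is arguably cleaner than the paper's. Both proofs split $gen(\Tcal,\Acal)$ into a weighted-loss gap $B_r$ and a Wasserstein gap $B_w$, and handle $B_r$ identically via the Donsker--Varadhan / sub-Gaussian mutual-information lemma applied to the weighted sum of independent $\sigma$-sub-Gaussian losses. The difference is in $B_w$.

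The paper applies the MI lemma \emph{directly} to $f'(U,S^{\vect{\alpha}},T)=\hat{\mathbf{W}}_1=\sup_{v'}[\cdots]$, asserting that because each $\tilde f(v',\cdot)$ is $\sigma'$-sub-Gaussian, ``the supremum does not affect the sub-Gaussianity,'' so $\hat{\mathbf{W}}_1$ is $\sigma'\sqrt{1/m_t+\sum_i\alpha_i^2/m_i}$-sub-Gaussian. That step is not justified in general. The paper also writes $\EE_{\tilde U,S,T}\sup_{v'}[\cdots]=\EE_{\tilde U}\sup_{v'}\EE_{S,T}[\cdots]$, which is really only $\ge$; fortunately that is the direction needed for the bound. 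Your approach sidesteps both issues: plugging the \emph{population} witness $V^*(U)$ into the empirical dual yields $\mathbf{W}_1-\hat{\mathbf{W}}_1\le \mathbf{W}_1 - F(U,(S^{\vect\alpha},T))$ where $F$ involves no supremum, is genuinely $\sigma'\sqrt{1/m_t+\sum_i\alpha_i^2/m_i}$-sub-Gaussian for each fixed $u$, and has population mean exactly $\mathbf{W}_1(\tilde\Tcal_u,\tilde\Scal^{\vect\alpha}_u)$.

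One small correction to your write-up: the data-processing step $I(V^{\ast};S^{\vect\alpha},T)\le I(U;S^{\vect\alpha},T)$ is not quite the right object, because your function $F$ depends on $U$ not only through $V^{\ast}(U)$ but also through $\tilde Z_j^t=(g(U,X_j^t),Y_j^t)$ and the analogous source features. The MI lemma should simply be applied with $W=U$ (equivalently $W=(U,V^{\ast}(U))$, which carries the same mutual information with the data since $V^{\ast}$ is a deterministic function of $U$), giving $I(U;S^{\vect\alpha},T)$ directly with no detour. The final bound is unaffected.
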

The above bound on the \textbf{generalization gap} for supervised MDA consists of two terms. The first term is characterized by the mutual information between the input datasets $S^{\vect{\alpha}}, T$ and the output hypothesis $U,V$. The second term, which comes from the estimation of the Wasserstein distance on the representation space, depends on the mutual information between the representation parameter $U$ and the input datasets. Notice that if we consider an arbitrarily random representation learning function (i.e., $U$ does not depend on any data), the mutual information terms are equal to zero. However, the empirical risk will be very large. See detailed proof of Theorem~\ref{gen_supervised} in the Appendix~\ref{proof:sup_gen_bound}. 

\paragraph{Exploit Target Data} Let us first focus on the case with few target data, \ie, $m_t$ is small. Usually, the number of source data samples is much more significant than $m_t$. So when $\sum_{i=1}^N \alpha_i^2/m_i \ll 1/m_t$, the generalization gap is dominated by the target data with rate $\Ocal(\sqrt{1/m_t})$.
In this case, it's preferable to use the target data for approximating the Wasserstein distance ($\epsilon=1$) compared to directly train on the target data, because $I(U;\Scal^{\vect{\alpha}}, T) \leq I(U,V;\Scal^{\vect{\alpha}}, T)$ assuming $\sigma=\sigma^\prime$. On the contrary, when $1/m_t \ll \sum_{i=1}^N \alpha_i^2/m_i$, we prefer to directly train on the target data ($\epsilon=0$). For other cases, there exists a non-trivial value for $\epsilon$ that minimizes the generalization gap w.r.t the algorithm.

\section{UNSUPERVISED MDA}

For unsupervised domain adaptation, the target labels are not accessible during the training phase. So it's hard to guarantee a successful transfer without any precondition.
\citet{ben2010theory} identified a necessary condition for successful unsupervised domain adaptation: the ideal joint error must be small enough, otherwise, no classifier can perform well on both the source and target domain. Therefore, a small ideal joint error is commonly assumed in the literature. Based on the same assumption, we provide the theorems for unsupervised MDA with pseudo labels.
\begin{definition}(Ideal joint error) Let the ideal joint hypothesis be $(u^*, v^*) \eqdef \argmin_{u,v} R_{\Scal^{\vect{\alpha}}}(u,v) + R_{\Tcal}(u,v)$, then the ideal joint error is $R^* = R_{\Scal^{\vect{\alpha}}}(u^*,v^*) + R_{\Tcal}(u^*,v^*)$. 
\end{definition}

\subsection{Population Risk Bound}
\begin{definition}(Joint Approximation Error) Let $\tilde{x}=g(u,x), \tilde{x}^\prime=g(u,x^\prime)$ and the \textbf{optimal} coupling on $\Zcal=\Xcal\times\Ycal$ given the pseudo labeling function $h(v,g(u,x))$ be 
\[
\gamma_{u,v}^* \eqdef \argmin\limits_{\gamma \in \Pi(\Tcal_{u,v}, \Scal^{\vect{\alpha}})}\int \left[ LM\rho_{\tilde{x}}(\tilde{x}, \tilde{x}^\prime) + \ell(\hat{y}, y^\prime) \right] d\gamma(\hat{z},z^\prime)
\]
Then the joint approximation error is
\[
\begin{aligned}
R^*_{rep}(u,v) &\eqdef LM \int_{\Zcal\times\Zcal} \left[ \rho_{\tilde{x}}(g(u^*,x) ,g(u^*,x^\prime))\right. \\
&\quad\quad\quad\quad \left. -\rho_{\tilde{x}}(g(u,x), g(u,x^\prime))\right] d\gamma_{u,v}^*(\hat{z}, z^\prime)\,,
\end{aligned}
\]
where $R_{rep}^*(u,v)$ characterizes the approximation error for the domain shift on the representation space of $u^*$ (the representation counterpart of the ideal joint hypothesis). 
\end{definition}


\begin{theorem}\label{thm:pseudo_label}
For any given $\vect{\alpha}\in\Delta_N$ and $\forall u,v \in \Ucal \times \Vcal$, if Assumption~\ref{gen_assump} is satisfied, 
then the target population risk can be bounded by: 
\[
R_{\Tcal}(u,v) \leq \mathbf{W}_1(\tilde{\Tcal}_{u,v}, \tilde{\Scal}_u^{\vect{\alpha}}) + R^*_{rep}(u,v) + R^*\,.
\]
\end{theorem}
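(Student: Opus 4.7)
The plan is to mimic the classical ideal-joint-hypothesis decomposition in unsupervised DA, adapted so that (i) the source side is the combined distribution $\Scal^{\vect{\alpha}}$, and (ii) the only target object appearing on the right-hand side is the \emph{pseudo} target distribution $\tilde{\Tcal}_{u,v}$, which is what an unsupervised algorithm can actually manipulate. The two active ingredients are the triangle inequality of $\ell$ (Assumption~\ref{gen_assump}(c)) and the fact that $\ell(h(v^*,\cdot),\cdot)$ is $LM$-Lipschitz on $\tilde{\Xcal}$ (Assumption~\ref{gen_assump}(b)--(c)).

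First, inserting the ideal prediction $h(v^*, g(u^*, X))$ and applying the triangle inequality of $\ell$ pointwise, then taking expectation under $\Tcal$, yields
\[
R_{\Tcal}(u,v) \leq R_{\Tcal}(u^*, v^*) + A,
\]
where $A \eqdef \EE_{X\sim\Tcal(X)}\ell(h(v, g(u,X)), h(v^*, g(u^*, X)))$. The crucial observation is that, by the definition of the pseudo labeling function $f_p(x)=h(v,g(u,x))$, $A = \EE_{(X, \hat Y)\sim \Tcal_{u,v}}\ell(\hat Y, h(v^*, g(u^*, X)))$, which lets us introduce a coupling with $\Scal^{\vect{\alpha}}$.

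Second, for any $\gamma \in \Pi(\Tcal_{u,v}, \Scal^{\vect{\alpha}})$ with generic sample $((X, \hat Y), (X', Y'))$, a double application of the triangle inequality of $\ell$ gives
\[
\ell(\hat Y, h(v^*, g(u^*, X))) \leq \ell(\hat Y, Y') + \ell(Y', h(v^*, g(u^*, X'))) + \ell(h(v^*, g(u^*, X')), h(v^*, g(u^*, X))).
\]
Taking expectation under $\gamma$: the middle term marginalizes to $R_{\Scal^{\vect{\alpha}}}(u^*, v^*)$, while the last term is bounded by $LM\,\rho_{\tilde x}(g(u^*, X), g(u^*, X'))$ via Lipschitzness.

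Finally, I would specialize $\gamma$ to $\gamma_{u,v}^*$ and add-subtract $LM\,\rho_{\tilde x}(g(u, X), g(u, X'))$ inside the expectation. The ``$g(u,\cdot)$'' piece combines with $\EE_{\gamma_{u,v}^*}\ell(\hat Y, Y')$ into exactly $\mathbf{W}_1(\tilde{\Tcal}_{u,v}, \tilde{\Scal}_u^{\vect{\alpha}})$, since $\gamma_{u,v}^*$ minimizes this cost over couplings on $\Zcal$ and the pushforward $(x,y)\mapsto(g(u,x),y)$ identifies this infimum with the Wasserstein distance on $\tilde{\Zcal}$; the residue is precisely $R^*_{rep}(u,v)$. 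Assembling everything yields the claim with $R^* = R_{\Tcal}(u^*, v^*) + R_{\Scal^{\vect{\alpha}}}(u^*, v^*)$. The main subtlety I anticipate is exactly this coupling identification --- $\gamma_{u,v}^*$ is defined as a minimizer over couplings on $\Zcal$ whose cost depends only on the $(g(u, x), y)$ coordinates, so one must argue via disintegration/gluing that the infimum coincides with $\mathbf{W}_1$ on $\tilde{\Zcal}$ rather than only upper-bounding it; the remaining manipulations are routine triangle-inequality and Lipschitz bookkeeping.
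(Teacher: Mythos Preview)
Your proposal is correct and follows essentially the same route as the paper's proof: insert the ideal prediction via the triangle inequality of $\ell$, rewrite the resulting term as an expectation under the pseudo target $\Tcal_{u,v}$, couple with $\Scal^{\vect{\alpha}}$, apply triangle-plus-Lipschitz pointwise, and then add-subtract $LM\rho_{\tilde x}(g(u,\cdot),g(u,\cdot))$ before specializing to $\gamma_{u,v}^*$. Your flagged subtlety about identifying the $\Zcal$-coupling infimum with $\mathbf{W}_1(\tilde{\Tcal}_{u,v},\tilde{\Scal}_u^{\vect{\alpha}})$ is exactly the step the paper also passes over quickly, and your disintegration/gluing remark is the right way to close it.
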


See the detailed proof in Appendix~\ref{proof:unsup_pop_bound}.
From the above theorem, we notice that if we minimize $\mathbf{W}_1(\tilde{\Tcal}_{u,v},\tilde{\Scal}_u^{\vect{\alpha}})$, $R_{rep}^*(u,v)$ may increase. 
Hence, for a successful unsupervised MDA w.r.t joint representation alignment, the proposed theory implies that it is sufficient to have a small $R_{rep}^*(u,v)$ for the $(u,v)$ that minimizes $\mathbf{W}_1(\tilde{\Tcal}_{u,v}, \tilde{\Scal}_u^{\vect{\alpha}})$. 

\paragraph{Target Shift} An important benefit of the joint alignment methods is that it's favorable compared to the marginal alignment method when the target shift exists. Because with the joint optimal transport, we can automatically correct the target shift by minimizing $\mathbf{W}_1(\tilde{\Tcal}_{u,v}, \tilde{\Scal}_u^{\vect{\alpha}})$ w.r.t $v$. While the traditional marginal alignment methods that minimize $\mathbf{W}_1(\tilde{\Scal}_u(X), \tilde{\Tcal}_{u}(X)) + R_{\Scal^{\vect{\alpha}}}(u,v)$ cannot correct this shift since neither of the two terms in the objective are related to the divergence between the source and target label distributions. 



\subsection{Generalization Bound}
Similarly to the above supervised MDA method, we use a stochastic algorithm $\Acal^{un}$, which takes the $N$ source datasets and the unlabeled target dataset as input and then outputs the random parameters $ (U,V) = \Acal^{un}(S^{\vect{\alpha}}, T_X^\prime) \sim P_{U,V|S^{\vect{\alpha}},T_X^\prime}$. Combine with Theorem~\ref{thm:pseudo_label} , the expected generalization gap for unsupervised MDA is defined as:
$$gen(\Tcal,\Acal^{un}) \eqdef \EE_{U,V,S^{\vect{\alpha}},T_X^\prime} [R_{\Tcal}(U,V) - \hat{\mathbf{W}}_1(\tilde{\Tcal}_{U,V}, \tilde{\Scal}_U^{\vect{\alpha}})]\,,$$
where $\hat{\mathbf{W}}_1(\tilde{\Tcal}_{U,V}, \tilde{\Scal}_U^{\vect{\alpha}})$ is the empirical estimation of $\mathbf{W}_1(\tilde{\Tcal}_{U,V}, \tilde{\Scal}_U^{\vect{\alpha}})$ that has different forms w.r.t the specific choice of $\tilde{\Fcal}$, as discussed in Sec.~\ref{choice_f}. The definition without specification of $\tilde{\Fcal}$ is provided in the Appendix. Now, consider the following sub-Gaussian assumption. 
\begin{assumption}\label{assum_unsup}
For any proper choice of $\tilde{\Fcal}$ with the supremum attained, $\forall (u,v,v^\prime) \in \Ucal\times\Vcal\times\Vcal^\prime$, $\tilde{f}(v^\prime, (g(u,X), \hat{Y})) \in \tilde{\Fcal}$ is $\sigma^\prime$-sub-Gaussian w.r.t $\hat{Z} \sim \Tcal_{u,v}$ and $\forall (u, v^\prime) \in \Ucal\times\Vcal^\prime, \tilde{f}(v^\prime, (g(u,X), Y)) \in \tilde{\Fcal}$  is $\sigma^\prime$-sub-Gaussian w.r.t $Z\sim \Scal_i, \forall i \in [N]$. 
\end{assumption}
Then, we obtain the following algorithm-dependent bound on the generalization gap for joint representation alignment with pseudo labels.
\begin{theorem}\label{gen_un_pseudo}
If Assumption~\ref{gen_assump} and \ref{assum_unsup} are satisfied, then we have the following bound on the expected generalization gap of the unsupervised multi-source domain adaptation algorithm with pseudo label $\Acal^{un}$ for any $\vect{\alpha}\in\Delta_N$:
\[
\begin{aligned}
gen(\Tcal,\Acal^{un}) &\leq  \sqrt{2{\sigma^\prime}^2(\sum_{i=1}^N\frac{\alpha_i^2}{m_i} + \frac{1}{m_t^\prime})I(U,V;S^{\vect{\alpha}}, T_X^\prime)} \\
&+ \EE_{U,V} R^*_{rep}(U,V) + R^*
\end{aligned}
\]
\end{theorem}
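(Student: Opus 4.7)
The plan is to combine Theorem~\ref{thm:pseudo_label} with an information-theoretic (Xu--Raginsky style) bound on the deviation of the empirical Wasserstein from its population counterpart. First I would take $(u,v) \mapsto (U,V)$, apply Theorem~\ref{thm:pseudo_label} pointwise, then take expectations over $(U,V,S^{\vect{\alpha}},T_X^\prime)$:
\[
\EE[R_{\Tcal}(U,V)] \;\leq\; \EE[\mathbf{W}_1(\tilde{\Tcal}_{U,V},\tilde{\Scal}_U^{\vect{\alpha}})] + \EE[R^*_{rep}(U,V)] + R^*.
\]
Subtracting $\EE[\hat{\mathbf{W}}_1(\tilde{\Tcal}_{U,V},\tilde{\Scal}_U^{\vect{\alpha}})]$ reduces the theorem to the single task of upper-bounding the Wasserstein generalization gap $\Delta \eqdef \EE[\mathbf{W}_1(\tilde{\Tcal}_{U,V},\tilde{\Scal}_U^{\vect{\alpha}}) - \hat{\mathbf{W}}_1(\tilde{\Tcal}_{U,V},\tilde{\Scal}_U^{\vect{\alpha}})]$, since the other two terms appear already in the claimed inequality.

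To bound $\Delta$, I would use the Kantorovich--Rubinstein dual form with the parameterized witness family $\tilde{\Fcal}$, so that for each realisation $(u,v)$ the supremum in $v^\prime$ is attained at some measurable selector $v^*_{u,v}\in\Vcal^\prime$. Writing $F_{u,v}(v^\prime)=\EE_{\tilde{\Tcal}_{u,v}}\tilde{f}(v^\prime,\cdot)-\EE_{\tilde{\Scal}^{\vect{\alpha}}_u}\tilde{f}(v^\prime,\cdot)$ and its sample version $\hat{F}_{u,v}(v^\prime)$, the sup-of-differences inequality $\sup F - \sup \hat F \leq F(v^*)-\hat F(v^*)$ gives
\[
\Delta \;\leq\; \EE\bigl[F_{U,V}(v^*_{U,V}) - \hat{F}_{U,V}(v^*_{U,V})\bigr].
\]
Crucially, $v^*_{U,V}$ is a deterministic function of $(U,V)$ alone (the underlying distributions are fixed), so by the data-processing inequality the relevant mutual information for the selected witness is bounded by $I(U,V;S^{\vect{\alpha}},T_X^\prime)$.

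The final step is a direct application of the Xu--Raginsky lemma. Let $W=(U,V)$ and view $g(W,\mathrm{data})\eqdef F_{U,V}(v^*_{U,V}) - \hat F_{U,V}(v^*_{U,V})$ as a function of $W$ and the samples. Under the product distribution $P_W\otimes P_{\mathrm{data}}$ the empirical averages are unbiased, so $\EE[g]=0$. Under the same product measure, $g$ decomposes into two independent pieces: a target piece that is an i.i.d.\ average of $m_t^\prime$ terms and a source piece that is a $\vect{\alpha}$-weighted sum over the $N$ independent source datasets. By Assumption~\ref{assum_unsup} each individual term is $\sigma^\prime$-sub-Gaussian, hence the whole $g$ is sub-Gaussian with variance proxy ${\sigma^\prime}^2\bigl(\sum_i \alpha_i^2/m_i + 1/m_t^\prime\bigr)$, and Xu--Raginsky yields $\Delta \leq \sqrt{2{\sigma^\prime}^2(\sum_i \alpha_i^2/m_i + 1/m_t^\prime)\,I(U,V;S^{\vect{\alpha}},T_X^\prime)}$, which combined with the decomposition above completes the proof.

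The main technical obstacle is the double supremum: the witness $v^\prime$ that realises $\mathbf{W}_1$ in general differs from the one realising $\hat{\mathbf{W}}_1$, and it is also coupled to $(U,V)$ through the pseudo-labelling $\hat Y = h(V,g(U,X))$. Resolving this requires (i) the attainment assumption on $\tilde{\Fcal}$ so that $v^*_{U,V}$ exists as a measurable function, (ii) the one-sided sup trick to replace the difference of suprema by the supremum of differences at $v^*_{U,V}$, and (iii) the uniform sub-Gaussianity in Assumption~\ref{assum_unsup} so that sub-Gaussianity is preserved after this selection. Once these three ingredients are in place, the mixture-variance bookkeeping that produces the factor $\sum_i \alpha_i^2/m_i + 1/m_t^\prime$ proceeds exactly as in the supervised bound of Theorem~\ref{gen_supervised}.
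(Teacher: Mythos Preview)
Your proposal is correct and follows the same overall route as the paper: take expectations in Theorem~\ref{thm:pseudo_label} to reduce to bounding $\EE[\mathbf{W}_1-\hat{\mathbf{W}}_1]$, then control that gap by the Donsker--Varadhan/Xu--Raginsky argument together with the uniform $\sigma'$-sub-Gaussianity of Assumption~\ref{assum_unsup}, arriving at the variance proxy ${\sigma'}^2(\sum_i\alpha_i^2/m_i+1/m_t')$ exactly as in Theorem~\ref{gen_supervised}. The only difference is how the supremum over $v'$ is dispatched: the paper keeps $\sup_{v'}$ inside the test function $f'(U,V,S^{\vect{\alpha}},T_X')=\hat{\mathbf{W}}_1$ and, when evaluating on the independent copy $(\tilde U,\tilde V)$, interchanges $\sup_{v'}$ with $\EE_{S^{\vect{\alpha}},T_X'}$ (strictly this gives $\geq$ rather than $=$, but the inequality points the right way), whereas you first freeze the population-optimal witness $v^*_{U,V}$ via the one-sided bound $\sup F-\sup\hat F\le F(v^*)-\hat F(v^*)$ and then apply Xu--Raginsky to the fixed-witness difference; your handling is a touch cleaner on this point, but the two arguments are otherwise identical.
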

\paragraph{Discussion} The above bound contains the two terms $\EE_{U,V}R^*_{rep}(U,V)$ and $R^*$, which are assumed to be small enough for successful unsupervised representation transfer. Then the generalization gap of the joint representation alignment algorithm is upper bounded by the mutual information between $(S^{\vect{\alpha}},T_X^\prime)$ and the algorithm output $(U,V)$. This is partly because the joint alignment also affects the learning of the predictor parameter $V$ through the pseudo-label predictions. For unsupervised MDA, we often have a large $m_t^\prime$, where $m_t^\prime \gg m_i, \forall i \in [N]$. So the sample complexity is dominated by $\sum_{i=1}^N\frac{\alpha_i^2}{m_i}$. Consider the special case where all the sources domains have equivalently strong similarity with the target domain such that we have $\alpha_i = 1/N$ and $\sum_{i=1}^N\frac{\alpha_i^2}{m_i} = \frac{\sum_{i=1}^N 1/m_i}{N^2}$. In that case, the generalization gap can be decreased by increasing the number of domains, \ie, $N\rightarrow \infty$. The detailed proof of Theorem~\ref{gen_un_pseudo} is provided in Appendix~\ref{proof:unsup_gen_bound}.

\section{UNIFIED APPROACH FOR DEEP MDA}\label{algo}

\subsection{Information-Theoretic MDA (IMDA)}
\paragraph{Combine Supervised and Unsupervised MDA}
The supervised MDA algorithm does not exploit the information conveyed in the unlabeled target data. At the same time, the unsupervised MDA algorithms require a small ideal joint error, which may not be satisfied in practice. Hence, we consider combining the two methods that both conduct joint representation alignment. Then we propose a general approach by balancing the confidence of the two schemes through a weighting parameter $0\leq\tau\leq1$. 
When $\tau$ is neither $0$ nor $1$, the proposed algorithm becomes a semi-supervised transfer learning algorithm. 
So we define the following empirical risk:
\[
\begin{aligned}
\hat{R}_{\epsilon,\tau}^{\vect{\alpha}}(u,v) &\eqdef \tau(1-\epsilon)\hat{R}_{\Tcal}(u,v) + \tau\epsilon\hat{R}_{\Scal^{\vect{\alpha}}}(u,v)\\
& + \tau\epsilon\hat{\mathbf{W}}_1(\tilde{\Tcal}_u, \tilde{\Scal}^{\vect{\alpha}}_u) + (1-\tau)\hat{\mathbf{W}}_1(\tilde{\Tcal}_{u,v}, \tilde{\Scal}_u^{\vect{\alpha}})
\end{aligned}
\]
\paragraph{Choice of $\tilde{\Fcal}$}
Since $\ell(h(v,\cdot), \cdot), \forall v \in \Vcal$ satisfies the Lipschitz condition with $|\ell(h(v,\tilde{x}), y) - \ell(h(v, \tilde{x}^\prime), y^\prime)| \leq \rho_{\tilde{z}}(\tilde{z}, \tilde{z}^\prime)$ for the $\rho_{\tilde{z}}$ used in the two Wasserstein distances, we can choose $\tilde{\Fcal} = \{\tilde{f} = \ell(h(v^\prime,\cdot), \cdot): v^\prime\in \Vcal^\prime=\Vcal\}$. As discussed in Section 4.2, We assume that the supremum over the space of all the 1-Lipschitz functions $\Fcal$ is attained with some $\tilde{f}\in\tilde{\Fcal}$.
Thus, the model contains two predictors $h(v, .)$ and $h(v^\prime, .)$ using the same network structure ($\Vcal^\prime=\Vcal$) that make predictions from the feature representation $g(u,x)$. The former minimize the empirical risk $\tau(1-\epsilon)\hat{R}_{T}(u,v) + \tau\epsilon\hat{R}_{S^{\vect{\alpha}}}(u,v)$ w.r.t $v$. The other auxiliary predictor maximizes the empirical estimation of the two Wasserstein distances w.r.t $v^\prime$. Then we minimize the sum of these empirical estimates w.r.t $u,v$. The above approach is different from the marginal representation alignment method, where the $\mathbf{W}_1$ distance is defined on $\tilde{\Xcal}$ and the corresponding $\tilde{f}$ is the domain discriminator adopted by DANN-like methods \citep{wen2020domain, shui2021aggregating}.


\paragraph{Optimize the Wasserstein Distances}\label{choice_f} With the choice of $\tilde{f}$ described above, the empirical risk of the pseudo target distribution is:
$$\hat{R}_{\Tcal_{u,v}}(u,v, v^\prime) = \frac{1}{m_t^\prime} \sum_{j=1}^{m_t^\prime} \ell(h(v^\prime, g(u,X_j^t)), h(v, g(u, X_j^t)))$$
Therefore, we have the following approximation of the two Wasserstein distances:

\hspace{0.5cm}$\hat{\mathbf{W}}_1(\tilde{\Tcal}_u, \tilde{\Scal}^{\vect{\alpha}}_u) = \max_{v^\prime}[ \hat{R}_{\Tcal}(u, v^\prime) - \hat{R}_{\Scal^{\vect{\alpha}}}(u,v^\prime)]$

\hspace{0.5cm}$\hat{\mathbf{W}}_1(\tilde{\Tcal}_{u,v}, \tilde{\Scal}^{\vect{\alpha}}_u) = \max_{v^\prime}[ \hat{R}_{\Tcal_{u,v}}(u,v, v^\prime) - \hat{R}_{\Scal^{\vect{\alpha}}}(u,v^\prime)]$

Then we get the two mini-max optimization objectives $\min_u\hat{\mathbf{W}}_1(\tilde{\Tcal}_u, \tilde{\Scal}^{\vect{\alpha}}_u)$ and $\min_{u,v}\hat{\mathbf{W}}_1(\tilde{\Tcal}_{u,v}, \tilde{\Scal}^{\vect{\alpha}}_u)$, which can be implemented with a gradient reversal layer \citep{ganin2016domain}.

\paragraph{Joint Optimization with Stochastic Gradient Algorithm}
\cite{pensia2018generalization} provides methods bounding the mutual information with gradient updates for noisy iterative algorithms, which is especially suitable for deep learning, where the Stochastic Gradient Descent (SGD) and its variants are often applied. We apply this technique to analyze the aforementioned multi-source transfer scheme using the Stochastic Gradient Langevin Dynamics (SGLD) algorithm \citep{welling2011bayesian} --- a stochastic variant of SGD with independent noise injection at each step. 

At each iteration $k$, a batch of source data $S_{B_{1:N}}^k$ is sampled from the source datasets $S_{1:N}$. Simultaneously, a batch of labeled target data $T_{B}^k$ and a batch of unlabeled target data $T_{X_{B}}^k$ are sampled from the corresponding target datasets. Subsequently, we update the representation parameter $U$ and predictor parameter $V$ with the following updates:
$$U_k = U_{k-1} - \eta_u^k G_u^k + \xi_u^k;\quad V_{k} = V_{k-1} - \eta_v^k G_v^k + \xi_v^k\, ,$$
where $\xi_k^u \sim N(0, \sigma_k^2 I_{d_u})$ and $\xi_k^v \sim N(0, \sigma_k^2 I_{d_v})$ are injected isotropic Gaussian noise of variance $\sigma_k^2$. Parameters $\eta_u^k$ and $\eta_v^k$ are, respectively, the learning rate for $U$ and $V$ at each step. $G_u^k$ and $G_v^k$ are, respectively, the gradient estimation of $\nabla_U \hat{R}_{\epsilon,\tau}^{\vect{\alpha}}(U,V)$ and $\nabla_V \hat{R}_{\epsilon,\tau}^{\vect{\alpha}}(U,V)$ using the batch datasets sampled at each iteration. Note that the sampling strategy needs to be agnostic to the previous iterates of the parameters.

\subsection{Theoretical Guarantee via Gradient Norm Bound}

We now present the gradient norm bound for the proposed IMDA algorithm. 
\begin{theorem}\label{thm:gen_gradient_norm}
Following Theorem~\ref{thm:sup_wasserstein},~\ref{thm:pseudo_label} ~\ref{gen_supervised},~\ref{gen_un_pseudo}, define a weight parameter $\tau$ that balances the unsupervised and supervised MDA. Adopt the aforementioned choice of $\tilde{\Fcal}$ with $\sigma^\prime=\sigma$ and the SGLD updates described above. Then we can obtain the gradient norm bound for the expected target risk, $\forall \vect{\alpha}\in\Delta_N, \tau,\epsilon \in [0,1]$, we have:
\[
\begin{aligned}
\EE&_{U,V,S^{\vect{\alpha}}, T, T_X^\prime} R_{\Tcal}(U, V) \leq \EE_{U,V, S^{\vect{\alpha}}, T, T_X^\prime} \hat{R}_{\epsilon,\tau}^{\vect{\alpha}}(U,V)\\
&+ \tau \sigma\sqrt{2(\frac{(1-\epsilon)^2}{m_t} + \epsilon^2\sum_{i=1}^N\frac{\alpha_i^2}{m_i})(\delta_u + \delta_v)}\\
&+ \tau \epsilon \sigma \sqrt{2(\sum_{i=1}^N\frac{\alpha_i^2}{m_i} + \frac{1}{m_t}) \delta_u} + (1-\tau)\EE_{U,V} R_{rep}^*(U,V)\\
&+ (1-\tau)(\sigma\sqrt{2(\sum_{i=1}^N\frac{\alpha_i^2}{m_i} + \frac{1}{m_t^\prime})(\delta_u + \delta_v)} + R^*)\,,
\end{aligned}
\]
$\delta_{u}\eqdef\sum_{k=1}^K \frac{(\eta_u^k)^2\EE\|G_u^k\|_2^2 }{2\sigma_k^2}$ and $\delta_v\eqdef\sum_{k=1}^K \frac{(\eta_v^k)^2\EE\|G_v^k\|_2^2}{2\sigma_k^2}$ are the accumulated gradient  norm for $U$ and $V$, respectively.
\end{theorem}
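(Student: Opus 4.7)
The plan is to combine the two population risk bounds (Theorem~\ref{thm:sup_wasserstein} for the supervised side and Theorem~\ref{thm:pseudo_label} for the pseudo-label side) with the corresponding generalization gap bounds (Theorem~\ref{gen_supervised} and Theorem~\ref{gen_un_pseudo}), and then replace the mutual information terms by a gradient-norm surrogate using the SGLD analysis of \cite{pensia2018generalization}. Since the empirical risk $\hat{R}_{\epsilon,\tau}^{\vect{\alpha}}$ is a convex combination (with weight $\tau$ and $1-\tau$) of the supervised and unsupervised empirical objectives, the natural decomposition is
\[
R_{\Tcal}(U,V) \;=\; \tau R_{\Tcal}(U,V) + (1-\tau) R_{\Tcal}(U,V),
\]
so that each term can be handled by its own pipeline and the results simply added.

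First, I would treat the supervised piece. Taking expectation over $(U,V,S^{\vect{\alpha}},T,T_X^\prime)$ and applying Theorem~\ref{gen_supervised} to the stochastic output of the IMDA iterates (viewed as an algorithm $\Acal$ that takes $(S^{\vect{\alpha}},T)$ as input, marginalizing $T_X^\prime$ in the conditioning), the term $\tau\,\EE R_{\Tcal}(U,V)$ is bounded by $\tau\EE[(1-\epsilon)\hat{R}_{\Tcal}+\epsilon\hat{R}_{\Scal^{\vect{\alpha}}}+\epsilon\hat{\mathbf{W}}_1(\tilde{\Tcal}_U,\tilde{\Scal}_U^{\vect{\alpha}})]$ plus the two mutual-information terms involving $I(U,V;S^{\vect{\alpha}},T)$ and $I(U;S^{\vect{\alpha}},T)$. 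Second, for the unsupervised piece, I would invoke Theorem~\ref{gen_un_pseudo} to bound $(1-\tau)\EE R_{\Tcal}(U,V)$ by $(1-\tau)\EE\hat{\mathbf{W}}_1(\tilde{\Tcal}_{U,V},\tilde{\Scal}_U^{\vect{\alpha}}) + (1-\tau)\EE R_{rep}^*(U,V) + (1-\tau)R^*$ plus the information term involving $I(U,V;S^{\vect{\alpha}},T_X^\prime)$. Adding the two decompositions reconstructs exactly $\EE\hat{R}_{\epsilon,\tau}^{\vect{\alpha}}(U,V)$ on the right-hand side.

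The remaining step is to upgrade from information terms to gradient-norm terms. By the data-processing inequality along the SGLD trajectory $U_{0:K},V_{0:K}$, the joint mutual information decomposes step-wise into KL divergences between the Gaussian noise distributions centered at the current iterate and at the iterate obtained with a resampled batch. Each per-step KL is bounded by $(\eta_u^k)^2\EE\|G_u^k\|_2^2/(2\sigma_k^2)$ (and analogously for $V$), so that $I(U;S^{\vect{\alpha}},T)\le\delta_u$, $I(V;S^{\vect{\alpha}},T\mid U)\le\delta_v$, and therefore $I(U,V;S^{\vect{\alpha}},T)\le \delta_u+\delta_v$; the same upper bound applies with $T$ replaced by $T_X^\prime$ since only the chain-rule bookkeeping changes. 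Plugging these surrogates into the two information square-root terms of the previous step, setting $\sigma^\prime=\sigma$, and collecting the four contributions ($\tau$ supervised loss gap, $\tau\epsilon$ supervised Wasserstein gap, $(1-\tau)$ unsupervised Wasserstein+pseudo-label gap, and $(1-\tau)(\EE R_{rep}^* + R^*)$) yields the claimed inequality term by term.

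The main obstacle I anticipate is the clean bookkeeping of constants when the unsupervised bound mixes the gap $I(U,V;S^{\vect{\alpha}},T_X^\prime)$ with sample sizes $\{m_i\}$ and $m_t^\prime$, while the supervised bound mixes $I(U,V;S^{\vect{\alpha}},T)$ and $I(U;S^{\vect{\alpha}},T)$ with the coefficients $(1-\epsilon)^2/m_t$ and $\epsilon^2\alpha_i^2/m_i$. In particular, I need to verify that Theorem~\ref{gen_supervised} still applies to $\Acal$ when the algorithm has additional randomness coming from $T_X^\prime$; marginalizing over $T_X^\prime$ and applying the bound to the induced conditional distribution $P_{U,V\mid S^{\vect{\alpha}},T}$ is the cleanest route, and the same trick handles the other direction. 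Once these conditioning arguments are laid out properly, the rest is arithmetic.
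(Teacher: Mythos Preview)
Your proposal is correct and follows essentially the same approach as the paper: decompose $R_{\Tcal}$ via the convex combination with weight $\tau$, apply Theorems~\ref{gen_supervised} and~\ref{gen_un_pseudo} respectively to recover $\hat{R}_{\epsilon,\tau}^{\vect{\alpha}}$ plus information terms, and then bound the mutual information by the accumulated gradient norms through the SGLD analysis. The only cosmetic difference is that the paper bounds $I(U,V;S^{\vect{\alpha}},T)$ step-wise on the joint iterates $(U_k,V_k)$ using an entropy argument (sum of entropies and the maximum-entropy property of Gaussians), whereas you sketch a chain-rule split $I(U;\cdot)+I(V;\cdot\mid U)$ with per-step KL; both routes yield $\delta_u+\delta_v$, and your marginalization over the extra dataset ($T_X^\prime$ or $T$) is exactly how the paper handles the mixed dependence.
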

Theorem~\ref{thm:gen_gradient_norm} is quite general. When $\tau=1$, it covers the supervised MDA. When $\tau=0$, it covers the unsupervised setting. In other cases, it can be applied for semi-supervised transfer learning. We can see that for semi-supervised transfer, the assumption for a small $R^* + \EE_{U,V}R^*_{rep}(U,V)$ is weakened. When we have many labeled target data ($m_t$ large), we can choose a large $\tau$. Moreover, if we have only one source, $\vect{\alpha}$ becomes a scalar and equals $1$. Thus the bound can also be applied to single-source domain adaptation. See the proof in Appendix~\ref{proof:gradient_bound}.

\subsection{Implementation and Discussion}

\begin{table*}[hbt!]
\caption{Accuracy(\%) of Unsupervised MDA on Target Shift Data (Drop Rate $50\%$): Amazon Review(Left), Digits(Right)}\label{tab:umda}
\begin{minipage}{\columnwidth}
    \centering
\begin{adjustbox}{width=\textwidth}
    \begin{tabular}{c|cccc|c}
&   \multicolumn{4}{c|}{Target Domain} &  \\
Method & Books & DVD &  Electronics & Kitchen  & Average  \\ 
\hline
Source &   68.15$_{\pm 1.37}$   &  69.51$_{\pm 0.74}$ &      82.09$_{\pm 0.88}$  &  75.30$_{\pm 1.29}$ &  73.81 \\ 
DANN   &  65.59$_{\pm 1.35}$    & 67.23$_{\pm 0.71}$ &  80.49$_{\pm 1.11}$     & 74.71$_{\pm 1.53}$  &   72.00 \\ 
MDAN   &    68.77$_{\pm 2.31}$  & 67.81$_{\pm 2.46}$ &  80.96$_{\pm 0.77}$   & 75.67$_{\pm 1.96}$  & 73.30 \\ 
MDMN   & 70.56$_{\pm 1.05}$  &69.64$_{\pm 0.73}$     &  82.71$_{\pm 0.71}$    & 77.05$_{\pm 0.78}$  &  74.99\\ 
M$^3$SDA  & 69.09$_{\pm 1.26}$ & 68.67$_{\pm 1.37}$ & 81.34$_{\pm 0.66}$ & 76.10$_{\pm 1.47}$  & 73.79 \\ 
\hline
DARN   &69.88$_{\pm 1.91}$  & 69.63$_{\pm 1.09}$ &80.83$_{\pm 1.13}$      & 77.47$_{\pm 1.05}$   & 74.45  \\ 
WADN   &    \textbf{75.74}$_{\pm 1.60}$  & \textbf{78.46}$_{\pm 1.72}$ & 82.10$_{\pm 2.09}$    & 81.05$_{\pm 2.25}$   & 79.34 \\ 
IMDA(Ours) & 75.21$_{\pm2.3}$ & 78.15$_{\pm1.79}$ &\textbf{83.93}$_{\pm0.24}$ & \textbf{81.44}$_{\pm 1.44}$ & \textbf{79.68}
\end{tabular}
\end{adjustbox}
\end{minipage}\hfill
\begin{minipage}{\columnwidth}
    \centering
\begin{adjustbox}{width=\textwidth}
\begin{tabular}{c|cccc|c}
&   \multicolumn{4}{c|}{Target Domain} &  \\
Method & MNIST & SVHN  &  SYNTH  & USPS  & Average  \\
\hline
Source &  84.93$_{\pm 1.50}$   &  67.14$_{\pm 1.40}$   &   78.11$_{\pm 1.31}$   & 86.02$_{\pm 1.12}$ & 79.05 \\ 
DANN   &  86.99$_{\pm 1.53}$   &  69.56$_{\pm 2.26}$   &   78.73$_{\pm 1.30}$   & 86.81$_{\pm 1.74}$ & 80.52 \\ 
MDAN   &  87.86$_{\pm 2.24}$   & 69.13$_{\pm 1.56}$     &    79.77$_{\pm 1.69}$  & 86.50$_{\pm 1.59}$ & 80.81 \\ 
MDMN   &  87.31$_{\pm 1.88}$     &  69.84$_{\pm 1.59}$    & 80.27$_{\pm 0.88}$     & 86.61$_{\pm 1.41}$  & 81.00 \\ 
M$^3$SDA  &  87.22$_{\pm 1.70}$   &  68.89$_{\pm 1.93}$   &   80.01$_{\pm 1.77}$   & 86.39$_{\pm 1.68}$ & 80.87 \\ \hline
DARN   &   86.96$_{\pm 1.27}$       &   68.99$_{\pm 1.76}$   &  80.65$_{\pm 1.12}$  &86.89$_{\pm 1.64}$  &  80.87\\ 
WADN   &   89.04$_{\pm 0.86}$      &  71.68$_{\pm 1.27}$     &  \textbf{82.06}$_{\pm 0.97}$   & \textbf{90.06}$_{\pm 1.12}$  & \textbf{83.21} \\ 
IMDA(Ours) &  \textbf{89.26}$_{\pm 0.89}$      &  \textbf{76.6}$_{\pm 0.86}$     &  81.76$_{\pm 0.69}$   & 84.45$_{\pm 1.2}$  & 83.02 
\end{tabular}
\end{adjustbox}
\end{minipage}\hfill 
\end{table*}

\paragraph{Optimize the Domain Weights $\vect{\alpha}$}
From Theorem~\ref{thm:gen_gradient_norm}, for given $u,v,v^\prime$ and $\epsilon,\tau$, we can optimize $\vect{\alpha}$ w.r.t the related term in the upper bound. So we get the following optimization problem with constants $C_0, C_1>0$:
\[
\begin{aligned}
\min_{\vect{\alpha}} \space &\left( (\epsilon\tau + C_0(1-\tau)) \hat{R}_{\Scal^{\vect{\alpha}}}(u,v) \right. \\
&\quad\quad - (\epsilon\tau + 1- \tau) \hat{R}_{\Scal^{\vect{\alpha}}}(u,v^\prime)  \\
&\left.\quad\quad + C_1((1-\tau + \tau\epsilon)\sqrt{\delta_u + \delta_v} + \tau\epsilon \sqrt{\delta_u}) R(\vect{\alpha}) \right)\, ,\\
&R(\vect{\alpha}) = \sqrt{\sum_{i=1}^N \frac{\alpha_i^2}{m_i}},\ \text{s.t.} \forall i\in[N], \alpha_i\geq 0, \sum_{i=1}^N \alpha_i = 1\,,
\end{aligned}
\]
which can be solved with a standard convex optimization toolbox. We fix $\vect{\alpha}$ at each training epoch to optimize the network parameters $u,v, v^\prime$ by minimizing the empirical risk $\hat{R}_{\epsilon,\tau}^{\vect{\alpha}}(u,v)$. Then, at the end of each epoch, we optimize $\vect{\alpha}$ given the updated $u,v, v^\prime$ with the above convex optimization objective. We add $C_0\hat{R}_{\Scal^{\vect{\alpha}}}(u,v)$ for unsupervised MDA ($\tau=0$) as approximation of $R_{\Scal^{\vect{\alpha}}}(u^*,v^*)$ in $R^*$ when optimizing $\vect{\alpha}$, which does not exist in the optimization objective for $u,v$. Other higher order terms of $\vect{\alpha}$ in $\delta_u, \delta_v$ are ignored. 

We emphasize that the above objective is different from \citet{shui2021aggregating}, which also optimizes the domain weights $\vect{\alpha}$. The coefficient before the regularization term $R(\vect{\alpha})$ adapts with the gradient steps, which has an interesting interpretation. When the gradient step grows, both the model parameters and the domain weights $\vect{\alpha}$ may over-fit to the existing datasets. Thus, the adaptive regularization coefficient of $R(\vect{\alpha})$ can gradually control the possibility of over-fitting to specific source distributions.   

\paragraph{Complexity Comparison} For memory complexity, the proposed method only requires \textbf{one} duplicate predictor. In contrast, WADN \citep{shui2021aggregating} requires $N$ domain discriminator and $N|\Ycal|$ class feature centroids. MDAN and DARN \citep{wen2020domain} need $N$ domain discriminators. M$^3$SDA \citep{peng2019moment} and MDMN \citep{li2018extracting} need $N^2$ domain discriminators. For time complexity, the proposed method needs to compute $\Ocal(N)$ mini-batch risks and the corresponding backward gradients at each batch for optimizing the model parameters, which is the same as the other methods. Moreover, after each epoch, we need to optimize the domain weights with $\Ocal(N)$ time complexity, which is an improvement compared to the $\Ocal(N|\Ycal|)$ time complexity of WADN, where the $|\Ycal|$ comes from estimating the label ratio. 

\begin{table*}[hbt!]
\caption{Accuracy(\%) of Supervised MDA on Target Shift Data (Drop Rate $50\%$): Amazon Review(Left), Digits(Right)}\label{tab:mda}
\begin{minipage}{\columnwidth}
    \centering
\begin{adjustbox}{width=\textwidth}
\begin{tabular}{c|cccc|c}
&   \multicolumn{4}{c|}{Target Domain} &  \\
Method & Books & DVD &  Electronics & Kitchen  & Average  \\ 
\hline
Source + Tar &    72.59$_{\pm 1.89}$ &  73.02$_{\pm 1.84}$    & 81.59$_{\pm 1.58}$    & 77.03$_{\pm 1.73}$  & 76.06  \\ 
DANN   &    67.35$_{\pm 2.28}$       &   66.33$_{\pm 2.42}$      & 78.03$_{\pm 1.72}$     & 74.31$_{\pm 1.71}$  & 71.50  \\ 
MDAN   &    68.70$_{\pm 2.99}$       &    69.30$_{\pm 2.21}$       & 78.78$_{\pm 2.21}$ & 74.07$_{\pm 1.89}$  & 72.71  \\ 
MDMN   &    69.19$_{\pm 2.09}$       &     68.71$_{\pm 2.39}$        & 81.88$_{\pm 1.46}$  & 78.51$_{\pm 1.91}$  & 74.57  \\ 
M$^3$SDA  &  69.28$_{\pm 1.78}$ &    67.40$_{\pm 0.46}$     & 76.28$_{\pm 0.81}$    & 76.50$_{\pm 1.19}$  & 72.36  \\ 
RLUS   &    71.83$_{\pm 1.71}$      &    69.64$_{\pm 2.39}$     & 81.98$_{\pm 1.04}$    & 78.69$_{\pm 1.15}$  & 75.54 \\ 
MME   &    69.66$_{\pm 0.58}$  &   71.36$_{\pm 0.96}$  &   78.88$_{\pm 1.51}$   &  76.64$_{\pm 1.73}$  &  74.14 \\
\hline
DARN   & 69.48$_{\pm 2.28}$     &    69.59$_{\pm 2.90}$     & 80.66$_{\pm 1.38}$    & 77.25$_{\pm 0.94}$  & 74.25 \\
WADN   & 74.52$_{\pm 1.45}$     &  \textbf{77.39}$_{\pm 1.04}$    & 81.40$_{\pm 1.61}$    & \textbf{81.52}$_{\pm 1.64}$  & 78.71 \\ 

IMDA(Ours) & \textbf{75.17}$_{\pm 0.94}$ & 77.92$_{\pm 1.73}$ & \textbf{83.04}$_{\pm 1.34}$ & 80.11$_{\pm 1.17}$ & \textbf{79.06}
\end{tabular}
\end{adjustbox}
\end{minipage}\hfill 
\begin{minipage}{\columnwidth}
    \centering
\begin{adjustbox}{width=\textwidth}
\begin{tabular}{c|cccc|c}
&   \multicolumn{4}{c|}{Target Domain} &  \\
Method & MNIST & SVHN  &  SYNTH  & USPS  & Average  \\
\hline
Source + Tar &   79.63$_{\pm 1.74}$      & 56.48$_{\pm 1.90}$       & 69.64$_{\pm 1.38}$     & 86.29$_{\pm 1.56}$  &  73.01 \\ 
DANN   & 86.77$_{\pm 1.30}$              & 69.13$_{\pm 1.09}$       & 78.82$_{\pm 1.35}$     & 86.54$_{\pm 1.03}$  &  80.32 \\
MDAN   & 86.93$_{\pm 1.05}$              & 68.25$_{\pm 1.53}$       & 79.80$_{\pm 1.17}$     & 86.23$_{\pm 1.41}$  &  80.30 \\
MDMN   & 77.59$_{\pm 1.36}$              & 69.62$_{\pm 1.26}$       & 78.93$_{\pm 1.64}$     & 87.26$_{\pm 1.13}$  &  78.35 \\
M$^3$SDA  & 85.88$_{\pm 2.06}$           & 68.84$_{\pm 1.05}$       & 76.29$_{\pm 0.95}$     & 87.15$_{\pm 1.10}$  &  79.54 \\
RLUS   &  87.61$_{\pm 1.08}$             & 70.50$_{\pm 0.94}$       & 79.52$_{\pm 1.30}$     & 86.70$_{\pm 1.13}$  &  81.08  \\ 
MME   & 87.24$_{\pm 0.95}$  &  65.20$_{\pm1.35}$  &  80.31$_{\pm 0.60}$  &  87.88$_{\pm 0.76}$  &  80.16 
 \\
 \hline
 DARN   & 86.56$_{\pm 1.48}$              & 68.76$_{\pm 1.36}$       & 80.4$_{\pm 1.27}$     & 86.82$_{\pm 1.09}$  &  80.64 \\
WADN   & \textbf{88.02}$_{\pm 1.27}$     & 70.54$_{\pm 1.06}$       & \textbf{81.58}$_{\pm 0.96}$     & 90.48$_{\pm 1.11}$  &  82.66  \\
IMDA(Ours) & 84.82$_{\pm 1.15}$ & \textbf{75.46}$_{\pm 1.2}$ & 79.87$_{\pm 1.04}$ & \textbf{91.6}$_{\pm 0.96}$ & \textbf{82.94}
\end{tabular}
\end{adjustbox}
\end{minipage}
\end{table*}

\begin{figure*}[hbt!]
    \centering
    \subfloat[Supervised MDA]{
     \includegraphics[width=0.29\linewidth]{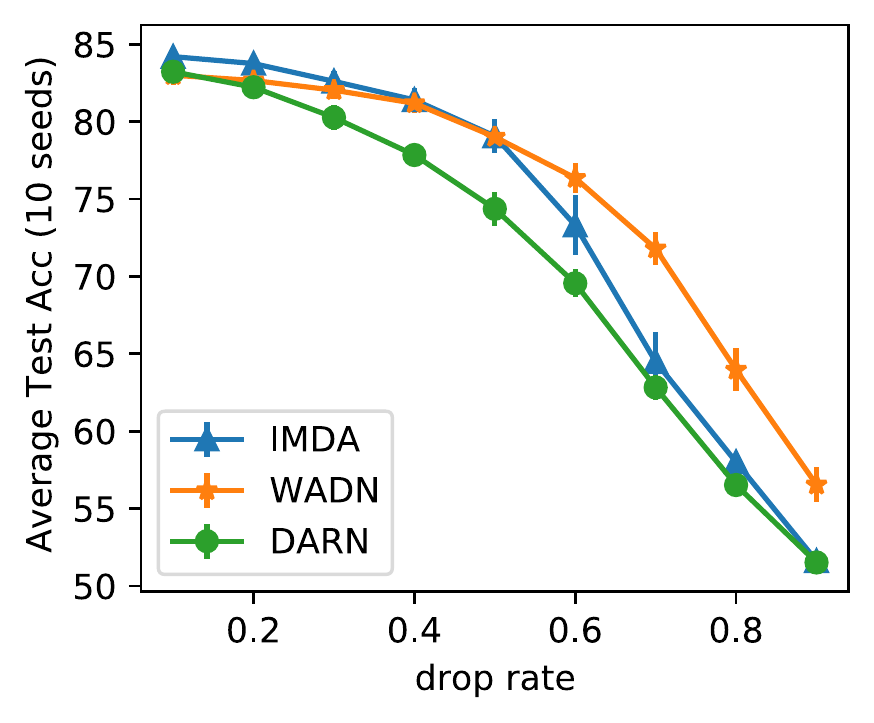}}
    \subfloat[Unsupervised MDA]{
     \includegraphics[width=0.29\linewidth]{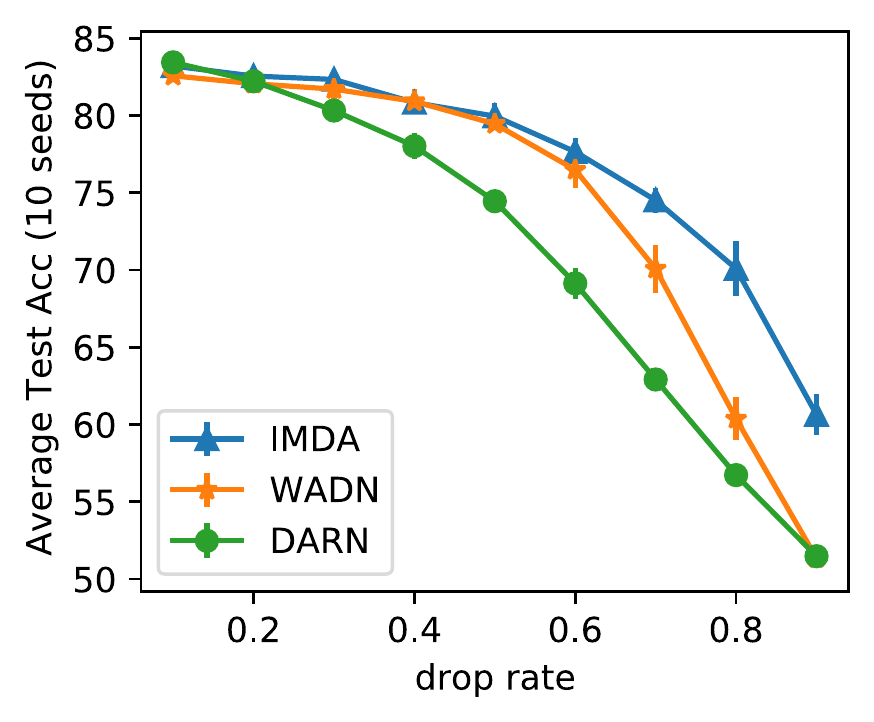}}
 \subfloat[Domain Weights and Regularization]{
   \includegraphics[width=0.38\linewidth]{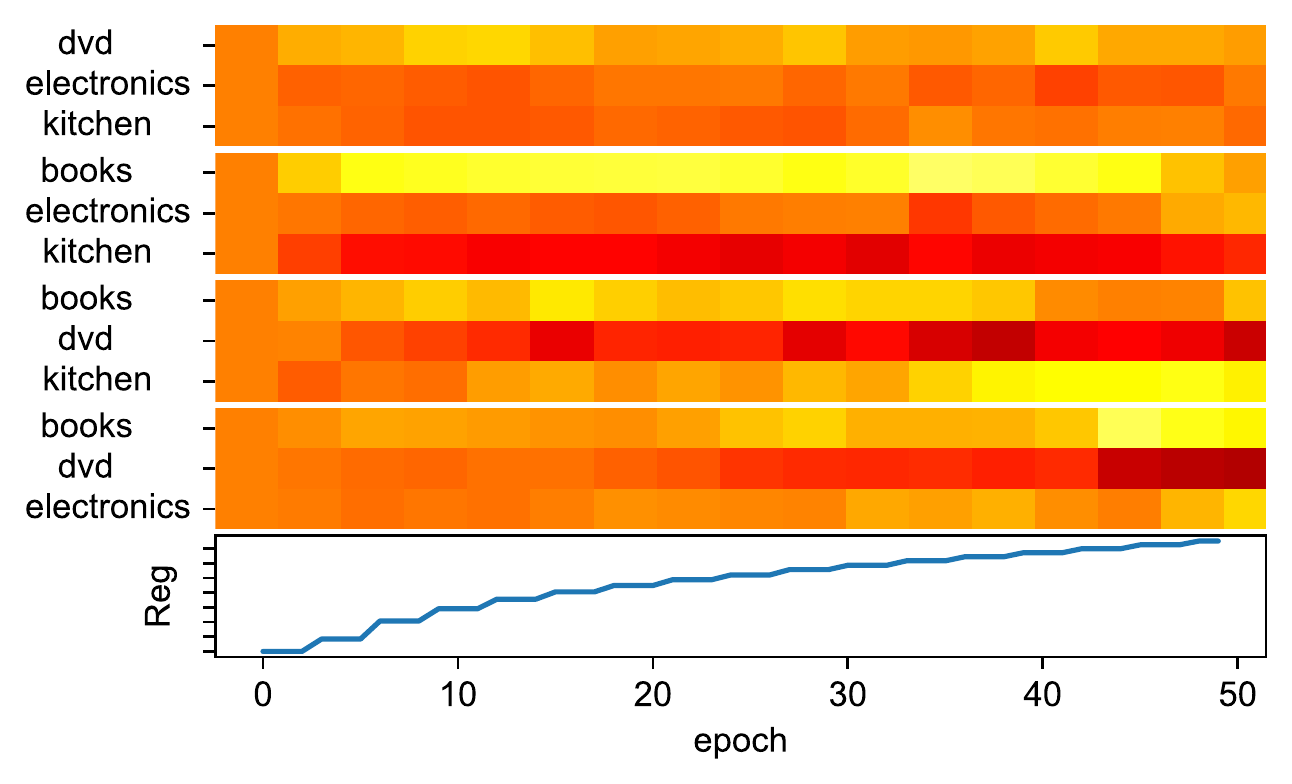}
  }
\caption{Average Test Accuracy over All Domains w.r.t Different Levels of Target Shift on Amazon Dataset for (a) Supervised MDA with Few Target Labels and (b) Unsupervised MDA with Pseudo Labels. (c) Visualization of Domain Weights and Regularization Coefficients in Different Training Epochs on the Amazon Dataset for Supervised MDA with a Drop Rate of $0.4$.}
    \label{fig:amazon}
\end{figure*}

\section{EXPERIMENTS}

In this section, we test IMDA (Sec.~\ref{algo}) on the target shift MDA benchmark proposed by \citet{shui2021aggregating}. Experimental details and additional results can be found in Appendix \ref{sec:exp_details} and \ref{sec:add_exp}.

\subsection{Datasets and Baselines}

\paragraph{Amazon Review Dataset} \citet{blitzer2007biographies} consists of positive and negative reviews from four domains: "Books", "DVD", "Electronics" and "Kitchen". We use each domain in turns as the target domain and the other three as the source domains. The data are pre-processed with the same approach as \citet{chen2012marginalized}, where the top-5000 frequent unigrams/bigrams are used as the bag-of-words features. The original Amazon Review data is label balanced. To create the \textit{target shift} data, we follow the strategy of \citet{shui2021aggregating} by randomly dropping $50\%$ negative reviews in the source domains while keeping the target domain unchanged.

\paragraph{Digits Dataset} contains four different tasks for digit recognition. MNIST \citep{lecun1998mnist} and  USPS \citep{hull1994database} are standard handwriting digits datasets. Street View House Number (SVHN) \citep{netzer2011reading} is a digits dataset consisting of Google Street View images of house numbers. SYNTH \citep{ganin2015unsupervised} is a synthetic dataset applying various transformations on SVHN. As \citet{shui2021aggregating}, we also drop $50\%$ of the data on digits 5-9 of all the sources with the target label distribution unchanged to create the \textit{target shift} data.

\paragraph{Baselines} We first introduce the baselines for unsupervised MDA. \textbf{Source} use all the labeled source datasets to train the predictor without adaptation. \textbf{DANN} \citep{ganin2016domain} is a single-source DA method, thus conducting marginal distribution matching between the merged source data and the target data. \textbf{MDAN} \citep{zhao2018adversarial}, \textbf{MDMN} \citep{li2018extracting}, \textbf{M$^3$SDA} \citep{peng2019moment} and \textbf{DARN} \citep{wen2020domain} are marginal distribution matching methods. \textbf{WADN} \citep{shui2021aggregating} first estimates the label ratio for correcting the target shift, then align the conditional distributions. Thus it also performs a (distinct) joint distribution matching method. For supervised MDA, \textbf{Source + Tar} is trained on merging the source data and the few labeled target data. Two more baselines, \textbf{RLUS} \citep{konstantinov2019robust}
and \textbf{MME} \citep{saito2019semi}, are also considered.

\subsection{Results and Analysis}
\paragraph{Results on Benchmark}
We compare our empirical results on the datasets mentioned above with the baselines provided by \citet{shui2021aggregating}. For supervised MDA, we consider the case where only a few labeled target samples are accessible. So we randomly sample $10\%$ labeled target data as the training set, and the rest $90\%$ are used as the test set. For unsupervised MDA, the above target data without labels are provided for training, and the results are tested on another unseen target dataset. We have \textit{reproduced} the results of WADN and DARN using the code provided in these two papers and directly used the results presented in \cite{shui2021aggregating} for the other baselines. The comparison is shown in Tab.~\ref{tab:umda} for unsupervised MDA and Tab.~\ref{tab:mda} for supervised MDA. 
The reproduced results and those obtained by the proposed approach (IMDA) are averaged over ten runs. Each approach's means and standard deviations are reported with the best value shown in bold. 

From the tables, we can see that IMDA has improved on "Electronics" and "Kitchen" compared to the state-of-the-art (WADN) while having a slight drop on "Books" and "DVD" for unsupervised MDA. For supervised MDA, it improves on "Electronics" and "Books." The average Accuracy has a $0.3-0.4\%$ \textit{improvement} w.r.t the WADN. For the Digits dataset, the proposed approach has significant improvement on "SVHN" while a drop is shown on either "MNIST" or "USPS" for supervised and unsupervised MDA. W.r.t the average accuracy, we have \textit{comparable results} to WADN with a slight improvement on supervised MDA and a slight decrease on unsupervised MDA. In addition, the worse performance of \textbf{Source + Tar} compared to all the divergence-based methods validates our statement for exploiting the target data in Theorem~\ref{gen_supervised}.

\paragraph{Ablation Study for Target Shift} We conduct the ablation study on the Amazon dataset to show the effectiveness of IMDA for mitigating the target shift. We adjust the drop rate for negative reviews from $10\%-90\%$. The results are shown in Fig.~\ref{fig:amazon} (a) and (b) for supervised MDA with few target labels and unsupervised MDA with pseudo labels, respectively. 

Fig~\ref{fig:amazon} (b) illustrates that IMDA can tackle significant target shift for unsupervised MDA and outperforms WADN, which needs, additionally, an estimate of the label ratio with the pseudo labels. The performance of WADN drops because the predictor is often inaccurate under a significant target shift due to an incorrect label ratio estimate, which in turn degrades the predictor's performance. 

For the supervised setting, only a \textit{few labeled target samples} are provided. IMDA performs as well as WADN with a drop rate smaller than $50\%$, while its performance degrades when the drop rate goes up. The result is consistent with the proposed theory. When the source and target joint distributions differ significantly w.r.t the label distribution, correcting the target shift with simple statistics for the label ratio given the true target labels is more efficient than minimizing the Wasserstein distance, which requires adequate data. 
\paragraph{Visualization of Domain Relations} We show, in the heat-map of Fig.~\ref{fig:amazon}(c), the evolution of domain weights and the regularization coefficient w.r.t the training epoch, where a darker color means a larger weight. Moreover, we can see an adaptive effect of avoiding over-fitting to specific domains, \eg, the weight of "DVD" was tuned down in some epochs when estimating "Electronics." 

\section{DISCUSSION AND CONCLUSION} 

\paragraph{Limitation} \citet{kumar2020understanding} and \citet{wang2022continual} conduct unsupervised DA with self-training when the target domain continuously shifts (gradual DA /continuous test-time DA). In this scenario, there often exists the phenomenon of error accumulation. IMDA also suffers from this issue due to the utilization of pseudo labels. \citet{wang2022continual} proposes two methods to mitigate the error accumulation -- the weight-averaged and augmentation-average pseudo labels. A similar weight-averaged approach can be used in IMDA to improve the algorithm's stability. 

\paragraph{Future Application in High-level Computer Vision Tasks} Complex computer vision tasks like semantic segmentation often require structural adaptation. So extending the traditional feature alignment methods needs a specific feature map design. E.g., ADVENT \citep{vu2019advent} uses the weighted self-information space. However, IMDA can be directly extended to the segmentation task using the mini-max objective w.r.t the empirical risks, where the structural information is contained in the definition of $\Ycal$. 

\paragraph{Future Application in Fairness} Recently, the ethical problems related to machine learning algorithms have gained growing concern in the community, especially regarding how to avoid implicit discrimination. Since unfair predictions are often associated with distribution shifts, domain adaptation methods are considered for improving fairness in \cite{zhao2019inherent, schumann2019transfer}. The proposed approach can also be applied or incorporated with other fair learning algorithms \citep{hardt2016equality, barocas2017fairness, liu2019implicit, shui2022fair, shui2022learning}.

 To conclude, this paper conducts an information-theoretic analysis of representation learning for both supervised and unsupervised MDA. We first provide fully algorithm-dependent bounds for MDA that apply to deep learning-based algorithms. Then, we propose a novel algorithm (IMDA) that mitigates the target shift and empirically achieves comparable performance to the previous state-of-the-art with improved memory efficiency. Notably, the proposed algorithm outperforms the previous works w.r.t a significant target shift in the more practically realistic unsupervised learning scenario. Finally, the proposed approach is fundamental and can be easily extended to other applications, \eg,  complex cv tasks and fairness. 

\subsubsection*{Acknowledgements}
We appreciate constructive feedback from anonymous reviewers and meta-reviewers. This work is supported by the Natural Sciences and Engineering Research Council of Canada (NSERC) Discovery Grant, the Collaborative Research and Development Grant from SSQ Assurances and NSERC, and the China Scholarship Council.






\bibliography{aistats}
\bibliographystyle{plainnat}

\appendix
\onecolumn

\section{DEFINITIONS}
\begin{definition}\label{def:coupling}(Coupling).
Let $(\Xcal,\mu)$ and $(\Ycal, \nu)$ be two probability spaces. Coupling $\mu, \nu$ means constructing two random variables $X$ and $Y$ on some probability space $(\Zcal, \pi)$, such that $\Zcal=\Xcal\times\Ycal$, $(\text{proj}_{\Xcal})_{\#}\pi = \mu$ and $(\text{proj}_{\Ycal})_{\#}\pi = \nu$, which means that $\pi$ is the joint measure on $\Xcal\times\Ycal$ with marginals $\mu, \nu$ on $\Xcal$ and $\Ycal$ respectively. The couple $(X,Y)$ is called a coupling of $(\mu,\nu)$.  
\end{definition}

\begin{definition}(Lipschitzness).\label{def:lip}
Given two metric spaces $(\Xcal, \rho_x)$, $(\Ycal, \rho_y)$, a function $f:\Xcal\rightarrow \Ycal$ is $L$-Lipschitz w.r.t $\Xcal$ if $\forall x, x^\prime \in \Xcal$, $\rho_y(f(x), f(x^\prime)) \leq L \rho_{x}(x, x^\prime)$. Specifically, if $\Xcal \subset \Reals^{d_1}, \Ycal \subset \Reals^{d_2}$, we often choose $\rho_x(x,x^\prime) = \|x - x^\prime\|$ and $\rho_y(\cdot,\cdot) = \|y - y^\prime\|$ w.r.t norm $\|\cdot\|$.
\end{definition}

\begin{definition}(Transportation Cost Inequality).
A probability measure $\mu$ on $(\Xcal, \rho)$ satisfies an $L^p$ transportation inequality with constant $c>0$, if for every probability $\nu \ll \mu$, we have:
\[
\mathbf{W}_p(\mu, \nu)\leq \sqrt{2c\KL(\nu\|\mu)}
\]
\end{definition}

\begin{definition}(Sub-Gaussian).\label{def:subgaussian}
 Define the cumulant generating function(CGF) of random variable $X$ as $\psi_X(\lambda) \eqdef \log\mathbb{E}[e^{\lambda(X - \mathbb{E}[X])}]$. $X$ is said to be $\sigma$-sub-Gaussian if
\begin{equation*}
    \psi_X(\lambda) \leq \frac{\lambda^2\sigma^2}{2}, \forall\lambda \in \mathbb{R}\, .
\end{equation*}
\end{definition}

\begin{definition}(Mutual Information).\label{def:mi}
Let $X$ and $Y$ be arbitrary random variables and $\KL$ denote the KL divergence. The mutual information between $X$ and $Y$ is defined as:
\[I(X;Y)\eqdef\KL(P(X,Y)\|P(X)P(Y))\]
\end{definition}

\begin{definition}(Wasserstein-$p$ Distance). \label{def:def_wasserstein}
Let the two distributions defined on the same Polish metric space $(\Xcal,\rho)$, where $\rho(.,.)$ is a metric and $p\in [1, +\infty)$, $\Pi(\mu,\nu)$ is the set of all the couplings (see Definition \ref{def:coupling}) of $\mu,\nu$. The Wasserstein distance with order $p$ between $\mu$ and $\nu$ is defined as:
\[
\begin{aligned}
\mathbf{W}_p(\mu,\nu) &\eqdef \inf_{\pi \in \Pi(\mu, \nu)} \left[ \int_{\Xcal\times \Xcal} \rho(x, x^\prime)^p d\pi(x,x^\prime) \right]^{1/p}\,.
\end{aligned}
\]
\end{definition}
If $\rho(x, x^\prime) = \|x - x^\prime\|$ and $p=1$, the above definition is the Earth-Mover(EM) or Wasserstein-$1$ distance.

\vfill

\section{TECHNICAL LEMMAS}\label{Donsker}
\begin{lemma}\textbf{(Donsker-Varadhan representation)} Let $P$ and $Q$ be two probability measures defined on a set $\mathcal{X}$. Let $\mathfrak{g} : \mathcal{X} \rightarrow \Reals$ be a measurable function, and let $\mathbb{E}_{x \sim Q}[\exp{\mathfrak{g}(x)}] \leq \infty$. Then we have \[D_{\text{KL}}(P||Q) = \sup\limits_{\mathfrak{g}}\{\mathbb{E}_{x\sim P}[\mathfrak{g}(x)] - \log\mathbb{E}_{x \sim Q}[\exp{\mathfrak{g}(x)}]\}.\]
\end{lemma}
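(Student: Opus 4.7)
The plan is to establish the two inequalities by exhibiting a variational expression through a suitable reweighting of $Q$. Throughout, write $\Psi(\mathfrak{g}) = \mathbb{E}_{x\sim P}[\mathfrak{g}(x)] - \log \mathbb{E}_{x\sim Q}[\exp \mathfrak{g}(x)]$ and first reduce to the case $P \ll Q$: if $P$ is not absolutely continuous with respect to $Q$, then $D_{\text{KL}}(P\|Q) = +\infty$, and one can produce a sequence $\mathfrak{g}_n = n \cdot \mathbf{1}_A$ with $A$ a measurable set having $P(A)>0$ and $Q(A)=0$ to force $\Psi(\mathfrak{g}_n)\to +\infty$, matching the right-hand side.

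Under $P \ll Q$, I would prove the direction $\Psi(\mathfrak{g}) \leq D_{\text{KL}}(P\|Q)$ for every admissible $\mathfrak{g}$ via the Gibbs (tilted) measure trick. Given $\mathfrak{g}$ with $\mathbb{E}_Q[e^{\mathfrak{g}}] < \infty$, define a probability measure $Q_{\mathfrak{g}}$ by
\begin{equation*}
\frac{dQ_{\mathfrak{g}}}{dQ}(x) = \frac{e^{\mathfrak{g}(x)}}{\mathbb{E}_Q[e^{\mathfrak{g}}]}.
\end{equation*}
Then a direct computation using $\log(dP/dQ) = \log(dP/dQ_{\mathfrak{g}}) + \log(dQ_{\mathfrak{g}}/dQ)$ gives
\begin{equation*}
D_{\text{KL}}(P\|Q) - D_{\text{KL}}(P\|Q_{\mathfrak{g}}) = \mathbb{E}_{x\sim P}[\mathfrak{g}(x)] - \log\mathbb{E}_{x\sim Q}[e^{\mathfrak{g}(x)}] = \Psi(\mathfrak{g}).
\end{equation*}
Since $D_{\text{KL}}(P\|Q_{\mathfrak{g}}) \geq 0$ by Jensen's inequality (Gibbs' inequality), this yields $\Psi(\mathfrak{g}) \leq D_{\text{KL}}(P\|Q)$, hence $\sup_{\mathfrak{g}} \Psi(\mathfrak{g}) \leq D_{\text{KL}}(P\|Q)$.

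For the reverse inequality, I would exhibit near-attainment by the log-likelihood ratio. Set $\mathfrak{g}^{\star} = \log(dP/dQ)$. Formally, $\mathbb{E}_Q[e^{\mathfrak{g}^{\star}}] = \mathbb{E}_Q[dP/dQ] = 1$ and $\mathbb{E}_P[\mathfrak{g}^{\star}] = D_{\text{KL}}(P\|Q)$, so $\Psi(\mathfrak{g}^{\star}) = D_{\text{KL}}(P\|Q)$. The subtlety, which I expect to be the main obstacle, is that $\mathfrak{g}^{\star}$ need not satisfy the admissibility hypothesis $\mathbb{E}_Q[e^{\mathfrak{g}^{\star}}]<\infty$ when $P$ is not $\sigma$-finite dominated in the right way, and more importantly $\mathfrak{g}^{\star}$ may take the value $-\infty$ on $\{dP/dQ=0\}$. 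I would handle this by a standard truncation: define $\mathfrak{g}_n = (\log(dP/dQ)) \wedge n$ on $\{dP/dQ > 1/n\}$ and $\mathfrak{g}_n = -n$ elsewhere. Each $\mathfrak{g}_n$ is bounded (hence admissible), and an application of monotone/dominated convergence on both $\mathbb{E}_P[\mathfrak{g}_n]$ and $\mathbb{E}_Q[e^{\mathfrak{g}_n}]$ yields $\Psi(\mathfrak{g}_n) \to D_{\text{KL}}(P\|Q)$, which together with the upper bound closes the proof. If $D_{\text{KL}}(P\|Q)=+\infty$, the same truncated sequence still drives $\Psi(\mathfrak{g}_n) \to +\infty$ since $\mathbb{E}_P[\mathfrak{g}_n]\to +\infty$ while $\mathbb{E}_Q[e^{\mathfrak{g}_n}]$ stays bounded above by $1 + Q(\{dP/dQ > 1/n\}) \leq 2$, establishing the equality in the extended-real sense.
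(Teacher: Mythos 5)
The paper does not actually prove this lemma; it defers to Corollary 4.15 of Boucheron et al.\ and Theorem 4.1 of Duchi's lecture notes. Your argument is correct and is essentially the standard proof given in those references: the tilted (Gibbs) measure $Q_{\mathfrak{g}}$ plus nonnegativity of $D_{\text{KL}}(P\|Q_{\mathfrak{g}})$ gives the upper bound, and the truncated log-likelihood ratio gives attainment, with the degenerate case $P\not\ll Q$ handled separately. The only points worth tightening are routine: the rearrangement $D_{\text{KL}}(P\|Q)-D_{\text{KL}}(P\|Q_{\mathfrak{g}})=\Psi(\mathfrak{g})$ should be stated only when $D_{\text{KL}}(P\|Q)<\infty$ (otherwise the bound is vacuous), and in the truncation step one should note that the boundary term $-n\,P(\{dP/dQ\leq 1/n\})$ tends to $0$ (by dominated convergence applied to $n\,(dP/dQ)\,\mathbf{1}_{\{dP/dQ\leq 1/n\}}\leq 1$), not merely that it is bounded.
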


The proof of the above Lemma can be found in Corollary 4.15 \citep{boucheron2013concentration} or Theorem 4.1 \citep{duchi2016lecture}.


\begin{lemma}\label{lemma_sub_gaussian}
Let $X_1,X_2,...X_n$ be independent $\sigma_i$-sub-Gaussian random variables. Then
\[
\EE \left[\exp\left(\lambda\sum_{i=1}^n (X_i -\EE[X_i])\right)\right] \leq \exp\left(\frac{\lambda^2\sum_{i=1}^n \sigma_i^2}{2}\right), \forall \lambda \in \Reals
\]
that is, $\sum_{i=1}^n X_i$ is $\sqrt{\sum_{i=1}^n \sigma_i^2}$-sub-Gaussian. 
\end{lemma}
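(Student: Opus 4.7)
The plan is to prove the lemma by a direct MGF (moment generating function) calculation, using the definition of sub-Gaussianity together with independence of the $X_i$. The centering by $\EE[X_i]$ is cosmetic: since $X_i$ being $\sigma_i$-sub-Gaussian is a statement about the centered variable $X_i - \EE[X_i]$ (via the CGF $\psi_{X_i}$ defined in Definition~\ref{def:subgaussian}), I can work directly with $\tilde{X}_i \eqdef X_i - \EE[X_i]$ and note that $\sum_i X_i - \EE[\sum_i X_i] = \sum_i \tilde{X}_i$.

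The first step is to write, for any fixed $\lambda \in \Reals$,
\[
\EE\!\left[\exp\!\left(\lambda \sum_{i=1}^n \tilde{X}_i\right)\right]
= \EE\!\left[\prod_{i=1}^n \exp(\lambda \tilde{X}_i)\right]
= \prod_{i=1}^n \EE\!\left[\exp(\lambda \tilde{X}_i)\right],
\]
where the last equality uses the independence of the $X_i$ (hence of the $\tilde X_i$, and hence of the $\exp(\lambda \tilde X_i)$).

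Next, I apply the sub-Gaussian hypothesis factor-by-factor. For each $i$, Definition~\ref{def:subgaussian} gives $\EE[\exp(\lambda \tilde{X}_i)] = \exp(\psi_{X_i}(\lambda)) \le \exp(\lambda^2 \sigma_i^2 / 2)$. Multiplying the $n$ bounds together yields
\[
\prod_{i=1}^n \EE[\exp(\lambda \tilde X_i)] \le \prod_{i=1}^n \exp\!\left(\frac{\lambda^2 \sigma_i^2}{2}\right) = \exp\!\left(\frac{\lambda^2 \sum_{i=1}^n \sigma_i^2}{2}\right),
\]
which is exactly the claimed MGF bound. Re-expressing this in terms of the CGF of $\sum_i X_i$, we obtain $\psi_{\sum_i X_i}(\lambda) \le \lambda^2 (\sum_i \sigma_i^2)/2$ for all $\lambda \in \Reals$, which by Definition~\ref{def:subgaussian} is the definition of $\sum_i X_i$ being $\sqrt{\sum_i \sigma_i^2}$-sub-Gaussian.

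There is no real obstacle here: the entire argument is a one-line factorisation (independence) followed by a term-wise application of the sub-Gaussian MGF bound. The only subtlety worth flagging in the write-up is that independence of the $X_i$ suffices to factor the expectation of the product of the exponentials (no need to invoke anything stronger such as mutual independence of functions), and that the bound holds uniformly in $\lambda \in \Reals$, matching the definition needed downstream.
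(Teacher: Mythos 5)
Your proof is correct and is essentially identical to the paper's: both factor the moment generating function using (mutual) independence of the $X_i$ and then apply the sub-Gaussian bound $\EE[\exp(\lambda(X_i-\EE X_i))]\le\exp(\lambda^2\sigma_i^2/2)$ to each factor. The only quibble is your parenthetical downplaying mutual independence — the factorisation of the $n$-fold product does rely on the $X_i$ being mutually (not merely pairwise) independent, which is the standard reading of the hypothesis and what the paper also uses.
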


\begin{proof}
$X_i, \forall i=\{1,2,...,n\}$ is $\sigma_i$-sub-Gaussian, from Definition~\ref{def:subgaussian}, we have $\EE[\exp{(\lambda(X_i-\EE X_i))}] \leq \exp{(\frac{\lambda^2\sigma_i^2}{2})}$.

Since the variables are independent, we have
\[
\EE \left[\exp\left(\lambda\sum_{i=1}^n (X_i -\EE[X_i])\right)\right] = \prod_{i=1}^n \EE[\exp{(\lambda(X_i-\EE X_i))}] \leq \prod_{i=1}^n \exp{(\frac{\lambda^2\sigma_i^2}{2})} = \exp\left({\frac{\lambda^2\sum_{i=1}^n\sigma_i^2}{2}}\right)
\]
\end{proof}

\begin{lemma}\label{lemma_entropy}
For independent variables $X$ and $Y$, we have the following entropy inequality:
\[\max\{H(X),H(Y)\} \leq H(X+Y) \leq H(X,Y) = H(X) + H(Y)\]
\end{lemma}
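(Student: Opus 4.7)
The plan is to establish the chain of inequalities by decomposing it into three elementary facts about (discrete) Shannon entropy: the chain rule, the fact that entropy of a deterministic function is no larger than the entropy of its argument, and the fact that conditioning reduces entropy. Each piece is standard, and independence of $X$ and $Y$ enters at exactly two places.

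First, I would dispatch the rightmost equality. By the chain rule, $H(X,Y) = H(X) + H(Y \mid X)$. Since $X$ and $Y$ are independent, $H(Y \mid X) = H(Y)$, giving $H(X,Y) = H(X) + H(Y)$. This needs no further comment.

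Next, I would handle the middle inequality $H(X+Y) \leq H(X,Y)$. The key observation is that $X+Y$ is a deterministic function of the pair $(X,Y)$. Define $\varphi(x,y) = x+y$. Then $H(\varphi(X,Y)) \leq H(X,Y)$, which follows either from the data processing inequality applied to the Markov chain $(X,Y) \to X+Y$, or directly from $H(X,Y) = H(X+Y, (X,Y)) = H(X+Y) + H(X,Y \mid X+Y) \geq H(X+Y)$, since conditional entropy is nonnegative for discrete random variables.

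For the leftmost inequality $\max\{H(X), H(Y)\} \leq H(X+Y)$, I would use the following trick: given $X$, knowing $X+Y$ is equivalent to knowing $Y$, because the map $y \mapsto x+y$ is a bijection for any fixed $x$. Hence $H(X+Y \mid X) = H(Y \mid X)$, and by independence $H(Y \mid X) = H(Y)$. Combining with the fact that conditioning reduces entropy, I obtain
\[
H(X+Y) \geq H(X+Y \mid X) = H(Y \mid X) = H(Y).
\]
The symmetric argument conditioning on $Y$ gives $H(X+Y) \geq H(X)$, so taking the maximum yields the desired bound.

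The only potential subtlety — and it is minor — is ensuring the bijection argument is stated cleanly so that $H(X+Y \mid X = x) = H(Y \mid X = x)$ for every $x$ in the support of $X$; integrating (summing) against $P_X$ then gives the conditional-entropy identity. No novel ideas are needed, and the entire lemma should fit in a short paragraph once these three observations are assembled.
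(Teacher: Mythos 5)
Your proof is correct and follows essentially the same route as the paper's: the left inequality via ``conditioning reduces entropy'' together with $H(X+Y\mid X)=H(Y\mid X)=H(Y)$ by independence, and the right inequality via the fact that $X+Y$ is a deterministic function of $(X,Y)$ so $H(X+Y)\leq H(X,Y)=H(X)+H(Y)$. If anything, your explicit remark that $y\mapsto x+y$ is a bijection for fixed $x$ makes the step $H(X+Y\mid X)=H(Y\mid X)$ slightly more careful than the paper's version, which asserts it without comment.
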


\begin{proof}
Let $Z=X+Y$, from the definition of mutual information, we can obtain the well-known result that "conditioning reduces entropy". $I(Z;X) = H(Z) - H(Z|X) \geq 0$, then $H(Z) \geq H(Z|X)$. Similarly, we have $H(Z) \geq H(Z|Y)$.
$X, Y$ are independent, so we can obtain $H(Z|X) = H(Y|X) = H(Y)$ and $H(Z|Y)=H(X|Y)=H(Y)$. Consequently, we have proved that $H(X+Y) \geq \max\{H(X), H(Y)\}$.

To prove the right-hand side, we need first prove that the entropy of a function of random variables is smaller than the joint entropy of the random variables.
Denote $Z=f(X,Y) = X + Y$, we have
\[
I((X,Y);Z) = H(Z) - H(Z|(X,Y)) = H(Z) = H(X,Y) - H((X,Y)|Z)\,,
\]
where the above equation is obtained with the fact that $H(Z|(X,Y)) = 0$.

From the definition of the entropy, we have $H((X,Y)|Z) \geq 0$. Combine the independence of $X,Y$, so we can get $H(Z) \leq H(X,Y) = H(X) + H(Y)$.
\end{proof}

\newpage
\section{MISSING PROOFS IN SECTION 4}

\subsection{Proof of Theorem~\ref{thm:sup_wasserstein}}\label{proof:sup_pop_bound}
\begin{theorem*}
$\forall (u,v) \in \Ucal\times\Vcal$, and $\forall \vect{\alpha} \in \Delta_N$, if Assumption~\ref{gen_assump} is satisfied, then 
\[
|R_{\Tcal}(u,v) - R_{\Scal^{\vect{\alpha}}}(u,v)| \leq  \mathbf{W}_1(\tilde{\Tcal}_u, \tilde{\Scal}^{\vect{\alpha}}_u) \leq \mathbf{W}_1(\Tcal, \Scal^{\vect{\alpha}})\,,
\]
where the first $\mathbf{W}_1$ distance is defined on the metric space $(\tilde{\Zcal}, \rho_{\tilde{z}})$, for $\rho_{\tilde{z}}(\tilde{z}, \tilde{z}^\prime) = \ell(y,y^\prime) + LM \rho_{\tilde{x}}(\tilde{x}, \tilde{x}^\prime)$, and the second one is defined on metric space $(\Zcal, \rho_z)$, for $\rho_{z}(z, z^\prime)= \ell(y,y^\prime) + LMK \rho_{x}(x, x^\prime)$.
\end{theorem*}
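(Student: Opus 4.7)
My plan is to handle the two inequalities separately: the first by Kantorovich--Rubinstein duality, the second by a pushforward/change-of-variables argument on couplings.

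For the first inequality, note that
\[
R_{\Tcal}(u,v) - R_{\Scal^{\vect{\alpha}}}(u,v) = \EE_{\tilde{Z}\sim \tilde{\Tcal}_u}\bigl[\ell(h(v,\tilde{X}),Y)\bigr] - \EE_{\tilde{Z}\sim \tilde{\Scal}^{\vect{\alpha}}_u}\bigl[\ell(h(v,\tilde{X}),Y)\bigr].
\]
Define $f(\tilde{z}) \eqdef \ell(h(v,\tilde{x}), y)$. The core lemma I need is that $f$ is $1$-Lipschitz with respect to the metric $\rho_{\tilde{z}}(\tilde{z},\tilde{z}') = \ell(y,y') + LM\rho_{\tilde{x}}(\tilde{x},\tilde{x}')$. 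Using the triangle inequality,
\[
|f(\tilde{z}) - f(\tilde{z}')| \leq |\ell(h(v,\tilde{x}),y) - \ell(h(v,\tilde{x}'),y)| + |\ell(h(v,\tilde{x}'),y) - \ell(h(v,\tilde{x}'),y')|.
\]
The first term is bounded by $ML\rho_{\tilde{x}}(\tilde{x},\tilde{x}')$ using successively the $M$-Lipschitzness of $\ell$ in its first argument and the $L$-Lipschitzness of $h$. For the second term, I use the symmetry and triangle inequality of $\ell$: from $\ell(a,b) \leq \ell(a,c)+\ell(c,b) = \ell(a,c)+\ell(b,c)$ (and the symmetric counterpart), I obtain $|\ell(a,b)-\ell(a,c)| \leq \ell(b,c)$, so the second term is bounded by $\ell(y,y')$. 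Combined, $|f(\tilde z)-f(\tilde z')| \leq \rho_{\tilde z}(\tilde z,\tilde z')$. Applying Kantorovich--Rubinstein duality on $(\tilde{\Zcal},\rho_{\tilde{z}})$ then yields $|R_{\Tcal}(u,v)-R_{\Scal^{\vect{\alpha}}}(u,v)| \leq \mathbf{W}_1(\tilde{\Tcal}_u,\tilde{\Scal}^{\vect{\alpha}}_u)$.

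For the second inequality, I would use the transport formulation. Let $\gamma \in \Pi(\Tcal,\Scal^{\vect{\alpha}})$ be any coupling on $\Zcal\times\Zcal$, and define the map $\Phi_u(x,y) \eqdef (g(u,x),y)$. Then $(\Phi_u\times\Phi_u)_{\#}\gamma$ is a coupling of $\tilde{\Tcal}_u$ and $\tilde{\Scal}^{\vect{\alpha}}_u$, since pushing forward commutes with taking marginals. Using the $K$-Lipschitzness of $g(u,\cdot)$,
\[
\int \rho_{\tilde z}(\tilde z,\tilde z')\,d(\Phi_u\times\Phi_u)_{\#}\gamma = \int \bigl[\ell(y,y') + LM\rho_{\tilde{x}}(g(u,x),g(u,x'))\bigr]\,d\gamma(z,z') \leq \int \rho_{z}(z,z')\,d\gamma(z,z').
\]
Taking the infimum over $\gamma \in \Pi(\Tcal,\Scal^{\vect{\alpha}})$ on the right and noting that the left side is at least $\mathbf{W}_1(\tilde{\Tcal}_u,\tilde{\Scal}^{\vect{\alpha}}_u)$ (since it is the cost of a particular coupling of $\tilde{\Tcal}_u,\tilde{\Scal}^{\vect{\alpha}}_u$) gives the desired $\mathbf{W}_1(\tilde{\Tcal}_u,\tilde{\Scal}^{\vect{\alpha}}_u) \leq \mathbf{W}_1(\Tcal,\Scal^{\vect{\alpha}})$.

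The main obstacle, such as it is, is the verification that $\ell$ being symmetric and satisfying the triangle inequality forces $|\ell(a,b)-\ell(a,c)|\leq \ell(b,c)$; once that small calculus is unlocked, the loss is seen to be $1$-Lipschitz with respect to the product metric $\rho_{\tilde z}$ and the rest is standard duality plus a pushforward argument. It is also worth noting that the combined-source risk is bilinear in $\vect{\alpha}$, so the $\sum_i \alpha_i \EE_{\Scal_i}[\cdot]$ reduces to an expectation over the single mixture distribution $\Scal^{\vect{\alpha}}$, which is what makes the KR duality apply cleanly to the difference.
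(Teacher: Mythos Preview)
Your proof is correct and matches the paper's argument in substance. The only cosmetic difference is that for the first inequality the paper works directly in the primal (coupling) formulation---bounding $\int|\ell(h(v,\tilde x),y)-\ell(h(v,\tilde x'),y')|\,d\pi$ for an arbitrary coupling $\pi$ and then taking the infimum---whereas you phrase the identical Lipschitz computation via Kantorovich--Rubinstein duality; the intermediate estimate $|\ell(h(v,\tilde x),y)-\ell(h(v,\tilde x'),y')|\le\rho_{\tilde z}(\tilde z,\tilde z')$ is exactly the same in both. For the second inequality your pushforward argument with $\Phi_u$ is precisely what the paper does (it speaks of the ``corresponding coupling on the representation space'' induced by the optimal $\gamma^*$), so the two proofs coincide.
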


\begin{proof}
For any $\pi \in \Pi(\tilde{\Tcal}_u,\tilde{\Scal}^{\vect{\alpha}}_u)$, we have 
\[
\begin{aligned}
|R_{\Tcal}(u, v) - R_{\Scal^{\vect{\alpha}}}(u, v)| & = |\EE_{\tilde{Z}\sim \tilde{\Tcal}_u} \ell(h(v, \tilde{X}), Y) - \EE_{\tilde{Z}^\prime\sim \tilde{\Scal}^{\vect{\alpha}}_u} \ell(h(v, \tilde{X}^\prime), Y^\prime)| \\
&=  |\int_{\tilde{\Zcal}\times\tilde{\Zcal}} \ell(h(v, \tilde{x}), y) - \ell(h(v, \tilde{x}^\prime), y^\prime) d\pi(\tilde{z}, \tilde{z}^\prime)|\\
&\leq \int_{\tilde{Z}\times\tilde{Z}} |\ell(h(v, \tilde{x}), y) - \ell(h(v, \tilde{x}^\prime), y^\prime)|d\pi(\tilde{z}, \tilde{z}^\prime)\\
&= \int_{\tilde{Z}\times\tilde{Z}} |\ell(h(v, \tilde{x}), y) - \ell(h(v,\tilde{x}), y^\prime) + \ell(h(v, \tilde{x}), y^\prime) - \ell(h(v, \tilde{x}^\prime), y^\prime)|d\pi(\tilde{z}, \tilde{z}^\prime)\\
&\leq \int_{\tilde{Z}\times\tilde{Z}} |\ell(h(v, \tilde{x}), y) - \ell(h(v,\tilde{x}), y^\prime)| + |\ell(h(v, \tilde{x}), y^\prime) - \ell(h(v, \tilde{x}^\prime), y^\prime)|d\pi(\tilde{z}, \tilde{z}^\prime)\\
&\leq \int_{\tilde{Z}\times\tilde{Z}} \left(\ell(y, y^\prime) + M \rho_y(h(v,\tilde{x}), h(v,\tilde{x}^\prime)) \right)d\pi(\tilde{z}, \tilde{z}^\prime) \\
&\leq \int_{\tilde{Z}\times\tilde{Z}} \left(\ell(y, y^\prime) +  LM\rho_{\tilde{x}}(\tilde{x},\tilde{x}^\prime)\right) d\pi(\tilde{z}, \tilde{z}^\prime)
\end{aligned}
\]

The first two equalities hold with the definition of the population risk and Definition \ref{def:coupling} for coupling. The first inequality is obtained with Jensen's inequality for absolute function. And the second to last inequality is derived using the triangle inequality of the loss function and the Lipchitzness in Assumption~\ref{gen_assump}.

Let the metric on $\tilde{\Zcal}$ be $\rho_{\tilde{z}}(\tilde{z}, \tilde{z}^\prime) = \ell(y, y^\prime) + LM \rho_{\tilde{x}}(\tilde{x}, \tilde{x}^\prime)$.

\[
\begin{aligned}
\int_{\tilde{\Zcal}\times\tilde{\Zcal}} \rho_{\tilde{z}}(\tilde{z}, \tilde{z}^\prime) d\pi(\tilde{z}, \tilde{z}^\prime) 
&= \int_{\Zcal \times \Zcal} \left(\ell(y, y^\prime)  + LM \rho_{\tilde{x}}(\tilde{x}, \tilde{x}^\prime)\right) d\pi(\tilde{z}, \tilde{z}^\prime)\\
&= \int_{\Zcal \times \Zcal} \left(\ell(y, y^\prime)  + LM \rho_{\tilde{x}}(g(u,x), g(u, x^\prime))\right) d\gamma(z, z^\prime)\\
&\leq \int_{\Zcal \times \Zcal} \left(\ell(y, y^\prime) + LMK \rho_x(x , x^\prime)\right) d\gamma(z, z^\prime)\\
\end{aligned}
\]
Let the metric on $\Zcal$ be $\rho_{z}(z, z^\prime) = \ell(y, y^\prime) + LMK \rho_{x}(x, x^\prime)$. 

Let the optimal coupling on the representation space be $\pi_u^* = \argmin\limits_{\pi \in \Pi(\tilde{\Tcal}_u, \tilde{\Scal}_u^{\vect{\alpha}})} \int_{\tilde{\Zcal}\times\tilde{\Zcal}} \rho_{\tilde{z}}(\tilde{z}, \tilde{z}^\prime) d\pi(\tilde{z}, \tilde{z}^\prime)$ and the corresponding coupling on the original example space be $\gamma^*_u$. 

Let the optimal coupling on the original example space be $\gamma^* = \argmin\limits_{\gamma \in\Pi(\Tcal,\Scal^{\vect{\alpha}})} \int_{\Zcal\times\Zcal} \rho_z(z, z^\prime) d\gamma(z, z^\prime)$ and the corresponding coupling on the representation space be $\pi_u$.
So we have:
\[
\begin{aligned}
|R_{\Tcal}(u,v) - R_{\Scal^{\vect{\alpha}}}(u,v)| &\leq \int_{\tilde{\Zcal}\times\tilde{\Zcal}} \rho_{\tilde{z}}(\tilde{z}, \tilde{z}^\prime) d\pi_u^*(\tilde{z}, \tilde{z}^\prime) = \mathbf{W}_1(\tilde{\Tcal}_u, \tilde{\Scal}^{\vect{\alpha}}_u)\\
&\leq \int_{\tilde{\Zcal}\times\tilde{\Zcal}} \rho_{\tilde{z}}(\tilde{z}, \tilde{z}^\prime) d\pi_u(\tilde{z}, \tilde{z}^\prime) \leq \int_{\Zcal\times\Zcal} \rho_z(z, z^\prime) d\gamma^*(z, z^\prime) = \mathbf{W}_1(\Tcal, \Scal^{\vect{\alpha}})
\end{aligned}
\]

The last inequality is obtained since $\pi_u\neq\pi_u^*$ may not be the optimal coupling on the representation space.

\end{proof}

\newpage
\subsection{Proof of Theorem~\ref{gen_supervised}}\label{proof:sup_gen_bound}
\begin{theorem*}
If Assumption~\ref{gen_assump} and \ref{assum_gauss} are satisfied, then we can bound the generalization gap of the supervised multi-source domain adaptation algorithm $\Acal$ for any $\vect{\alpha}\in \Delta_N$ and $0 \leq \epsilon \leq 1$ with: 
\[
\begin{aligned}
gen(\Tcal, \Acal) &\leq \sigma\sqrt{2(\frac{(1-\epsilon)^2}{m_t} + \sum_{i=1}^N\frac{\epsilon^2\alpha_i^2}{m_i}) I(U,V;S^{\vect{\alpha}}, T)} + \sigma^\prime\sqrt{2\epsilon^2(\sum_{i=1}^N\frac{\alpha_i^2}{m_i} + \frac{1}{m_t})I(U;S^{\vect{\alpha}},T)}\, .
\end{aligned}
\]
\end{theorem*}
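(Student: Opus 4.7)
The plan is to decompose the gap into a ``risk part'' and a ``Wasserstein part'' and bound each via the Donsker--Varadhan representation (Lemma in Appendix~\ref{Donsker}) together with sub-Gaussian concentration. By linearity of expectation,
\[
gen(\Tcal, \Acal) \;\le\; \underbrace{\EE\big[(1{-}\epsilon)(R_\Tcal(U,V) - \hat R_\Tcal(U,V)) + \epsilon(R_{\Scal^{\vect{\alpha}}}(U,V) - \hat R_{\Scal^{\vect{\alpha}}}(U,V))\big]}_{=:\,A_1} \;+\; \epsilon\,\underbrace{\EE\big[\mathbf{W}_1(\tilde\Tcal_U,\tilde\Scal_U^{\vect{\alpha}}) - \hat{\mathbf{W}}_1(\tilde\Tcal_U,\tilde\Scal_U^{\vect{\alpha}})\big]}_{=:\,A_2}.
\]

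For $A_1$, fix $(u,v)$ and consider $\Phi(u,v;S^{\vect{\alpha}},T) := (1{-}\epsilon)[R_\Tcal - \hat R_\Tcal] + \epsilon[R_{\Scal^{\vect{\alpha}}} - \hat R_{\Scal^{\vect{\alpha}}}]$. This is a weighted, centered sum of independent losses, each $\sigma$-sub-Gaussian by Assumption~\ref{assum_gauss}. The weight on target sample $j$ is $(1{-}\epsilon)/m_t$, and on source sample $(i,j)$ it is $\epsilon\alpha_i/m_i$, so by Lemma~\ref{lemma_sub_gaussian} the map $\Phi(u,v;\cdot)$ is sub-Gaussian with parameter $\sigma\sqrt{(1-\epsilon)^2/m_t + \epsilon^2\sum_i \alpha_i^2/m_i}$. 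I then apply Donsker--Varadhan to $P_{U,V,S^{\vect{\alpha}},T}$ vs.\ $P_{U,V}\otimes P_{S^{\vect{\alpha}},T}$ with $\mathfrak{g}=\lambda\Phi$; under the product measure $\EE[\Phi]=0$ and the sub-Gaussian MGF bound applies, so optimizing $\lambda$ yields the first term of the statement with $I(U,V;S^{\vect{\alpha}},T)$.

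For $A_2$, the obstacle is the supremum inside $\mathbf{W}_1$. The key trick is to introduce the \emph{true} maximizer $v^{\prime *}(u) \in \arg\max_{v'\in\Vcal'}\{\EE_{\tilde\Tcal_u}\tilde f(v',\cdot) - \EE_{\tilde\Scal_u^{\vect{\alpha}}}\tilde f(v',\cdot)\}$, which depends on $u$ only, not on the data. Since $\hat{\mathbf{W}}_1$ is itself a supremum,
\[
\mathbf{W}_1(\tilde\Tcal_u,\tilde\Scal_u^{\vect{\alpha}}) - \hat{\mathbf{W}}_1 \;\le\; \Psi(u;S^{\vect{\alpha}},T) := \big[\EE_{\tilde\Tcal_u}\tilde f(v^{\prime *}(u),\cdot) - \tfrac{1}{m_t}\sum_j \tilde f(v^{\prime *}(u),\tilde z_j^t)\big] - \big[\EE_{\tilde\Scal_u^{\vect{\alpha}}}\tilde f(v^{\prime *}(u),\cdot) - \sum_i\tfrac{\alpha_i}{m_i}\sum_j \tilde f(v^{\prime *}(u),\tilde z_{ij}^s)\big].
\]
By Assumption~\ref{assum_gauss}, $\tilde f(v^{\prime *}(u),Z)$ is $\sigma'$-sub-Gaussian under $\Tcal$ and each $\Scal_i$. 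Hence $\Psi(u;\cdot)$ is $\sigma'\sqrt{1/m_t + \sum_i \alpha_i^2/m_i}$-sub-Gaussian (again by Lemma~\ref{lemma_sub_gaussian}). Because $\Psi$ depends on the algorithm output \emph{only through $U$}, I apply Donsker--Varadhan to $P_{U,S^{\vect{\alpha}},T}$ vs.\ $P_U\otimes P_{S^{\vect{\alpha}},T}$ with $\mathfrak{g}=\lambda\Psi$ and optimize $\lambda$, obtaining $A_2 \le \sigma'\sqrt{2(1/m_t + \sum_i \alpha_i^2/m_i)\,I(U;S^{\vect{\alpha}},T)}$.

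Multiplying the $A_2$-bound by $\epsilon$ and summing with $A_1$ yields the theorem. The main obstacle is precisely the supremum in $A_2$; the decoupling via the \emph{population} maximizer $v^{\prime*}(u)$ is what lets the second term depend on the smaller $I(U;S^{\vect{\alpha}},T)$ rather than a joint information involving $V$ or the auxiliary parameter $V'$. Everything else is routine: linearity, independence of samples across domains, and the standard Xu--Raginsky machinery.
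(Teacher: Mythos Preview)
Your proposal is correct and reaches the same bound as the paper, but your treatment of the Wasserstein term $A_2$ differs from the paper's in a way worth noting. The paper applies Donsker--Varadhan directly to the random variable $f'(U,S^{\vect{\alpha}},T)=\hat{\mathbf{W}}_1(\tilde\Tcal_U,\tilde\Scal_U^{\vect{\alpha}})$, then argues that ``the supremum does not affect the sub-Gaussianity'' and computes $\EE_{\tilde U,S,T}[\hat{\mathbf{W}}_1]$ by swapping $\sup_{v'}$ and $\EE_{S,T}$ (which in fact only yields an inequality, though in the direction that still suffices). You instead linearize the difference $\mathbf{W}_1-\hat{\mathbf{W}}_1$ by plugging in the \emph{population} maximizer $v^{\prime *}(u)$ and using that $\hat{\mathbf{W}}_1$ dominates its value at any fixed $v'$. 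This reduces $A_2$ to a genuine centered average of independent $\sigma'$-sub-Gaussian terms, so Lemma~\ref{lemma_sub_gaussian} and the Xu--Raginsky argument apply without any additional claim about suprema of sub-Gaussian families. Your route is therefore a bit more transparent on exactly why the second term involves only $I(U;S^{\vect{\alpha}},T)$: $v^{\prime *}(u)$ is measurable with respect to $u$ alone, so $\Psi$ couples to the data only through $U$. Both arguments are short and yield the identical constants; the $A_1$ part is handled the same way in both.
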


\begin{proof}

\[
\begin{aligned}
gen(\Tcal, \Acal) &\leq \EE_{U,V,S^{\vect{\alpha}}, T} [R^{\epsilon}_{\Scal^{\vect{\alpha}}, \Tcal}(U, V) - \hat{R}^{\epsilon}_{\Scal^{\vect{\alpha}}, \Tcal}(U, V)] \\
&=  \EE_{U,V,S^{\vect{\alpha}}, T} [(1-\epsilon) (R_{\Tcal}(U,V)-\hat{R}_{\Tcal}(U,V)) + \epsilon (R_{\Scal^{\vect{\alpha}}}(U, V) - \hat{R}_{\Scal^{\vect{\alpha}}}(U,V))] \\
&\quad\quad + \epsilon \EE_{U,V,S^{\vect{\alpha}}, T} [\mathbf{W}_1(\tilde{\Tcal}_U, \tilde{\Scal}^{\vect{\alpha}}_U) -\hat{\mathbf{W}}_1(\tilde{\Tcal}_U, \tilde{\Scal}^{\vect{\alpha}}_U)]\\
\end{aligned}
\]

\paragraph{Step 1 Bounding the Empirical Risks}
\[
\begin{aligned}
B_r = \EE_{U,V,S^{\vect{\alpha}}, T} \left[(1 - \epsilon) [R_{\Tcal}(U,V)-\hat{R}_{\Tcal}(U,V)] + \epsilon [R_{\Scal^{\vect{\alpha}}}(U, V) - \hat{R}_{\Scal^{\vect{\alpha}}}(U,V)] \right]
\end{aligned}
\]
$(U,V)\in \Ucal\times\Vcal$ are random variables outputs by the supervised MDA algorithm, which depend on the input datasets $S_{1:N}$, $T$ and a fixed $\vect{\alpha}$ (not random). Let $(\tilde{U},\tilde{V})$ be an independent copy of $(U,V)$ such that $(\tilde{U},\tilde{V})\indep (S_{1:N}, T)$. 
Then let us define 
\[
\begin{aligned}
f(U,V, S^{\vect{\alpha}}, T) &\eqdef (1-\epsilon) \hat{R}_{\Tcal}(U, V) + \epsilon \hat{R}_{S^{\vect{\alpha}}}(U, V)\\
&= (1-\epsilon)\frac{1}{m_t} \sum_{j=1}^{m_t} \ell(h(V,g(U,X_j^t)),Y_j^t) + \epsilon \sum_{i=1}^N \frac{\alpha_i}{m_i} \sum_{j=1}^{m_i}\ell(h(V, g(U, X_{i,j}^s)), Y_{i,j}^s)
\end{aligned}
\]

Since the loss function is $\sigma$-sub-Gaussian (Assumption~\ref{assum_gauss}),  applying Lemma~\ref{lemma_sub_gaussian}, we can obtain that $f(\tilde{U}, \tilde{V}, S^{\vect{\alpha}},T)$ is $\sqrt{\frac{(1-\epsilon)^2}{m_t} + \epsilon^2\sum_{i=1}^N\frac{\alpha_i^2}{m_i}}\sigma$-sub-Gaussian.
\[
\begin{aligned}
I(U,V;S^{\vect{\alpha}},T) &= \KL(P_{U,V,S^{\vect{\alpha}}, T}\|P_{U,V} P_{S^{\vect{\alpha}},T})\\
&=\sup_{\mathfrak{g}}\{\EE_{U,V,S^{\vect{\alpha}},T} \mathfrak{g}(U,V,S^{\vect{\alpha}}, T) - \log\EE_{\tilde{U}, \tilde{V},S^{\vect{\alpha}}, T}[\exp^{\mathfrak{g}(\tilde{U}, \tilde{V},S^{\vect{\alpha}}, T)}]\}\\
&\geq \lambda \EE_{U,V,S^{\vect{\alpha}}, T}[f(U,V,S^{\vect{\alpha}}, T)] -\lambda\EE_{\tilde{U}, \tilde{V},S^{\vect{\alpha}}, T}[f(\tilde{U}, \tilde{V},S^{\vect{\alpha}}, T)] - \psi_{\tilde{U}, \tilde{V},S^{\vect{\alpha}}, T}(\lambda), \forall \lambda\in \Reals\\
&\geq \lambda \EE_{U,V,S^{\vect{\alpha}}, T}[f(U,V,S^{\vect{\alpha}}, T)] -\lambda\EE_{\tilde{U}, \tilde{V},S^{\vect{\alpha}}, T}[f(\tilde{U}, \tilde{V},S^{\vect{\alpha}}, T)] - \frac{\lambda^2\sigma^2(\frac{(1-\epsilon)^2}{m_t} + \epsilon^2\sum_{i=1}^N\frac{\alpha_i^2}{m_i})}{2}
\end{aligned}
\]

We have:
\[
\begin{aligned}
\EE_{\tilde{U}, \tilde{V},S^{\vect{\alpha}}, T}[f(\tilde{U}, \tilde{V},S^{\vect{\alpha}}, T)]
&= \epsilon\EE_{\tilde{U}, \tilde{V},S^{\vect{\alpha}}} \hat{R}_{\Scal^{\vect{\alpha}}}(\tilde{U},\tilde{V})
+ (1 - \epsilon) \EE_{\tilde{U}, \tilde{V}, T} \hat{R}_{\Tcal}(\tilde{U}, \tilde{V})\\
&= \epsilon\EE_{\tilde{U}, \tilde{V}} R_{\Scal^{\vect{\alpha}}}(\tilde{U},\tilde{V})
+ (1 - \epsilon) \EE_{\tilde{U}, \tilde{V}} R_{\Tcal}(\tilde{U},\tilde{V})\\
&= \epsilon\EE_{U,V,S^{\vect{\alpha}}, T} R_{\Scal^{\vect{\alpha}}}(U,V)
+ (1 - \epsilon) \EE_{U,V, S^{\vect{\alpha}}, T} R_{\Tcal}(U,V)\
\end{aligned}
\]

Place the above term into the inequality, we get  $-\lambda B_r - \frac{\lambda^2\sigma^2(\frac{(1-\epsilon)^2}{m_t} + \epsilon^2\sum_{i=1}^N\frac{\alpha_i^2}{m_i})}{2} \leq I(U,V;S^{\vect{\alpha}}, T)$

Consequently, we have $|B_r| \leq \sqrt{2(\sigma^2(\frac{(1-\epsilon)^2}{m_t} + \epsilon^2\sum_{i=1}^N\frac{\alpha_i^2}{m_i})) I(U,V;S^{\vect{\alpha}}, T)}$.

\paragraph{Step 2 Bounding the Wasserstein Distance}
\[B_w = \EE_{U,V,S^{\vect{\alpha}}, T}\left[\epsilon \mathbf{W}_1(\tilde{\Tcal}_U, \tilde{\Scal}^{\vect{\alpha}}_U)  - \epsilon \hat{\mathbf{W}}_1(\tilde{\Tcal}_U, \tilde{\Scal}^{\vect{\alpha}}_U)\right] = \EE_{U,S^{\vect{\alpha}}, T}\left[\epsilon \mathbf{W}_1(\tilde{\Tcal}_U, \tilde{\Scal}^{\vect{\alpha}}_U)  - \epsilon \hat{\mathbf{W}}_1(\tilde{\Tcal}_U, \tilde{\Scal}^{\vect{\alpha}}_U)\right] \]

Let
\[
\begin{aligned}
f^\prime(U,S^{\vect{\alpha}},T) =  \hat{\mathbf{W}}_1(\tilde{\Tcal}_U, \tilde{\Scal}^{\vect{\alpha}}_U) = \sup_{v^\prime} \left[\frac{1}{m_t}\sum_{j=1}^{m_t} \tilde{f}(v^\prime, (g(U,X_j^t), Y_j^t)) - \sum_{i=1}^N \frac{\alpha_i}{m_i} \sum_{j=1}^{m_i} \tilde{f}(v^\prime, (g(U, X_{i,j}^s), Y_{i,j}^s)) \right]\,,
\end{aligned}
\]

\[
\begin{aligned}
I(U;S^{\vect{\alpha}},T) &= \KL(P_{U,S^{\vect{\alpha}}, T}\|P_{U} P_{S^{\vect{\alpha}},T})\\
&=\sup_{\mathfrak{g}}\{\EE_{U,S^{\vect{\alpha}},T} \mathfrak{g}(U,S^{\vect{\alpha}}, T) - \log\EE_{\tilde{U},S^{\vect{\alpha}}, T}[\exp^{\mathfrak{g}(\tilde{U},S^{\vect{\alpha}}, T)}]\}\\
&\geq \lambda \EE_{U,S^{\vect{\alpha}}, T}[f^\prime(U,S^{\vect{\alpha}}, T)] -\lambda\EE_{\tilde{U},S^{\vect{\alpha}}, T}[f^\prime(\tilde{U},S^{\vect{\alpha}}, T)] - \psi_{\tilde{U},S^{\vect{\alpha}}, T}(\lambda), \forall \lambda\in \Reals\\
\end{aligned}
\]

$\Vcal^\prime$ is assumed to be the subset that does not affect the supremum. So we have:
\[
\begin{aligned}
\EE_{\tilde{U},S^{\vect{\alpha}}, T}[f^\prime(\tilde{U},S^{\vect{\alpha}}, T)] &= \EE_{\tilde{U}} \sup_{v^\prime} \EE_{S^{\vect{\alpha}}, T} \left[\frac{1}{m_t}\sum_{j=1}^{m_t} \tilde{f}(v^\prime, (g(\tilde{U},X_j^t), Y_j^t)) - \sum_{i=1}^N \frac{\alpha_i}{m_i} \sum_{j=1}^{m_i} \tilde{f}(v^\prime, (g(\tilde{U}, X_{i,j}^s), Y_{i,j}^s)) \right]\\
&= \EE_{\tilde{U}} \mathbf{W}_1(\tilde{\Tcal}_{\tilde{U}}, \tilde{\Scal}^{\vect{\alpha}}_{\tilde{U}}) = \EE_{U, S^{\vect{\alpha}}, T} \mathbf{W}_1(\tilde{\Tcal}_{U}, \tilde{\Scal}^{\vect{\alpha}}_{U})
\end{aligned}
\]

From Assumption \ref{assum_gauss}, $\tilde{f}$ is $\sigma^\prime$-sub-Gaussian for any $u$ and $v^\prime$, so the supremum does not affect the sub-Gaussianity. Apply Lemma~\ref{lemma_sub_gaussian}, we know that $\hat{\mathbf{W}}_1(\tilde{\Tcal}_u, \tilde{\Scal}^{\vect{\alpha}}_u)$ is $\sqrt{\frac{1}{m_t} + \sum_{i=1}^N \frac{\alpha_i^2}{m_i}}\sigma^\prime$-sub-Gaussian.

With the same proof process as bounding the empirical risks, we obtain that:
\[
|B_w| \leq \sqrt{2{\sigma^\prime}^2\epsilon^2(\frac{1}{m_t} + \sum_{i=1}^N \frac{\alpha_i^2}{m_i}) I(U;S^{\vect{\alpha}}, T)}
\]

Finally, we have:
\[
\begin{aligned}
gen(\Tcal,\Acal) &\leq B_r + B_w\\
&\leq  \sqrt{2(\sigma^2(\frac{(1-\epsilon)^2}{m_t} + \epsilon^2\sum_{i=1}^N\frac{\alpha_i^2}{m_i})) I(U,V;S^{\vect{\alpha}}, T)} + \sqrt{2{\sigma^\prime}^2\epsilon^2(\frac{1}{m_t} + \sum_{i=1}^N \frac{\alpha_i^2}{m_i}) I(U;S^{\vect{\alpha}}, T)}
\end{aligned}
\]

Conclude the proof.

\end{proof}

\newpage
\section{MISSING PROOFS IN SECTION 5}

\subsection{Proof of Theorem~\ref{thm:pseudo_label}}\label{proof:unsup_pop_bound}

\begin{definition}(Joint Approximation Error) Let $\tilde{x}=g(u,x), \tilde{x}^\prime=g(u,x^\prime)$,
then let the \textbf{optimal} coupling on $\Zcal=\Xcal\times\Ycal$ given the pseudo labeling function $h(v,g(u,x))$ be
\[
\gamma_{u,v}^* \eqdef \argmin\limits_{\gamma \in \Pi(\Tcal_{u,v}, \Scal^{\vect{\alpha}})}\int_{\Zcal\times\Zcal} \left[ LM\rho_{\tilde{x}}(\tilde{x}, \tilde{x}^\prime) + \ell(\hat{y}, y^\prime) \right] d\gamma(\hat{z},z^\prime)
\]
Then the joint approximation error is
\[
\begin{aligned}
R^*_{rep}(u,v) &\eqdef LM \int_{\Zcal\times\Zcal} \left[ \rho_{\tilde{x}}(g(u^*,x) ,g(u^*,x^\prime)) -\rho_{\tilde{x}}(g(u,x), g(u,x^\prime))\right] d\gamma_{u,v}^*(\hat{z}, z^\prime)\,,
\end{aligned}
\]
where $R_{rep}^*(u,v)$ characterizes the approximation error for the domain shift on the representation space of $u^*$ (the representation counterpart of the ideal joint hypothesis).
\end{definition}\label{def:joint_approx}
\begin{theorem*}
For any given $\vect{\alpha}\in\Delta_N$ and $\forall u,v \in \Ucal \times \Vcal$, if Assumption~\ref{gen_assump} is satisfied, 
then the target population risk can be bounded by: 
\[
R_{\Tcal}(u,v) \leq \mathbf{W}_1(\tilde{\Tcal}_{u,v}, \tilde{\Scal}_u^{\vect{\alpha}}) + R^*_{rep}(u,v) + R^*\,.
\]
\end{theorem*}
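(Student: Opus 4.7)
The plan is to reduce $R_{\Tcal}(u,v)$ to the target expression by inserting the ideal joint hypothesis $(u^*, v^*)$ through the triangle inequality of the loss function (Assumption~\ref{gen_assump}(c)), and then identifying the resulting pieces with $R^*$, $R^*_{rep}(u,v)$, and $\mathbf{W}_1(\tilde{\Tcal}_{u,v}, \tilde{\Scal}_u^{\vect{\alpha}})$. The strategy mirrors the classical ideal-joint-error decomposition of \citet{ben2010theory}, but adapted to joint alignment on the representation space with pseudo labels.

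First I would apply the triangle inequality once to pass through $(u^*, v^*)$: for $(X, Y) \sim \Tcal$,
\[\ell(h(v, g(u, X)), Y) \leq \ell(h(v, g(u, X)), h(v^*, g(u^*, X))) + \ell(h(v^*, g(u^*, X)), Y),\]
so that after taking expectations, $R_\Tcal(u,v) \leq R_\Tcal(u^*, v^*) + \EE_{X\sim\Tcal(X)}\,\ell(\hat{Y}, h(v^*, g(u^*, X)))$ with $\hat{Y} = h(v, g(u,X))$, matching the definition of $\Tcal_{u,v}$. Then I would bring in source samples via the optimal coupling $\gamma_{u,v}^*$ of $(\Tcal_{u,v}, \Scal^{\vect{\alpha}})$ and apply the triangle inequality twice more to obtain
\[\ell(\hat{Y}, h(v^*, g(u^*, X))) \leq \ell(\hat{Y}, Y') + \ell(Y', h(v^*, g(u^*, X'))) + \ell(h(v^*, g(u^*, X')), h(v^*, g(u^*, X))).\]
Integrating against $\gamma_{u,v}^*$, the middle term contributes exactly $R_{\Scal^{\vect{\alpha}}}(u^*, v^*)$ via the $\Scal^{\vect{\alpha}}$-marginal, while the last is bounded by $LM\,\rho_{\tilde{x}}(g(u^*, X'), g(u^*, X))$ using the Lipschitz composition $\ell \circ h \circ g$ from Assumption~\ref{gen_assump}.

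Next I would add and subtract $LM\,\EE_{\gamma_{u,v}^*}\rho_{\tilde{x}}(g(u, X), g(u, X'))$. The difference reproduces $R^*_{rep}(u,v)$ by definition, while the added copy merges with $\EE_{\gamma_{u,v}^*}\ell(\hat{Y}, Y')$ to yield $\EE_{\gamma_{u,v}^*}[\ell(\hat{Y}, Y') + LM\,\rho_{\tilde{x}}(g(u, X), g(u, X'))]$, i.e.\ the cost of $\gamma_{u,v}^*$ under the very objective that defines it. Combined with $R_\Tcal(u^*, v^*) + R_{\Scal^{\vect{\alpha}}}(u^*, v^*) = R^*$, this produces
\[R_\Tcal(u,v) \leq R^* + R^*_{rep}(u,v) + \inf_{\gamma \in \Pi(\Tcal_{u,v}, \Scal^{\vect{\alpha}})} \int \left[\ell(\hat{y}, y') + LM\,\rho_{\tilde{x}}(g(u,x), g(u,x'))\right] d\gamma(\hat{z}, z').\]

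The main obstacle is identifying this example-space infimum with $\mathbf{W}_1(\tilde{\Tcal}_{u,v}, \tilde{\Scal}_u^{\vect{\alpha}})$, which lives on $\tilde{\Zcal} \times \tilde{\Zcal}$. I would argue the equality in the same spirit as the proof of Theorem~\ref{thm:sup_wasserstein}: the pushforward $(x, y) \mapsto (g(u, x), y)$ sends couplings in $\Pi(\Tcal_{u,v}, \Scal^{\vect{\alpha}})$ to couplings in $\Pi(\tilde{\Tcal}_{u,v}, \tilde{\Scal}_u^{\vect{\alpha}})$ with identical integrand value, yielding the $\geq$ direction; conversely, since $\hat{Y}$ is a deterministic function of $\tilde{X}$, any representation-space coupling can be lifted to an example-space coupling of equal cost by disintegrating $\Tcal(X \mid \tilde{X})$ and $\Scal^{\vect{\alpha}}(X' \mid \tilde{X}', Y')$, giving the $\leq$ direction. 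The two infima then coincide and the stated bound follows.
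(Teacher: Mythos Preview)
Your proposal is correct and follows essentially the same route as the paper: triangle inequality through the ideal hypothesis $(u^*,v^*)$, a coupling-based bound on the resulting term, the add-and-subtract of $LM\,\rho_{\tilde x}(g(u,x),g(u,x'))$ to isolate $R^*_{rep}(u,v)$, and the identification of the remaining cost with $\mathbf{W}_1(\tilde{\Tcal}_{u,v},\tilde{\Scal}_u^{\vect{\alpha}})$. Your final paragraph is in fact more explicit than the paper's own proof on the last point: the paper simply asserts that the optimal $\pi_{u,v}^*$ on $\tilde{\Zcal}$ ``corresponds'' to $\gamma_{u,v}^*$ on $\Zcal$ with equal value, whereas you supply the two-sided argument (pushforward for $\ge$, disintegration/lifting for $\le$) that actually justifies replacing the example-space infimum by the representation-space Wasserstein distance.
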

\begin{proof}
Let $u^*,v^* = \argmin_{u,v} R_{\Scal^{\vect{\alpha}}}(u,v) + R_{\Tcal}(u,v)$ be the ideal joint hypothesis, we have:
\[
\begin{aligned}
R_{\Tcal}(u,v) &= \EE_{Z\sim\Tcal}\ell(h(v,g(u,X)), Y)\\
&\leq \EE_{Z\sim\Tcal} \ell(h(v, g(u,X)), h(v^*, g(u^*,X))) + \EE_{Z\sim\Tcal} \ell(h(v^*, g(u^*,X)), Y)\\
&= \EE_{\hat{Z}\sim\Tcal_{u,v}} \ell(h(v^*, g(u^*,X)), \hat{Y}) + R_{\Tcal}(u^*, v^*)\\
&= \EE_{Z\sim\Tcal_{u,v}} \ell(h(v^*, g(u^*,X)), Y) - \EE_{Z^\prime\sim\Scal^{\vect{\alpha}}} \ell(h(v^*, g(u^*,X^\prime)), Y^\prime)\\
&\quad\quad + \EE_{Z^\prime\sim\Scal^{\vect{\alpha}}} \ell(h(v^*, g(u^*,X^\prime)), Y^\prime) + R_{\Tcal}(u^*, v^*)\\
&\leq |\EE_{\hat{Z}\sim\Tcal_{u,v}} \ell(h(v^*, g(u^*,X)), \hat{Y}) - \EE_{Z^\prime\sim\Scal^{\vect{\alpha}}} \ell(h(v^*, g(u^*,X^\prime)), Y^\prime)|\\
&\quad\quad + R_{\Scal^{\vect{\alpha}}}(u^*,v^*) + R_{\Tcal}(u^*, v^*)\,.
\end{aligned}
\]
For alignment on representation space, we have $\forall \gamma \in \Pi(\Tcal_{u,v},\Scal^{\vect{\alpha}})$ that:

\[
\begin{aligned}
|\EE_{Z\sim\Tcal_{u,v}} &\ell(h(v^*, g(u^*,X)), \hat{Y}) - \EE_{Z^\prime\sim\Scal^{\vect{\alpha}}} \ell(h(v^*, g(u^*,X^\prime)), Y^\prime)|\\
&\leq \int_{\Zcal\times\Zcal} |\ell(h(v^*, g(u^*,x)), \hat{y}) - \ell(h(v^*, g(u^*,x^\prime)), y^\prime)| d\gamma(\hat{z}, z^\prime)\\
&\leq \int_{\Zcal\times\Zcal} 
 |\ell(h(v^*, g(u^*,x)), y^\prime) - \ell(h(v^*, g(u^*,x^\prime)), y^\prime)| \\
&\quad\quad\quad + |\ell(h(v^*, g(u^*,x)), \hat{y}) - \ell(h(v^*, g(u^*,x)), y^\prime)|d\gamma(\hat{z}, z^\prime)\\
& \leq \int_{\Zcal\times\Zcal} 
 LM\rho_{\tilde{x}}(g(u^*,x), g(u^*,x^\prime)) + \ell(\hat{y}, y^\prime)d \gamma(\hat{z},z^\prime)\\
&= \int_{\Zcal\times\Zcal} 
 LM\rho_{\tilde{x}}(g(u^*,x), g(u^*,x^\prime)) - LM\rho_{\tilde{x}}(g(u,x), g(u,x^\prime)) d \gamma(\hat{z},z^\prime) \\
&\quad\quad + \int_{\Zcal\times\Zcal} 
 LM\rho_{\tilde{x}}(g(u,x), g(u,x^\prime)) + \ell(\hat{y}, y^\prime)d \gamma(\hat{z},z^\prime)\\
\end{aligned}
\]

Let the metric on $\tilde{\Zcal}$ be $\rho_{\tilde{z}}(\hat{\tilde{z}}, \tilde{z}^\prime) = LM \rho_{\tilde{x}}(\tilde{x}, \tilde{x}^\prime) + \ell(\hat{y}, y^\prime)$. 
Let the optimal coupling on the transformed joint space $\tilde{\Zcal}$ be $\pi_{u,v}^* = \argmin\limits_{\pi \in \Pi(\tilde{\Tcal}_{u,v}, \tilde{\Scal}_u^{\vect{\alpha}})} \int_{\tilde{\Zcal}\times\tilde{\Zcal}}\rho_{\tilde{z}}(\hat{\tilde{z}}, \tilde{z}^\prime)d\pi(\hat{\tilde{z}},\tilde{z}^\prime)$, so the corresponding coupling on the original example space is: 
$$\gamma^*_{u,v} = \argmin\limits_{\gamma \in \Pi(\Tcal_{u,v}, \Scal^{\vect{\alpha}})}\int_{\Zcal\times\Zcal} LM \rho_{\tilde{x}}(g(u,x), g(u,x^\prime)) + \ell(h(v, g(u,x^\prime)), y^\prime) d\gamma(\hat{z},z^\prime)\,.$$

Hence, $$R_{\Tcal}(u,v) \leq \mathbf{W}_1(\tilde{\Tcal}_{u,v}, \tilde{\Scal}_u^{\vect{\alpha}}) + R^*_{rep}(u,v) + R^*\,.
$$

where $R^*_{rep}(u,v) = \int_{\Zcal\times\Zcal} 
 LM\rho_{\tilde{x}}(g(u^*,x), g(u^*,x^\prime)) - LM\rho_{\tilde{x}}(g(u,x), g(u,x^\prime)) d \gamma_{u,v}^*(\hat{z},z^\prime)$ is the joint approximation error (Definition~\ref{def:joint_approx}), which is controlled by the representation learning function $g(u,x)$ and the optimal coupling $\gamma_{u,v}^*(\hat{z}, z^\prime)$.
\end{proof}

\subsection{Proof of Theorem~\ref{gen_un_pseudo}} \label{proof:unsup_gen_bound}
\begin{theorem*}
If Assumption~\ref{gen_assump} and \ref{assum_unsup} are satisfied, then we have the following bound on the expected generalization gap of the unsupervised multi-source domain adaptation algorithm with pseudo label $\Acal^{un}$ for any $\vect{\alpha}\in\Delta_N$:
\[
\begin{aligned}
gen(\Tcal,\Acal^{un}) \leq  \sqrt{2{\sigma^\prime}^2(\sum_{i=1}^N\frac{\alpha_i^2}{m_i} + \frac{1}{m_t^\prime})I(U,V;S^{\vect{\alpha}}, T_X^\prime)} + \EE_{U,V} R^*_{rep}(U,V) + R^*
\end{aligned}
\]
\end{theorem*}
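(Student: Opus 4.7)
The plan is to decompose the generalization gap into two pieces: (i) a population-to-Wasserstein-distance step handled by Theorem~\ref{thm:pseudo_label}, and (ii) a Wasserstein concentration step handled by the Donsker--Varadhan variational representation, exactly in the spirit of Step 2 in the proof of Theorem~\ref{gen_supervised} but with the predictor parameter $V$ now entering the sub-Gaussian object.

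First I would apply Theorem~\ref{thm:pseudo_label} pointwise, giving
\[
R_{\Tcal}(U,V) \leq \mathbf{W}_1(\tilde{\Tcal}_{U,V}, \tilde{\Scal}_U^{\vect{\alpha}}) + R^*_{rep}(U,V) + R^*,
\]
and then take the expectation over $(U,V,S^{\vect{\alpha}},T_X^\prime)$. Subtracting $\EE\,\hat{\mathbf{W}}_1(\tilde{\Tcal}_{U,V}, \tilde{\Scal}_U^{\vect{\alpha}})$ from both sides reduces the theorem to bounding
\[
\Delta \eqdef \EE_{U,V,S^{\vect{\alpha}},T_X^\prime}\!\left[\mathbf{W}_1(\tilde{\Tcal}_{U,V}, \tilde{\Scal}_U^{\vect{\alpha}}) - \hat{\mathbf{W}}_1(\tilde{\Tcal}_{U,V}, \tilde{\Scal}_U^{\vect{\alpha}})\right]
\]
by $\sigma^\prime\sqrt{2(\sum_i \alpha_i^2/m_i + 1/m_t^\prime)\,I(U,V;S^{\vect{\alpha}},T_X^\prime)}$, which is exactly the leading term in the statement.

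To bound $\Delta$, I would invoke the Kantorovich--Rubinstein duality restricted to the parameterized class $\tilde{\Fcal}$ and write
\[
\hat{\mathbf{W}}_1(\tilde{\Tcal}_{U,V}, \tilde{\Scal}_U^{\vect{\alpha}}) = \sup_{v^\prime \in \Vcal^\prime}\!\left[\frac{1}{m_t^\prime}\sum_{j=1}^{m_t^\prime}\tilde{f}\!\left(v^\prime,\bigl(g(U,X_j^t),\,h(V,g(U,X_j^t))\bigr)\right) - \sum_{i=1}^N \frac{\alpha_i}{m_i}\sum_{j=1}^{m_i}\tilde{f}\!\left(v^\prime,\bigl(g(U,X_{i,j}^s),\,Y_{i,j}^s\bigr)\right)\right].
\]
Crucially, the target summand now depends on $V$ through the pseudo label $\hat{Y}_j^t = h(V,g(U,X_j^t))$, whereas in the supervised case it depended only on $U$; this is the reason we end up with $I(U,V;\cdot)$ rather than $I(U;\cdot)$. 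Introducing an independent copy $(\tilde{U},\tilde{V})$ of $(U,V)$ with $(\tilde{U},\tilde{V})\indep(S^{\vect{\alpha}},T_X^\prime)$, Assumption~\ref{assum_unsup} makes each summand $\sigma^\prime$-sub-Gaussian both for pseudo-target samples (w.r.t.~$Z\sim\Tcal_{u,v}$, where the source of randomness is $X$) and for each source sample, so by Lemma~\ref{lemma_sub_gaussian} the whole inner expression is $\sqrt{\sum_i \alpha_i^2/m_i + 1/m_t^\prime}\,\sigma^\prime$-sub-Gaussian, and the supremum over $v^\prime$ preserves this (as argued in Step 2 of the proof of Theorem~\ref{gen_supervised}).

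Finally, applying the Donsker--Varadhan representation (Lemma in Appendix~\ref{Donsker}) to $P_{U,V,S^{\vect{\alpha}},T_X^\prime}$ versus $P_{U,V}\otimes P_{S^{\vect{\alpha}},T_X^\prime}$ with $\mathfrak{g} = \lambda\, \hat{\mathbf{W}}_1$, and noting that $\EE_{\tilde{U},\tilde{V},S^{\vect{\alpha}},T_X^\prime}\hat{\mathbf{W}}_1 = \EE_{\tilde{U},\tilde{V}}\mathbf{W}_1(\tilde{\Tcal}_{\tilde{U},\tilde{V}},\tilde{\Scal}_{\tilde{U}}^{\vect{\alpha}}) = \EE_{U,V}\mathbf{W}_1(\tilde{\Tcal}_{U,V},\tilde{\Scal}_U^{\vect{\alpha}})$ by independence, gives
\[
-\lambda \Delta - \tfrac{\lambda^2 {\sigma^\prime}^2}{2}\!\left(\sum_{i=1}^N\tfrac{\alpha_i^2}{m_i}+\tfrac{1}{m_t^\prime}\right) \leq I(U,V;S^{\vect{\alpha}},T_X^\prime).
\]
Optimizing the quadratic in $\lambda$ yields the target mutual-information term. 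The main obstacle is conceptual rather than computational: one has to recognize that the pseudo-label construction injects a $V$-dependence into the target sample statistic, forcing the joint information $I(U,V;\cdot)$ and demanding the corresponding sub-Gaussianity under the \emph{pseudo} target distribution $\Tcal_{u,v}$ (which is exactly what Assumption~\ref{assum_unsup} supplies, distinguishing it from Assumption~\ref{assum_gauss}). Once this is set up correctly, the variational argument is a clean parallel of the supervised proof.
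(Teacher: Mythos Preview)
Your proposal is correct and follows essentially the same approach as the paper: reduce via Theorem~\ref{thm:pseudo_label} to bounding the expected gap between the population and empirical Wasserstein distances, then apply the Donsker--Varadhan representation with an independent copy $(\tilde U,\tilde V)$ and the sub-Gaussianity from Assumption~\ref{assum_unsup} together with Lemma~\ref{lemma_sub_gaussian}, optimizing over $\lambda$. You also correctly identify the key conceptual point---that the pseudo-label dependence on $V$ forces $I(U,V;\cdot)$ rather than $I(U;\cdot)$---which is exactly what distinguishes this argument from Step~2 of the supervised proof.
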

\begin{proof}
By definition:
\[
gen(\Tcal,\Acal^{un}) \eqdef \EE_{U,V,S^{\vect{\alpha}},T_X^\prime} [R_{\Tcal}(U,V) - \hat{\mathbf{W}}_1(\tilde{\Tcal}_{U,V}, \tilde{\Scal}_U^{\vect{\alpha}})]
\]
Combine Theorem~\ref{thm:pseudo_label}, we have
\[
\begin{aligned}
gen(&\Tcal,\Acal^{un})\leq \EE_{U,V,S^{\vect{\alpha}},T_X^\prime} [\mathbf{W}_1(\tilde{\Tcal}_{U,V}, \tilde{\Scal}_U^{\vect{\alpha}}) - \hat{\mathbf{W}}_1(\tilde{\Tcal}_{U,V}, \tilde{\Scal}_U^{\vect{\alpha}})] 
+ \EE_{U,V}R^*_{rep}(U,V) + R^*
\end{aligned}
\]
Let 
\[
\begin{aligned}
f^\prime(U,V, S^{\vect{\alpha}}, T_X^\prime) &= \hat{W}_1(\tilde{\Tcal}_{U,V}, \tilde{\Scal}_U^{\vect{\alpha}}) \\
&= \sup_{v^\prime} \left[\frac{1}{m_t^\prime} \sum_{j=1}^{m_t^\prime} \tilde{f}(v^\prime, (g(U,X_j^t), h(V, g(U, X_j^t)))) - \sum_{i=1}^N \frac{\alpha_i}{m_i} \sum_{j=1}^{m_i} \tilde{f}(v^\prime, (g(U, X_{i,j}^s), Y_{i,j}^s)) \right]
\end{aligned}
\]

\[
\begin{aligned}
I(U,V; S^{\vect{\alpha}},T_X^\prime) &= \KL(P_{U,V,S^{\vect{\alpha}}, T}\|P_{U,V} P_{S^{\vect{\alpha}},T_X^\prime})\\
&=\sup_{\mathfrak{g}}\{\EE_{U,V,S^{\vect{\alpha}},T_X^\prime} \mathfrak{g}(U,V,S^{\vect{\alpha}}, T_X^\prime) - \log\EE_{\tilde{U},\tilde{V}, S^{\vect{\alpha}}, T_X^\prime}[\exp^{\mathfrak{g}(\tilde{U},\tilde{V}, S^{\vect{\alpha}}, T_X^\prime)}]\}\\
&\geq \lambda \EE_{U,V,S^{\vect{\alpha}}, T_X^\prime}[f(U,V, S^{\vect{\alpha}}, T_X^\prime)] -\lambda\EE_{\tilde{U},\tilde{V}, S^{\vect{\alpha}}, T_X^\prime}[f(\tilde{U},\tilde{V}, S^{\vect{\alpha}}, T_X^\prime)] - \psi_{\tilde{U},\tilde{V}, S^{\vect{\alpha}}, T_X^\prime}(\lambda), \forall \lambda\in \Reals\\
\end{aligned}
\]
$\Vcal^\prime$ is assumed to be the set that does not affect the supremum, so we have:
\[
\begin{aligned}
\EE_{\tilde{U}, \tilde{V},S^{\vect{\alpha}}, T_X^\prime}[f^\prime(\tilde{U}, \tilde{V},S^{\vect{\alpha}}, T_X^\prime)]
&= \EE_{\tilde{U}, \tilde{V}} \sup_{v^\prime} \EE_{\hat{Z}\sim \Tcal_{\tilde{U},\tilde{V}}} \tilde{f}(v^\prime, (g(\tilde{U}, X), \hat{Y})) - \EE_{Z^\prime \sim \Scal^{\vect{\alpha}}} \tilde{f}(v^\prime, (g(\tilde{U}, X^\prime), Y^\prime))\\
&= \EE_{\tilde{U}, \tilde{V}}\mathbf{W}_1(\tilde{\Tcal}_{\tilde{U},\tilde{V}}, \tilde{\Scal}^{\vect{\alpha}}_{\tilde{U}}) = \EE_{U,V,S^{\vect{\alpha}}, T_X^\prime}\mathbf{W}_1(\tilde{\Tcal}_{U, V}, \tilde{\Scal}^{\vect{\alpha}}_{U})
\end{aligned}
\]

Then from Assumption~\ref{assum_unsup}, $\tilde{f}$ is $\sigma^\prime$-sub-Gaussian for any $u,v,v^\prime$ under $\hat{Z}\sim\Tcal_{u,v}$, and $\sigma^\prime$-sub-Gaussian for any $u,v^\prime$ under $Z\sim\Scal_i,\forall i\in[N]$. Apply Lemma~\ref{lemma_sub_gaussian}, we have $\hat{W}_1(\tilde{\Tcal}_{U,V}, \tilde{\Scal}_U^{\vect{\alpha}})$ is $\sqrt{\frac{1}{m_t^\prime} + \sum_{i=1}^N\frac{\alpha_i^2}{m_i}}\sigma^\prime$-sub-Gaussian. 
So we can obtain 
$-\lambda\EE_{U,V,S^{\vect{\alpha}},T_X^\prime}\left(\mathbf{W}_1(\tilde{\Tcal}_{U, V}, \tilde{\Scal}^{\vect{\alpha}}_{U}) - \hat{\mathbf{W}}_1(\tilde{\Tcal}_{U, V}, \tilde{\Scal}^{\vect{\alpha}}_{U})\right)   - \lambda^2{\sigma^\prime}^2(\frac{  1}{m_t^\prime} + \sum_{i=1}^N\frac{\alpha_i^2}{m_i})/2 \leq I(U,V;S^{\vect{\alpha}}, T_X^\prime)$.

Consequently, we have:
\[\lvert\EE_{U,V,S^{\vect{\alpha}},T_X^\prime}\left(\mathbf{W}_1(\tilde{\Tcal}_{U, V}, \tilde{\Scal}^{\vect{\alpha}}_{U}) - \hat{\mathbf{W}}_1(\tilde{\Tcal}_{U, V}, \tilde{\Scal}^{\vect{\alpha}}_{U})\right)\rvert \leq \sqrt{2{\sigma^\prime}^2(\frac{1}{m_t^\prime} + \sum_{i=1}^N\frac{\alpha_i^2}{m_i}) I(U,V;S^{\vect{\alpha}}, T_X^\prime)}\]
Thus,
\[
\begin{aligned}
gen(&\Tcal,\Acal^{un})\leq \EE_{U,V,S^{\vect{\alpha}},T_X^\prime} [\mathbf{W}_1(\tilde{\Tcal}_{U,V}, \tilde{\Scal}_U^{\vect{\alpha}}) - \hat{\mathbf{W}}_1(\tilde{\Tcal}_{U,V}, \tilde{\Scal}_U^{\vect{\alpha}})] 
+ \EE_{U,V}R^*_{rep}(U,V) + R^*\\
&\leq \sqrt{2{\sigma^\prime}^2(\sum_{i=1}^N\frac{\alpha_i^2}{m_i} + \frac{1}{m_t^\prime})I(U,V;S^{\vect{\alpha}}, T_X^\prime)} + \EE_{U,V} R^*_{rep}(U,V) + R^*
\end{aligned}
\]
Conclude the proof.
\end{proof}

\newpage
\section{MISSING PROOFS IN SECTION 6}

\subsection{Proof of Theorem~\ref{thm:gen_gradient_norm}} \label{proof:gradient_bound}
\begin{theorem*}
Following Theorem~\ref{thm:sup_wasserstein},~\ref{thm:pseudo_label} ~\ref{gen_supervised},~\ref{gen_un_pseudo}, define a weight parameter $\tau$ that balances the unsupervised and supervised MDA. Adopt the aforementioned choice of $\tilde{\Fcal}$ with $\sigma^\prime=\sigma$ and the SGLD updates described above. Then we can obtain the gradient norm bound for the expected target risk, $\forall \vect{\alpha}\in\Delta_N, \tau,\epsilon \in [0,1]$, we have:
\[
\begin{aligned}
\EE_{U,V,S^{\vect{\alpha}}, T, T_X^\prime} R_{\Tcal}(U, V) &\leq \EE_{U,V, S^{\vect{\alpha}}, T, T_X^\prime} \hat{R}_{\epsilon,\tau}^{\vect{\alpha}}(U,V) + \tau \sigma\sqrt{2(\frac{(1-\epsilon)^2}{m_t} + \epsilon^2\sum_{i=1}^N\frac{\alpha_i^2}{m_i})(\delta_u + \delta_v)}\\
&+ \tau \epsilon \sigma \sqrt{2(\sum_{i=1}^N\frac{\alpha_i^2}{m_i} + \frac{1}{m_t}) \delta_u} + (1-\tau)\EE_{U,V} R_{rep}^*(U,V)\\
&+ (1-\tau)(\sigma\sqrt{2(\sum_{i=1}^N\frac{\alpha_i^2}{m_i} + \frac{1}{m_t^\prime})(\delta_u + \delta_v)} + R^*)\,,
\end{aligned}
\]
$\delta_{u}\eqdef\sum_{k=1}^K \frac{(\eta_u^k)^2\EE\|G_u^k\|_2^2 }{2\sigma_k^2}$ and $\delta_v\eqdef\sum_{k=1}^K \frac{(\eta_v^k)^2\EE\|G_v^k\|_2^2}{2\sigma_k^2}$ are the accumulated gradient  norm for $U$ and $V$, respectively.
\end{theorem*}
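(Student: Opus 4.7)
The plan is to treat the bound as a convex combination, with weight $\tau$ on the supervised MDA analysis and weight $1-\tau$ on the unsupervised MDA analysis, and then convert the mutual information terms into gradient norms via the SGLD chain-rule argument of \citet{pensia2018generalization}. First, I would write $R_{\Tcal}(U,V) = \tau R_{\Tcal}(U,V) + (1-\tau) R_{\Tcal}(U,V)$ and apply Theorem~\ref{thm:sup_wasserstein} (through the combined risk $R^{\epsilon}_{\Scal^{\vect{\alpha}},\Tcal}$) to the first piece and Theorem~\ref{thm:pseudo_label} to the second. Taking expectations and subtracting the empirical counterparts, one recognizes $\hat{R}^{\vect{\alpha}}_{\epsilon,\tau}(U,V)$ as the resulting empirical quantity, so what remains is to control the two expected generalization gaps together with the $(1-\tau)(\EE R^*_{rep}(U,V)+R^*)$ correction.

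Next, I would bound the supervised-side gap directly by Theorem~\ref{gen_supervised} and the unsupervised-side gap directly by Theorem~\ref{gen_un_pseudo}, yielding two mutual-information terms of the form $I(U,V;S^{\vect{\alpha}},T)$ and $I(U;S^{\vect{\alpha}},T)$ for the supervised piece and $I(U,V;S^{\vect{\alpha}},T_X^\prime)$ for the unsupervised piece. Since both subsets of inputs are deterministic functions of the combined input $(S^{\vect{\alpha}},T,T_X^\prime)$, by the data-processing inequality each of these mutual informations is upper bounded by the corresponding information between the outputs and the full combined input, so it suffices to control $I(U,V;S^{\vect{\alpha}},T,T_X^\prime)$ and $I(U;S^{\vect{\alpha}},T,T_X^\prime)$.

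The core technical step is to upper bound these two informations by the accumulated gradient norms $\delta_u+\delta_v$ and $\delta_u$ respectively. For this I would follow the SGLD argument of \citet{pensia2018generalization}: at iteration $k$, given the previous iterate $(U_{k-1},V_{k-1})$, the conditional law of $(U_k,V_k)$ is a Gaussian centered at $(U_{k-1}-\eta_u^k G_u^k,V_{k-1}-\eta_v^k G_v^k)$ with isotropic variance $\sigma_k^2$, so a single-step KL estimate gives an information gain at iteration $k$ bounded by $\tfrac{(\eta_u^k)^2\|G_u^k\|_2^2+(\eta_v^k)^2\|G_v^k\|_2^2}{2\sigma_k^2}$; chaining the updates with the chain rule of mutual information and taking expectations yields $I(U_K,V_K;S^{\vect{\alpha}},T,T_X^\prime)\leq \delta_u+\delta_v$. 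Because the $V$-updates use independent noise, dropping the $V$ trajectory from the joint gives the tighter $I(U_K;S^{\vect{\alpha}},T,T_X^\prime)\leq \delta_u$, which is exactly what is needed for the Wasserstein term in the supervised bound. Substituting these two estimates back, using $\sigma'=\sigma$, and collecting terms yields the stated inequality.

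The main obstacle I anticipate is the careful bookkeeping in the chain-rule decomposition for the joint SGLD trajectory: one must argue that the independence of the injected noise across $U$ and $V$, together with the sampling strategy being agnostic to the previous iterates, lets the per-step KL split cleanly into a $U$-part and a $V$-part so that the tighter bound $I(U_K;\cdot)\leq \delta_u$ survives after marginalizing out the $V$-trajectory. The remaining calculations (applying the earlier theorems, the convex combination in $\tau$, and rearranging to isolate $\hat{R}^{\vect{\alpha}}_{\epsilon,\tau}$) are routine.
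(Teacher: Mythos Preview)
Your proposal is correct and follows essentially the same route as the paper. The only notable difference is that you first invoke the data-processing inequality to upper bound each of $I(U,V;S^{\vect{\alpha}},T)$, $I(U;S^{\vect{\alpha}},T)$, and $I(U,V;S^{\vect{\alpha}},T_X')$ by the corresponding mutual information with the \emph{full} input $(S^{\vect{\alpha}},T,T_X')$ and then run the SGLD chain-rule argument once; the paper instead bounds each of the three mutual informations directly, so that, for instance, when controlling $I(U,V;S^{\vect{\alpha}},T)$ the conditional entropy $H(U_k,V_k\mid U_{k-1},V_{k-1},S_B^k,T_B^k)$ still carries the randomness of the unlabeled-target batch $T_{X_B}^k$, which is then discarded via the entropy lower bound in Lemma~\ref{lemma_entropy}. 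Both routes land on the same estimates $\delta_u+\delta_v$ and $\delta_u$, and your consolidation is arguably tidier. The subtlety you flag for $I(U_K;\cdot)\leq \delta_u$---that the $V$-dependence of $G_u^k$ must be absorbed into the unconditional expectation $\EE\|G_u^k\|_2^2$ when chaining on the $U$-trajectory alone---is exactly the point, and the paper handles it the same way (it simply writes ``similarly'').
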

\begin{proof}
Combine Theorem 4.1, 4.2, 5.1 and 5.2 we have:
\[
\begin{aligned}
\EE_{U,V, S^{\vect{\alpha}}, T, T_X} R_{\Tcal}(U, V) &\leq 
 \EE_{U,V, S^{\vect{\alpha}}, T, T_X} \hat{R}_{\epsilon,\tau}^{\vect{\alpha}}(U,V)  + \tau \sqrt{2\epsilon^2{\sigma}^2(\sum_{i=1}^N\frac{\alpha_i^2}{m_i} + \frac{1}{m_t})I(U;S^{\vect{\alpha}},T)} \\
 &+ \tau \sqrt{2\sigma^2(\frac{(1-\epsilon)^2}{m_t} + \epsilon^2\sum_{i=1}^N\frac{\alpha_i^2}{m_i}) I(U,V;S^{\vect{\alpha}}, T)} + (1-\tau)(R^* +\EE_{U,V}R_{rep}^*(U,V))\\
&+ (1-\tau)\sqrt{2{\sigma}^2(\sum_{i=1}^N\frac{\alpha_i^2}{m_i} + \frac{1}{m_t^\prime})I(U,V;S^{\vect{\alpha}}, T_X^\prime)}\,,
\end{aligned}
\]

We use SGLD to optimize the empirical risk:
\[
\begin{aligned}
\hat{R}_{\epsilon,\tau}^{\vect{\alpha}}(u,v) &\eqdef \tau(1-\epsilon)\hat{R}_{\Tcal}(u,v) + \tau\epsilon\hat{R}_{\Scal^{\vect{\alpha}}}(u,v)\\
& + \tau\epsilon\hat{\mathbf{W}}_1(\tilde{\Tcal}_u, \tilde{\Scal}^{\vect{\alpha}}_u)) + (1-\tau)\hat{\mathbf{W}}_1(\tilde{\Tcal}_{u,v}, \tilde{\Scal}_u^{\vect{\alpha}})
\end{aligned}
\]

Since we jointly minimize the objective function w.r.t $u,v$ and maximize w.r.t $v^\prime$ by definition of the Wasserstein distances, we analyze $U,V$ given a fixed $v^\prime$ updated from previous steps.  
Given the batch size $|B|$, let the batch estimation of the empirical risks be:

$\hat{R}_{T_{B}^k}(U,V) = \frac{1}{|B|}\sum_{j=1}^{|B|} \ell(h(V, g(U,X_{j}^t)), Y_j^t)$

$\hat{R}_{S_{B_{1:N}}^k}^{\vect{\alpha}}(U,V) =  \sum_{i=1}^N \frac{\alpha_i}{|B|} \sum_{j=1}^{|B|} \ell(h(V, (g(U, X_{i,j}^s)), Y_{i,j}^s))$

$\hat{R}_{T_{X_B}^k}(U,V, v^\prime) = \frac{1}{|B|}\sum_{j=1}^{|B|} \ell(h(v^\prime, g(U, X_j^t)), h(V,g(U,X_j^t)))$

$\hat{\mathbf{W}}_1^k(\tilde{\Tcal}_U, \tilde{\Scal}^{\vect{\alpha}}_U)) = \hat{R}_{T_{B}^k}(U,v^\prime) - \hat{R}_{S_{B_{1:N}}^k}^{\vect{\alpha}}(U,v^\prime)$

$\hat{\mathbf{W}}_1^k(\tilde{\Tcal}_{U,V}, \tilde{\Scal}_U^{\vect{\alpha}})= \hat{R}_{T_{X_B}^k}(U,V, v^\prime) - \hat{R}_{S_{B_{1:N}}^k}^{\vect{\alpha}}(U,v^\prime)$

For updating $U$, note the gradients w.r.t three data batches as:

$G_u(U_{k-1},V_{k-1}, T_{B}^k) = \tau(1-\epsilon)\nabla_U\hat{R}_{T_{B}^k}(U_{k-1},V_{k-1}) + \tau\epsilon \nabla_U \hat{R}_{T_{B}^k}(U,v^\prime)$

$G_u(U_{k-1}, V_{k-1}, T_{X_B}^k) = (1-\tau) \nabla_U \hat{R}_{T_{X_B}^k}(U_{k-1},V_{k-1}, v^\prime)$

$G_u(U_{k-1}, V_{k-1}, S_{B_{1:N}}^k) = \tau\epsilon \nabla_U \hat{R}_{S_{B_{1:N}}^k}^{\vect{\alpha}}(U_{k-1},V_{k-1}) -(1-\tau + \epsilon\tau)\nabla_U \hat{R}_{S_{B_{1:N}}^k}^{\vect{\alpha}}(U_{k-1},v^\prime)$

Similarly, for updating $V$, we have:

$G_v(U_{k-1}, V_{k-1}, T_{B}^k) = \tau(1-\epsilon)\nabla_V\hat{R}_{T_{B}^k}(U_{k-1},V_{k-1})$

$G_v(U_{k-1}, V_{k-1}, T_{X_B}^k) = (1-\tau) \nabla_V \hat{R}_{T_{X_B}^k}(U_{k-1},V_{k-1}, v^\prime)$

$G_v(U_{k-1}, V_{k-1}, S_{B_{1:N}}^k) = \tau\epsilon \nabla_V \hat{R}_{S_{B_{1:N}}^k}^{\vect{\alpha}}(U_{k-1},V_{k-1})$

Denote the overall gradient updates for $U$ and $V$ respectively as:
$$G_u^k = G_u(U_{k-1},V_{k-1}, T_{B}^k) + G_u(U_{k-1}, V_{k-1}, T_{X_B}^k) + G_u(U_{k-1}, V_{k-1}, S_{B_{1:N}}^k)$$

$$G_v^k = G_v(U_{k-1},V_{k-1}, T_{B}^k) + G_v(U_{k-1}, V_{k-1}, T_{X_B}^k) + G_v(U_{k-1},V_{k-1}, S_{B_{1:N}}^k)$$

Then the updates with noise injected:

\[
\begin{aligned}
U_k &= U_{k-1} - \eta_k^u G_u^k + \xi_k^u\\
V_{k} &= V_{k-1} - \eta_k^v G_v^k + \xi_k^v\,,
\end{aligned}
\]
where $\xi_k^u \sim N(0, \sigma_k^2 I_{d_u})$ and $\xi_k^v \sim N(0, \sigma_k^2 I_{d_v})$.

Define the sequence of representation parameter and the predictor parameter for $K$ iterations as $(U,V)^{[K]} = (U^{[K]},V^{[K]}) = ((U_1, V_1), (U_2,V_2),...,(U_K,V_K))$. The sequence of samplings for each dataset are defined as $T_{B}^{[K]} = (T_B^1,...,T_B^K)$, $S_{B_{1:N}}^{[K]}= (S_{B_{1:N}}^1,..., S_{B_{1:N}}^K)$ and $T_{X_B}^{[K]}=(T_{X_B}^1,...,T_{X_B}^K)$. The final output of the joint optimization algorithm $(U,V)= f((U,V)^{[K]})$. 

\[
\begin{aligned}
    T \rightarrow  T_B^{[K]}& \\
              \downarrow\\
    S_{1:N}\rightarrow S_{B_{1:N}}^{[K]} \rightarrow (U,& V)^{[K]} \rightarrow (U,V)\\
    \uparrow\\
    T_X^\prime \rightarrow  T_{X_B}^{[K]}&
\end{aligned}
\]

Given specific $\vect{\alpha}$, let us apply the data processing inequality and the chain rule, then we have
\[
I(U,V;S^{\vect{\alpha}}, T) = I(U,V;S_{1:N}, T) \leq I((U^{[K]},V^{[K]}); S_{B_{1:N}}^{[K]}, T_B^{[K]}) = \sum_{k=1}^K I(U_k,V_k;S_{B_{1:N}}^{[K]}, T_B^{[K]} | U^{[k-1]}, V^{[k-1]})
\]
Moreover, since the sampling strategy is agnostic to the previous iterates of the parameters and previous samplings, so we can obtain: 
\[
\begin{aligned}
I(U_k,V_k;S_{B_{1:N}}^{[K]}, T_B^{[K]} | U^{[k-1]}, V^{[k-1]}) &=
I(U_k,V_k;S_{B_{1:N}}^{k}, T_B^{k} | U^{k-1}, V^{k-1})\\
&=H(U_k,V_k|U_{k-1}, V_{k-1}) - H(U_k,V_k|U_{k-1},V_{k-1}, S_{B_{1:N}}^{k}, T_B^{k})
\end{aligned}
\]

For any $(U_{k-1}, V_{k-1})=(u_{k-1}, v_{k-1})$, since the joint entropy of a set of variables is less than or equal to the sum of the individual entropies of the variables in the set, we have

\[
\begin{aligned}
H(U_k,V_k|U_{k-1}=u_{k-1},V_{k-1}=v_{k-1}) \leq H(-\eta_k^u G_u^k + \xi_u^k) + H(- \eta_k^v G_v^k + \xi_v^k))
\end{aligned}
\]

Apply Lemma~\ref{lemma_entropy}, we can obtain
\[
\begin{aligned}
H(U_k,V_k|U_{k-1}=u_{k-1},V_{k-1}=v_{k-1},& S_{B_{1:N}}^{k}, T_B^{k})\\
&= H(-\eta_k^u G_u(u_{k-1}, v_{k-1}, T_{X_B}^k) + \xi_u^k , -\eta_k^v  G_v(u_{k-1}, v_{k-1}, T_{X_B}^k) + \xi_v^k)\\
&\geq H(-\eta_k^u G_u(u_{k-1}, v_{k-1}, T_{X_B}^k) -\eta_k^v  G_v(u_{k-1}, v_{k-1}, T_{X_B}^k) + \xi_u^k  + \xi_v^k)\\
&\geq H(\xi_u^k) + H(\xi_v^k) = \frac{d_u}{2}\log(2\pi e \sigma_k^2) + \frac{d_v}{2}\log(2\pi e \sigma_k^2) \,.
\end{aligned}
\]
The first inequality is obtained because the entropy of a function of random variables is smaller than the entropy of the random variables (See proof in Lemma~\ref{lemma_entropy}). The second inequality comes from the independence between the gradients and the injected noise.

Moreover, we can obtain $\EE(\|-\eta_k^u G_u^k + \xi_u^k\|_2^2) = \EE(\|\eta_u^k G_u^k\|_2^2 + \|\xi_u^k\|_2^2) = (\eta_u^k)^2 \EE\|G_u^k\|_2^2 + d_u\sigma_k^2$ and  $\EE(\|-\eta_k^v G_v^k + \xi_v^k\|_2^2) =\EE(\|\eta_v^k G_v^k\|_2^2 + \|\xi_v^k\|_2^2)  = (\eta_v^k)^2 \EE\|G_v^k\|_2^2 + d_v\sigma_k^2$ with the independence. Since Gaussian distribution has the largest entropy among the variables with the same second order moment, so we can further get:
\[
\begin{aligned}
H(U_k,V_k|U_{k-1}=u_{k-1}, V_{k-1}=v_{k-1}) - &H(U_k,V_k|U_{k-1}=u_{k-1},V_{k-1}=v_{k-1}, S_{B_{1:N}}^{K}, T_B^{k})\\
&\leq H(-\eta_k^u G_u^k + \xi_u^k) + H(- \eta_k^v G_v^k + \xi_v^k)) - (H(\xi_u^k) + H(\xi_v^k))\\
&\leq \frac{d_u}{2}\log(2\pi e\frac{(\eta_u^k)^2\EE\|G_u^k\|_2^2 + d_u\sigma_k^2}{d_u}) + \frac{d_v}{2}\log(2\pi e\frac{(\eta_v^k)^2\EE\|G_v^k\|_2^2 + d_v\sigma_k^2}{d_v})\\
&\quad\quad - \frac{d_u}{2}\log(2\pi e \sigma_k^2) - \frac{d_v}{2}\log(2\pi e \sigma_k^2) \\
&\leq \frac{(\eta_u^k)^2\EE\|G_u^k\|_2^2 + (\eta_v^k)^2\EE\|G_v^k\|^2}{2\sigma_k^2}
\end{aligned}
\]
The above bound holds for all $u_{k-1},v_{k-1}$, thus we can integrate the bound to conclude that $$H(U_k,V_k|U_{k-1}, V_{k-1}) - H(U_k,V_k|U_{k-1},V_{k-1}, S_{B_{1:N}}^{K}, T_B^{k}) \leq \frac{(\eta_u^k)^2\EE\|G_u^k\|_2^2 + (\eta_v^k)^2\EE\|G_v^k\|^2}{2\sigma_k^2}\,,$$
So we get $I(U,V;S^{\vect{\alpha}}, T) \leq \sum_{k=1}^K \frac{(\eta_u^k)^2\EE\|G_u^k\|_2^2 + (\eta_v^k)^2\EE\|G_v^k\|^2}{2\sigma_k^2} = \delta_u + \delta_v$

Similarly, we can obtain the following results with the same proof process:
\[
\begin{aligned}
I(U;S^{\vect{\alpha}}, T) \leq \sum_{k=1}^K \frac{(\eta_u^k)^2\EE\|G_u^k\|_2^2 }{2\sigma_k^2} = \delta_u
\end{aligned}
\]
\[
I(U,V;S^{\vect{\alpha}}, T_X^\prime)\leq \sum_{k=1}^K \frac{(\eta_u^k)^2\EE\|G_u^k\|_2^2 + (\eta_v^k)^2\EE\|G_v^k\|^2}{2\sigma_k^2} = \delta_u + \delta_v
\]
Place the above terms into the inequality presented at the beginning of this section, we can conclude the proof.
\end{proof}
\newpage
\section{EXPERIMENTAL DETAILS}\label{sec:exp_details}
We provide the experimental details and some additional results in this section. The pseudo code \footnote{The code is released at \url{https://github.com/livreQ/IMDA}} for the proposed IMDA algorithm is presented in Sec.~\ref{sec:imda}.

\subsection{Amazon Review}
As presented in the main paper, the original dataset is pre-processed to $5000$-dimension bag-of words features following \citet{chen2012marginalized}. And the target shift data is created by randomly dropping $50\%$ negative reviews. 

\paragraph{Model Structure} 
\textit{Representation learner}: $[5000, 1000, 500, 100]$ MLP net using $0.7$ dropout rate with Relu activation added after each hidden layer and finally output a $100$-dimension feature representation. 

\textit{Predictor and duplicate predictor}: $[100, 2]$ linear transformation followed by a log softmax layer transforming the $100$-dimension feature to $2$-class log probabilities.

\textit{Loss function}: we choose the "negative log-likelihood loss" as the loss function.

\paragraph{Computing Resources} The experiments were run on a server with 6 CPUs and 1 GPU of 32GB memory.

\paragraph{Experimental Setting} The Amazon review dataset contains 6465 samples for "books", 5586 samples for "dvd", 7681 samples for "electronics" and 7945 samples for "kitchen". We randomly sample 2000 examples for each domain. In unsupervised MDA, the 2000 target samples without labels are applied in the training phase, and the rest samples are used as the test set. For supervised MDA with few target labels, we randomly sample $10\%$ examples from the 2000 target samples using as the train data, then the rest $90\%$ are used as the test set. 

We provide the main hyper-parameters for supervised and unsupervised settings, which are chosen based on cross-validation. Other hyper-parameters are set as the default value provided in the code. 

In the supervised MDA, we set l2\_scale$=0.5$ (the $C_1$ constant in the convex optimization objective for $\vect{\alpha}$), and the learning rate of a Adadelta optimizer is $\eta=0.5$. The penalty for the $\mathbf{W}_1$ distance is set to W1\_sup\_coef$=0.01$. Finally, the network is trained for $40$ epochs with a mini-batch size of $20$.

In the unsupervised MDA, as presented in the main paper, the empirical risk of the pseudo target distribution is:
$\hat{R}_{\Tcal_{u,v}}(u,v, v^\prime) = \frac{1}{m_t^\prime} \sum_{j=1}^{m_t^\prime} \ell(h(v^\prime, g(u,X_j^t)), h(v, g(u, X_j^t))$.
There does not exist a direct realization of the above loss. So we implement it with $\ell(h(v^\prime, g(U,X_j^t)), \hat{Y}_j^t) + \ell(h(V, g(U, X_j^t))),\hat{Y}^{\prime t}_j)$, where the $\hat{Y}_j^t$ and $\hat{Y}^{\prime t}_j$ are the predicted label of corresponding predictor.
Then we apply two different penalty weights W1\_discri\_coef1 and W1\_discri\_coef2 for the two losses respectively.
We set l2\_scale$=1$, learning rate $\eta=0.8$, W1\_discri\_coef1$=0.06$, $C_0$=W1\_discri\_coef2$=1.2$. Finally, the network is trained for $50$ epochs and the mini-batch size is $20$. 

For reproducing the results of WADN \citep{shui2021aggregating} and DARN \citep{wen2020domain}, we used the code repos \url{https://github.com/cjshui/WADN} and \url{https://github.com/junfengwen/DARN}. The hyper-parameters are set as provided in the respective paper.

\subsection{Digits}

\paragraph{Model Structure} \textit{Representation learner} is consisted of 3 stacked modules : 'conv':[3, 3, 64], 'relu', 'maxpooling:'[2,2,0]; 'conv':[3, 3, 128], 'relu', 'maxpooling:'[2,2,0];'conv':[3, 3, 256], 'relu', 'maxpooling:'[2,2,0]. 

\textit{Predictor and duplicate predictor}: $[2304, 512, 100, 10]$ MLP net using "Relu" activation after each hidden layer is followed by a log softmax layer. The predictors transform the $2304$-dimension feature to $10$-class log probabilities. There is no dropout.

\textit{Loss function}: we choose the "negative log-likelihood loss" as the loss function.

\paragraph{Computing Resources} The experiments were run on a server with 6 CPUs and 1 GPU of 32GB memory.

\paragraph{Experimental Setting}
Digits dataset has four domains: "MNIST," "USPS," "SVHN," and "SYNTH." "MNIST" and "USPS" are handwriting datasets that are very close. "SYNTH" is transformed from "SVHN," so they are also more similar to each other. Each domain has different train-test split when downloaded. The respective training sample sizes are 60000, 7219, 73257, 479400, and the respective test sample sizes are 10000, 2017, 26032, 9553. Following the same procedure of \citet{shui2021aggregating}, we randomly select 7000 samples for each domain, then create a target shifted distribution for each source, making each source contain 5300 samples, and the target domain contains 7000 examples. 

In unsupervised MDA, the 7000 target samples without labels are applied in the training phase, and the test set is an unseen portion that contains 2000 samples. For supervised MDA with few target labels, we randomly sample $10\%$ examples from the 7000 target samples using as the train data, then the rest $90\%$ are used as the test set. 

We provide the main hyper-parameters for supervised and unsupervised settings, which are chosen based on cross-validation. Other hyper-parameters are set as the default value provided in the code. 


In the supervised MDA, we set l2\_scale$=1.5, 1$, and the learning rate of a Adadelta optimizer is $\eta=0.5, 0.2$, The penalty for the $\mathbf{W}_1$ distance is set to W1\_sup\_coef$=0.01, 0.03$. Finally, the network is trained for $70$ epochs with a mini-batch size of $128$. In the unsupervised MDA, we set l2\_scale$=1.5$, learning rate $\eta=0.4$, W1\_discri\_coef1$=0.002$, $C_0$=W1\_discri\_coef2$=0.9$. Finally, the network is trained for $15$ epochs and the mini-batch size is $128$. 

\subsection{Gradient Penalty}
We implemented two types of gradient penalties in the code. The first is to ensure the Lipschitzness of $\tilde{f}$ for the Wasserstein distances. We adopt the method proposed by \citet{gulrajani2017improved}. For example, with $(\tilde{X},Y)\sim\Tcal_u$ and  $(\tilde{X}^\prime, Y^\prime)\sim\Scal^{\vect{\alpha}}_u$, we generate the interpolated feature $\tilde{X}_{int} = \lambda \tilde{X} + (1-\lambda)\tilde{X}^\prime, \lambda\sim \text{Unif}[0,1]$. The gradient penalty is implemented by adding the regularization term $\|\nabla_{\tilde{X}_{int}} h(v,\tilde{X}_{int}) \|_2^2$. The second gradient penalty comes from the bound in Theorem 6.1. We consider the gradient norm in each batch as a regularization term. Note this gradient is w.r.t the model parameters, not the feature or input.

\newpage
\section{ADDITIONAL RESULTS}\label{sec:add_exp}

\subsection{Amazon Review}
Now we present some additional results. At first, we illustrate the ablation study w.r.t different drop rates on Amazon dataset for each domain in unsupervised MDA. In Fig.~\ref{fig:ablation}, we can see the proposed IMDA algorithm consistently outperforms the previous state-of-art on every single domain. 

We further provide additional t\_SNE visualization of amazon data in Fig.~\ref{fig:tsne}, where an illustration of target shift with drop rate of $50\%$ is also included. From the figure, we observe that the geometry information of the data is retained using the Wasserstein distance, and the distribution shift is decreased after adaptation.

\begin{figure}[H]
    \centering
    \includegraphics[width=0.4\textwidth]{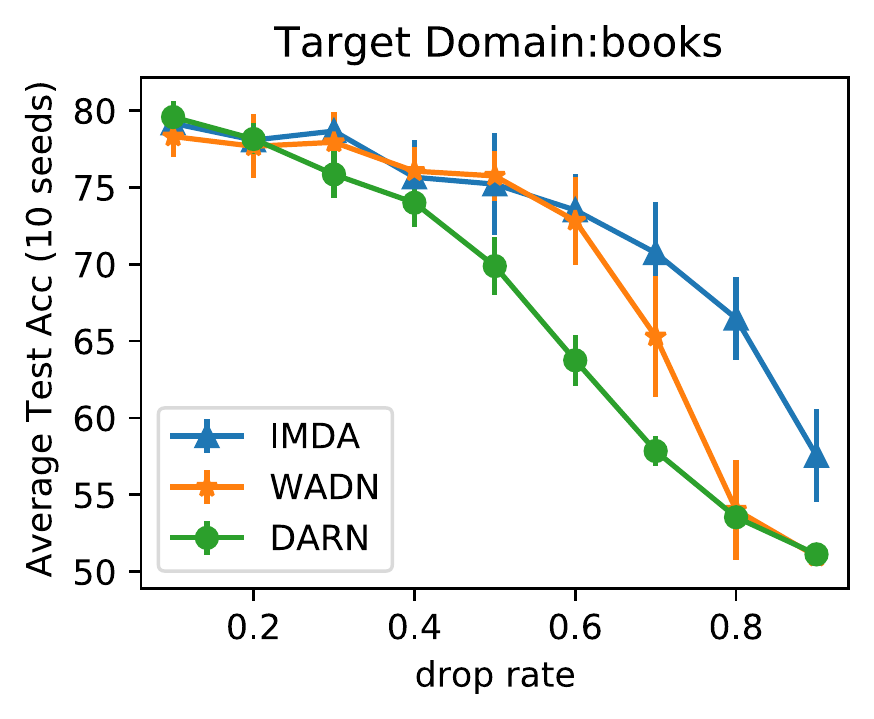}
    \includegraphics[width=0.4\textwidth]{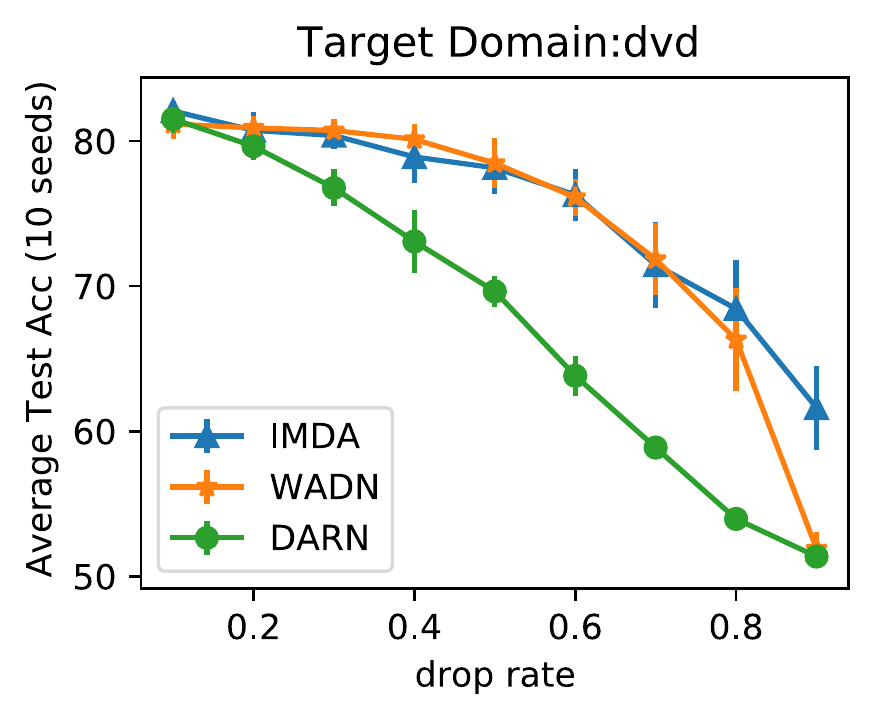}
    \includegraphics[width=0.4\textwidth]{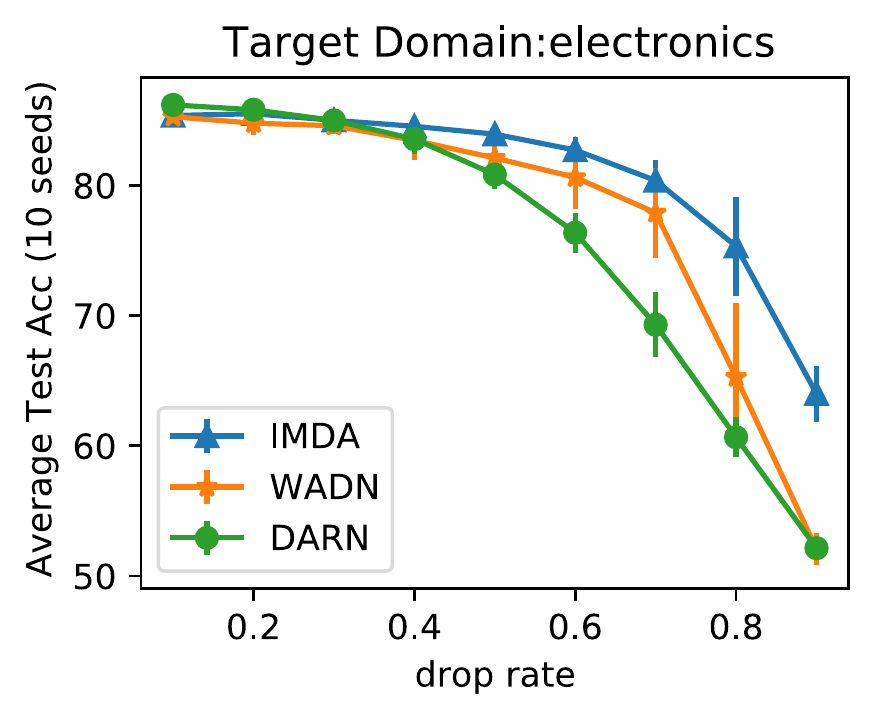}
    \includegraphics[width=0.4\textwidth]{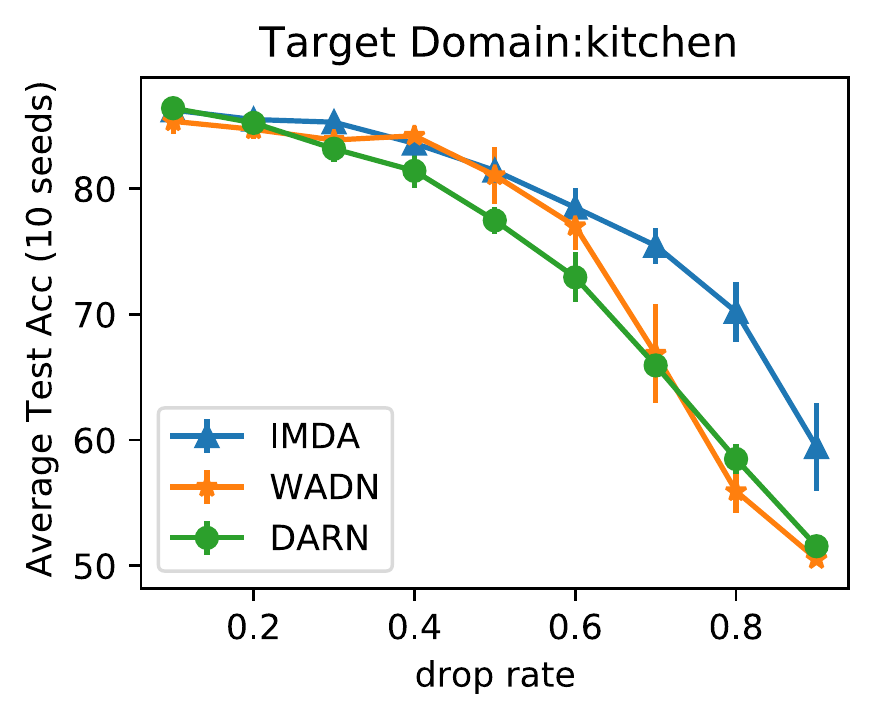}
    \caption{Test Accuracy for Each Domain w.r.t Different Levels of Target Shift on Amazon Review Dataset for Unsupervised MDA}
     \label{fig:ablation}
\end{figure}

\subsection{Digits}
The Amazon review dataset's four domains are homogeneous tasks with similar difficulty. Otherwise, the four digits domains are heterogeneous, where the two handwriting digit recognition tasks are much easier than the two home number recognition tasks. To learn more reliable relations, we set uniform weight at the beginning of the training process (5 epochs) to make the complex tasks sufficiently learned, avoiding an unfair low weight due to the poor results.

From the dataset description, we know MNIST and USPS are handwriting digits and are more similar to each other than other datasets. Moreover, SVHN and SYNTH are more correlated since SYNTH is obtained by adding transformation on SVHN. We observe the apparent similarity in Fig.~\ref{fig:digits} (a). The algorithm gives high weight to USPS when learning MNIST and high weight to SVHN when learning SYNTH, and vice versa.
Fig.~\ref{fig:digits} illustrates the minimax optimization process for the Wasserstein distance. In unsupervised MDA, we do not directly optimize the combined empirical risk $\hat{R}_{\Scal^{\vect{\alpha}}}(u,v)$, the evolution of its estimation continuously decreases, which shows the effectiveness of minimizing $\mathbf{W}_1(\tilde{\Tcal}_{u,v}, \tilde{\Scal}_u)$. 

\begin{figure}[H]
\centering
\includegraphics[width=0.9\linewidth]{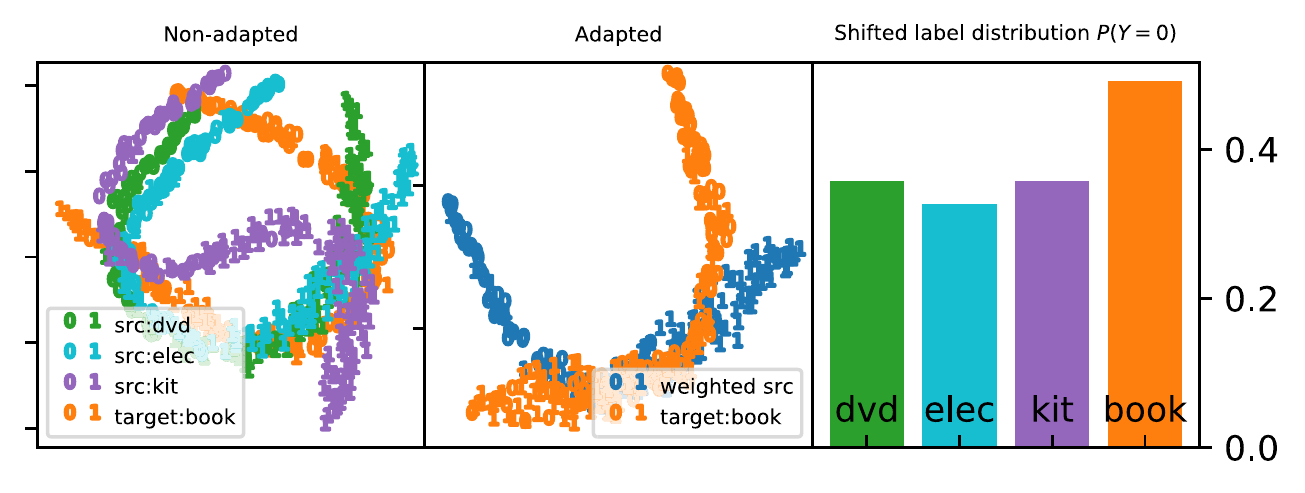}
\caption{T\_SNE \& Target Shift on Amazon Review Dataset}
\label{fig:tsne}
\end{figure}

We also test the single-source DA method DeepJDOT \citep{damodaran2018deepjdot} (merging the sources as one) and MOST \citep{nguyen2021most} on the target shifted digits data. The result for the target domain MNIST is illustrated in Tab.~\ref{tab:add}. We do not consider adding a more detailed comparison of these two methods for the following reasons. First, we have already compared other similar single-source DA approaches that using the merged source data. Second, the original MOST algorithm consist of a mixture of several optimization objectives, but the theoretical result is only related to one objective. Therefore, adjusting hyper-parameters for these combinations is meaningless for this paper's theoretical nature. 

\begin{table}[H]
\caption{Accuracy(\%) on Digits Dataset with a Drop Rate of 50\%}
\centering
\begin{tabular}{|c|c|}
Method & target: MNIST \\
\hline
MOST &88.23 \\  
DeepJDOT &87.5\\  
IMDA& 89.26
\end{tabular}
\label{tab:add}
\end{table} 

\begin{figure}[!ht]
    \centering
    \subfloat[Evolution of Domains Weights $\vect{\alpha}$ ]{
    \includegraphics[width=0.52\textwidth]{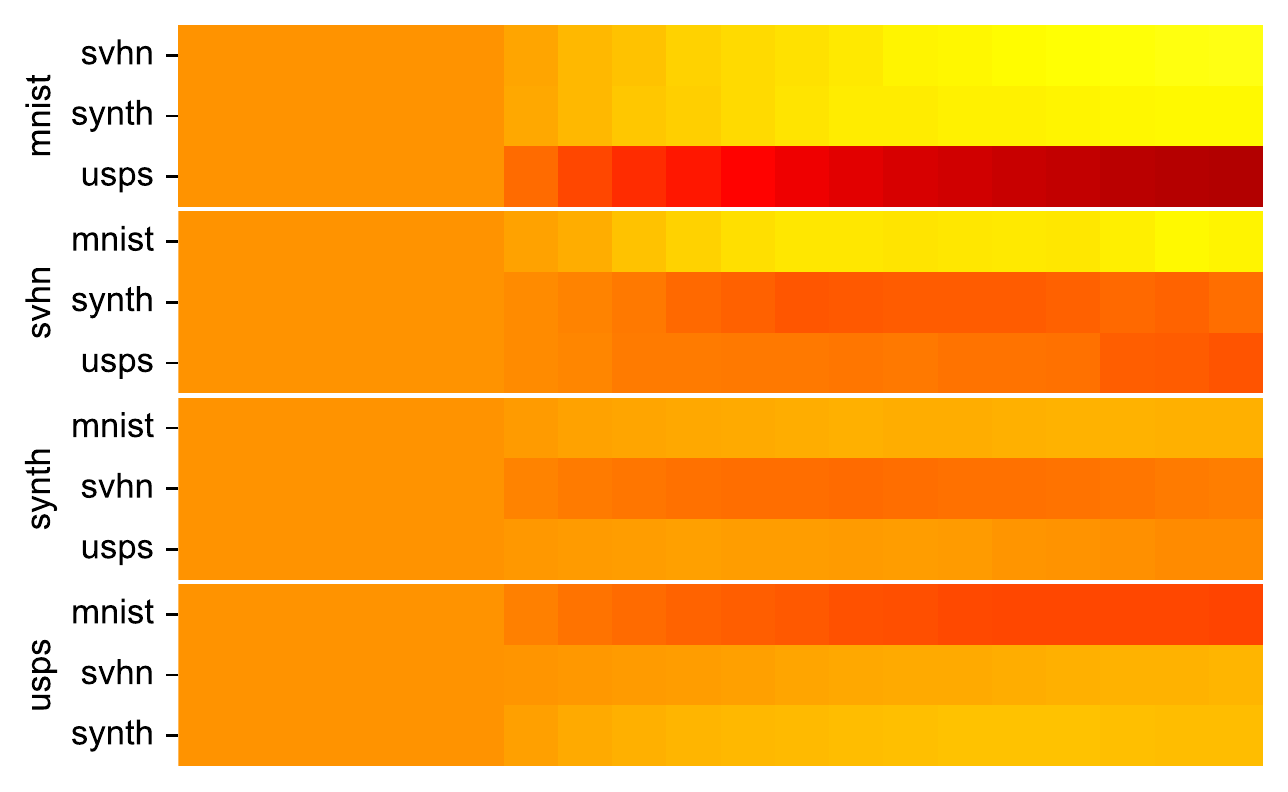}
    }
    \subfloat[Evolution of $\hat{R}_{\Scal^{\vect{\alpha}}}(u,v)$ and $\hat{\mathbf{W}}_1(\tilde{\Tcal}_{u,v}, \tilde{\Scal}_u)$]{
    \includegraphics[width=0.4\textwidth]{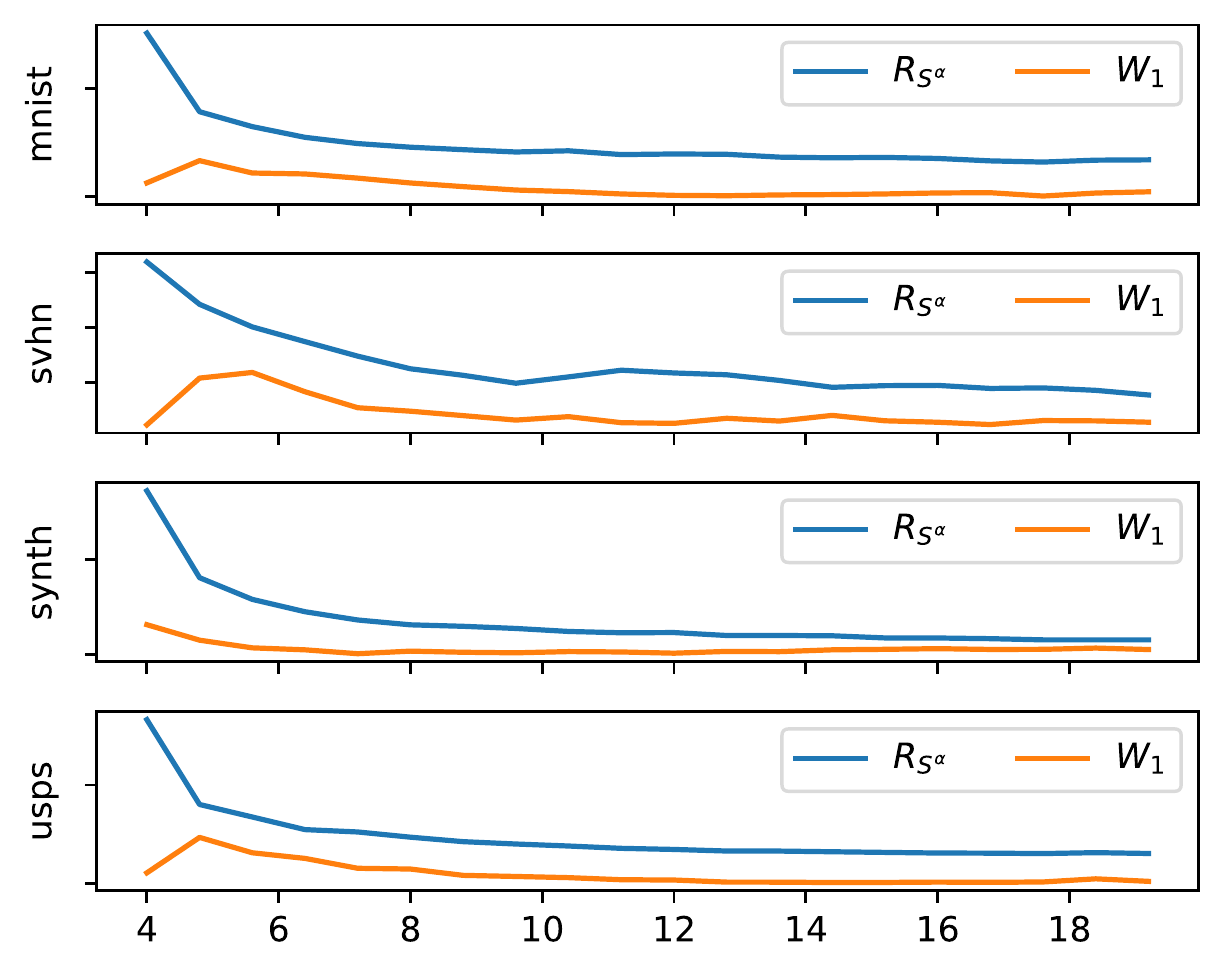}
    }
    \caption{Visualization of the Evolution of Different Terms w.r.t Training Epochs on Digits Dataset for Unsupervised MDA with a Drop Rate of 0.5}
     \label{fig:digits}
\end{figure}

\newpage
\section{ALGORITHM}\label{sec:imda}
\begin{algorithm}[H]
\SetAlgoLined
\textbf{Input:} Labeled source samples $S_{1:N}$,
                  few labeled target samples $T$,
                  unlabeled target samples $T_X^\prime$,
                  weights $\epsilon$, $\tau$, learning rate $\eta$, variance $\sigma$, batch size $B$, constants $C_0, C_1$, moving average weight $0<C<1$\;
\textbf{Output:} Representation parameter $U$, predictor parameter $V$, duplicate predictor parameter $v^\prime$, domain weights $\vect{\alpha}$\;
\For{$t \gets 1$ to $T$}{
    \eIf{$t == 1$}{
     Initialize domain weights $\vect{\alpha}=\{1/N,...,1/N\}$, randomly initialize $U_0,V_0,v^\prime_0$\;
     $\delta_u=0,\delta_v=0$\;
    }
    {
      Initialize $U_0,V_0,v^\prime_0$ with last epoch outputs\;
    }
    \For{$k \gets 1$ to $K$}{
      Sample batch data $T_B^{k},S_{B_{1:N}}^k,T_{X_B}^k$ from the corresponding datasets\;
      Evaluate gradients:
      \[
      \begin{aligned}
            G_u^k =& \tau(1-\epsilon)\nabla_U\hat{R}_{T_{B}^k}(U_{k-1},V_{k-1}) \\
            & + \tau\epsilon \nabla_U \hat{R}_{T_{B}^k}(U,v^\prime_{k-1}) + (1-\tau) \nabla_U \hat{R}_{T_{X_B}^k}(U_{k-1},V_{k-1}, v^\prime_{k-1}) \\
            & + \tau\epsilon \nabla_U \hat{R}_{S_{B_{1:N}}^k}^{\vect{\alpha}}(U_{k-1},V_{k-1}) -(1-\tau + \epsilon\tau)\nabla_U \hat{R}_{S_{B_{1:N}}^k}^{\vect{\alpha}}(U_{k-1},v^\prime_{k-1})\\
          G_v^k = &\tau(1-\epsilon)\nabla_V\hat{R}_{T_{B}^k}(U_{k-1},V_{k-1}) \\
          & + (1-\tau) \nabla_V \hat{R}_{T_{X_B}^k}(U_{k-1},V_{k-1}, v^\prime_{k-1}) + \tau\epsilon \nabla_V \hat{R}_{S_{B_{1:N}}^k}^{\vect{\alpha}}(U_{k-1},V_{k-1})\\
          G_{v^{\prime}}^k =& \tau\epsilon [\nabla_{v^\prime}  \hat{R}_{T_{B}^k}(U,v^\prime_{k-1}) -  \nabla_{v^\prime} \hat{R}_{S_{B_{1:N}}^k}(U,v^\prime_{k-1})] \\
          &+ (1 - \tau)[ \nabla_{v^\prime}\hat{R}_{T_{X_B}^k}(U,V, v^\prime_{k-1}) - \nabla_{v^\prime} \hat{R}_{S_{B_{1:N}}^k}(U,v^\prime_{k-1})]
     \end{aligned}
     \]
     Update:
     \[\begin{aligned}
     U_k &= U_{k-1} - \eta G_u^k + \xi_k^u,
     V_{k} &= V_{k-1} - \eta G_v^k + \xi_k^v,
     v^\prime_k &= v^\prime_{k-1} + \eta G_{v^\prime}^k\,;
     \end{aligned}
     \]
     $\delta_{u} = \delta_{u} + \frac{\eta^2\EE\|G_u^k\|_2^2 }{2\sigma^2}$, $\delta_v = \delta_v +  \frac{\eta^2\EE\|G_v^k\|_2^2}{2\sigma^2}$\; 
    }
    $U=U_K,V=V_K,v^\prime=v^\prime_K$\;
    Solve
    \[\begin{aligned}
     \vect{\alpha}^\prime = \min_{\vect{\alpha}} \space &\left( (\epsilon\tau + C_0(1-\tau)) \hat{R}_{S_{B_{1:N}}^k}^{\vect{\alpha}}(U,V) - (\epsilon\tau + 1- \tau) \hat{R}_{S_{B_{1:N}}^k}^{\vect{\alpha}}(U,v^\prime) \right. \\
     &\left.\quad\quad + C_1((1-\tau + \tau\epsilon)\sqrt{\delta_u + \delta_v} + \tau\epsilon \sqrt{\delta_u}) R(\vect{\alpha}) \right)\, ,\\
    &R(\vect{\alpha}) = \sqrt{\sum_{i=1}^N \frac{\alpha_i^2}{m_i}},\ \text{s.t.} \forall i\in[N], \alpha_i\geq 0, \sum_{i=1}^N \alpha_i = 1\,,
\end{aligned}
\]
Update $\vect{\alpha} = C\vect{\alpha} + (1-C)\vect{\alpha}^\prime$\;
}
\caption{Information-Theoretic Multi-Source Domain Adaptation (IMDA) }
\label{algo:imda}
\end{algorithm}

\vfill

\end{document}